\documentclass{article}

\usepackage{microtype}
\usepackage{graphicx}
\usepackage{subcaption}
\usepackage{booktabs} 

\usepackage{hyperref}

\usepackage[accepted]{icml2026}

\usepackage{amsmath}
\usepackage{amssymb}
\usepackage{mathtools}
\usepackage{amsthm}

\usepackage{bm}

\usepackage[capitalize,noabbrev]{cleveref}

\usepackage{mathrsfs}
\usepackage{multirow}
\usepackage{array}
\usepackage{algorithm}
\usepackage{algorithmic}
\usepackage{epstopdf}

\usepackage{stfloats}
\usepackage{enumitem}
\usepackage{makecell}
\usepackage{bbm}
\theoremstyle{plain}
\newtheorem{theorem}{Theorem}

\theoremstyle{remark}

\newcommand{\eg}{\textit{e.g.}}
\newcommand{\ie}{\textit{i.e.}}

\usepackage[textsize=tiny]{todonotes}

\icmltitlerunning{IMPACT: Influence Modeling for Open-Set Time Series Anomaly Detection}

\begin{document}

\twocolumn[
  \icmltitle{IMPACT: Influence Modeling for Open-Set Time Series Anomaly Detection}



  \icmlsetsymbol{equal}{*}

  \begin{icmlauthorlist}
    \icmlauthor{Xiaohui Zhou}{pdl,coc}
    \icmlauthor{Yijie Wang}{pdl,coc}
    \icmlauthor{Hongzuo Xu}{igdl}
    \icmlauthor{Weixuan Liang}{coc}
    \icmlauthor{Xiaoli Li}{sutd}
    \icmlauthor{Guansong Pang}{smu}
  \end{icmlauthorlist}

  \icmlaffiliation{pdl}{National Key Laboratory of Parallel and Distributed Computing}
  \icmlaffiliation{coc}{College of Computer Science and Technology, National University of Defense Technology}
  \icmlaffiliation{igdl}{Intelligent Game and Decision Lab (IGDL)}
  \icmlaffiliation{sutd}{Information Systems Technology and Design, Singapore University of Technology and Design}
  \icmlaffiliation{smu}{School of Computing and Information Systems, Singapore Management University}

  \icmlcorrespondingauthor{Yijie Wang}{wangyijie@nudt.edu.cn}
  \icmlcorrespondingauthor{Guansong Pang}{gspang@smu.edu.sg}

  \icmlkeywords{Machine Learning, ICML}

  \vskip 0.3in
]



\printAffiliationsAndNotice{}  

\begin{abstract}
  Open-set anomaly detection (OSAD) is an emerging paradigm designed to utilize limited labeled data from anomaly classes seen in training to identify both seen and unseen anomalies during testing. Current approaches rely on simple augmentation methods to generate pseudo anomalies that replicate unseen anomalies. Despite being promising in image data, these methods are found to be ineffective in time series data due to the failure to preserve its sequential nature, 
  resulting in trivial or unrealistic anomaly patterns. They are further plagued when the training data is contaminated with unlabeled anomalies.
  This work introduces \textbf{IMPACT}, a novel framework that leverages \textbf{\underline{i}}nfluence \textbf{\underline{m}}odeling for o\textbf{\underline{p}}en-set time series \textbf{\underline{a}}nomaly dete\textbf{\underline{ct}}ion, to tackle these challenges. The key insight is to \textbf{i)} learn an influence function that can accurately estimate the impact of individual training samples on the modeling, and then \textbf{ii)} leverage these influence scores to generate semantically divergent yet realistic unseen anomalies for time series while repurposing high-influential samples as supervised anomalies for anomaly decontamination. 
  Extensive experiments show that IMPACT significantly outperforms existing state-of-the-art methods, showing superior accuracy under varying OSAD settings and contamination rates. Code is available at \renewcommand\UrlFont{\color{blue}}\url{https://github.com/mala-lab/IMPACT}.
\end{abstract}

\section{Introduction}
\label{submission}

Time series anomaly detection (TSAD) aims at identifying unusual patterns that significantly deviate from the majority of the data characterized by sequential observations over time \cite{blazquez2021review}, and has been widely applied in critical domains ranging from financial fraud detection \cite{anandakrishnan2018anomaly} to healthcare \cite{pereira2019learning} and industrial equipment monitoring \cite{wang2022detecting}. Since it is impractical to obtain extensive labeled anomalies in real-world scenarios, most existing TSAD approaches \cite{liu2024elephant,wucatch,shentu2025towards} are unsupervised, assuming that only normal data is available during training. However, some labeled anomaly samples are often accessible in many applications, such as abnormal transactions identified in the past financial management and electrocardiograms (ECG) of patients. These anomaly samples offer important priors about domain-specific abnormality, but the unsupervised methods are unable to utilize them.

Recently, open-set anomaly detection (OSAD) \cite{ding2022catching,acsintoae2022ubnormal,zhu2024anomaly,wang2025distribution} is proposed to leverage the limited labeled anomaly samples to learn generalized models for detecting both seen anomalies from closed-set anomaly classes and unseen anomalies from open-set anomaly classes. While OSAD can utilize prior knowledge from the labeled data to reduce false positive errors, this paradigm still faces two key challenges when tailored for time series data.
First, aside from a limited set of labeled anomalies, current methods \cite{pang2019deep,pang2023deep,lai2024open,liu2025detecting} conventionally treat the remaining unlabeled training samples as normal to facilitate supervised training. However, the presence of unknown anomalies in the training set (\ie, anomaly contamination) is unavoidable \cite{pang2023deep,xu2024calibrated}, which can largely mislead the training. 

Second, pseudo anomalies are typically generated through data augmentation \cite{lappas2024dynamic,dong2024nng} to mimic unseen anomalies to enhance its generalization. Nevertheless, the mainstream augmentation operations such as rotation and cropping are image data oriented, rendering them ineffective in generating realistic pseudo time-series anomalies.
Low-quality pseudo anomalies may introduce misleading signals that are either indistinguishable from normal patterns or entirely unrepresentative of real anomalies. For instance, short-range moving averages \cite{goswami2023unsupervised} may fail to distort long-term seasonal patterns effectively, while horizontally flipping a segment \cite{obata2025robust} in ECG data would produce a pattern that violates fundamental cardiac electrophysiology. Consequently, existing methods 
can be misled to biased decision boundary, leading to high false positives and missed detection of unseen anomalies, as shown in \cref{fig:toy}(a).
\begin{figure}[t]
	\centering
	\includegraphics[width=0.95\columnwidth]{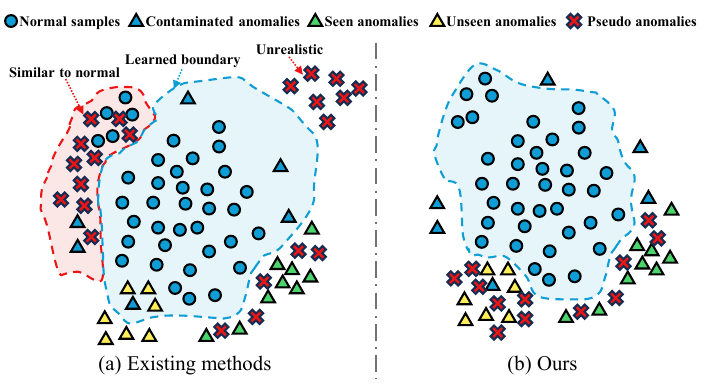}
	\caption{
        \textbf{(a)} Contaminated anomalies and low-quality pseudo anomalies significantly affect the learned boundary. \textbf{(b)} IMPACT addresses these issues by i) accurately flipping the contaminated unlabeled anomalies into labeled ones through influence modeling and ii)
        the influence-score-guided generation of high-quality pseudo anomalies from the perspective of reducing the test risk.
	}
	\label{fig:toy}
\end{figure}

To address these challenges, we propose \textbf{IMPACT}, a novel framework that leverages \textbf{\underline{i}}nfluence \textbf{\underline{m}}odeling for o\textbf{\underline{p}}en-set time series \textbf{\underline{a}}nomaly dete\textbf{\underline{ct}}ion. 
It leverages the theoretical properties of influence functions \cite{hampel1974influence} to systematically purify the training data and synthesize high-quality pseudo anomalies, as illustrated in \cref{fig:toy}(b). To this end, IMPACT is instantiated with two novel modules, including \textit{Test-risk-driven Influence Scoring (\textbf{TIS})} and \textit{Risk-reduction-based Anomaly Decontamination and Generation (\textbf{RADG})}. TIS first performs a multi-channel deviation loss-based influence modeling to precisely quantify the effect of each training sample on the model's test risk. We theoretically show that minimizing this loss is equivalent to the entropy minimization of the latent distribution, encouraging the model to learn a compact representation for normal data while maximizing the deviation of anomalies. The resulting influence scores are then utilized by the RADG module
from the perspective of reducing the test risk to 
i) repurpose contaminated samples exhibiting high influence scores as true labeled anomalies and
ii) select and perturb boundary normal samples that contribute least to risk minimization 
to generate pseudo anomalies that effectively simulate unseen anomaly patterns under theoretical guarantee. 

Our main contributions are summarized as follows: 
\begin{itemize} 
	\item We propose IMPACT, the first framework to leverage influence modeling for open-set TSAD, simultaneously addressing the dual challenges of anomaly contamination and generation in time series data.
	\item We further instantiate IMPACT with theoretically grounded TIS and RADG modules, which accurately quantify and utilize influence scores to repurpose contaminated samples as labeled anomalies while generating semantically realistic unseen anomalies.
	\item Extensive experiments on multiple real-world datasets demonstrate that IMPACT achieves state-of-the-art performance, exhibiting superior robustness against contamination and effective detection of unseen anomalies compared to existing baselines. 
\end{itemize}

\section{Related Work}
\textbf{Time Series Anomaly Detection.}
Existing TSAD methods \cite{shentu2025towards,ho2025graph} are predominantly designed as unsupervised learning due to the difficulty of collecting massive anomalies. The most popular approaches \cite{zhou2023one,wucatch} learn to reconstruct the entire series through an encoder-decoder framework, and anomalies are difficult to reconstruct accurately. One-class classification methods \cite{shen2020timeseries,xu2024calibrated} focus on learning data normality using support vectors. Other representative approaches include prediction and self-supervised learning methods \cite{yang2023dcdetector,ansari2024chronos,das2024decoder}. Nevertheless, these methods are unable to leverage any prior knowledge about real anomalies, leading to high false positives.

\textbf{Open-Set Anomaly Detection.}
OSAD methods \cite{ding2022catching,xu2023rosas,lai2024open,shao2025mp} seek to reduce the detection errors by utilizing limited labeled anomalies from partial anomaly classes that are often available in real-world applications. DRA \cite{ding2022catching} learns disentangled representations of seen, pseudo, and residual anomalies to enhance the detection of both seen and unseen anomalies. AHL \cite{zhu2024anomaly} utilizes seen and pseudo anomalies to simulate heterogeneous anomaly distributions for generalization. DPDL \cite{wang2025distribution} encloses normal samples within compact distribution space while steering anomalies away. 
These methods rely on augmentation methods to generate pseudo anomalies that mimic unseen anomalies. Although they are effective in image data, recent approaches such as MOSAD \cite{lai2024open} and InvAD \cite{liu2025detecting} still struggle to generate pseudo anomalies tailored for time series data. Recently, WSAD-DT \cite{duraniweakly} proposes a dual-tailed kernel mechanism to handle the diversity of abnormal behaviors.

\textbf{Data Augmentation for Anomaly Generation.}
Data augmentation can generate pseudo anomalies to improve the generalization to unseen anomalies. Many augmentation methods are proposed in computer vision, such as Mixup \cite{zhang2018mixup}, CutMix \cite{yun2019cutmix}, and CutPaste \cite{li2021cutpaste}. DDL \cite{lappas2024dynamic} introduces dynamic anomaly weighting to refine pseudo anomaly patterns.
However, these image-tailored augmentations are difficult to be directly applied to time series. Inspired by image operations, COE and WMix are proposed to generate time series-specific augmentations via segment swapping and combination \cite{carmona2022neural}. COUTA \cite{xu2024calibrated} utilizes data perturbation to create native anomaly samples. CutAddPaste \citep{wang2024cutaddpaste} synthesizes diverse anomalies by mimicking five distinct anomaly categories. DADA \cite{shentu2025towards} employs eight data augmentations like flip and scale to inject anomaly in pre-training.
However, these methods predominantly rely on heuristic rules and pre-defined priors to synthesize anomalies, which inherently limit the generated pseudo anomalies to variations of seen patterns, rendering them insufficient to simulate the complex distributions of truly unseen anomalies that lie outside these heuristic constraints.

\textbf{Contamination Estimation and Anomaly Score Reliability.}
Real-world unlabeled training data frequently contains a small fraction of anomalies, known as anomaly contamination \cite{xu2024calibrated,zhang2026angel}. GammaGMM \cite{perini2023estimating} infers the contamination rate from unlabeled data for principled threshold selection, while ExCeeD \cite{perini2020quantifying} and calibration-conditional conformal p-values \cite{bates2023testing} address score reliability through probability estimation and distribution-free hypothesis testing, respectively. However, they primarily operate post-hoc, either estimating contamination after training or calibrating scores at inference time, and do not directly correct contaminated training samples. IMPACT addresses contamination proactively at the training level through influence-guided label flipping, eliminating the detrimental effect of contaminated samples before model optimization.

\textbf{Influence-based Data Analysis.}
Influence analysis \cite{koh2017understanding,hammoudeh2024training} quantifies how individual training samples affect model behavior, and has been applied to data valuation \cite{ghorbani2019data} and data cleaning \cite{hammoudeh2024training}. However, existing approaches treat these as separate tasks and do not leverage influence information to synthesize new samples. IMPACT goes beyond this by introducing a dual-purpose influence scoring mechanism that simultaneously performs decontamination via label flipping and pseudo anomaly generation via influence-guided feature perturbation, both within a unified risk-reduction framework.
\begin{figure*}[t]
	\centering 
	\includegraphics[width=0.87\linewidth]{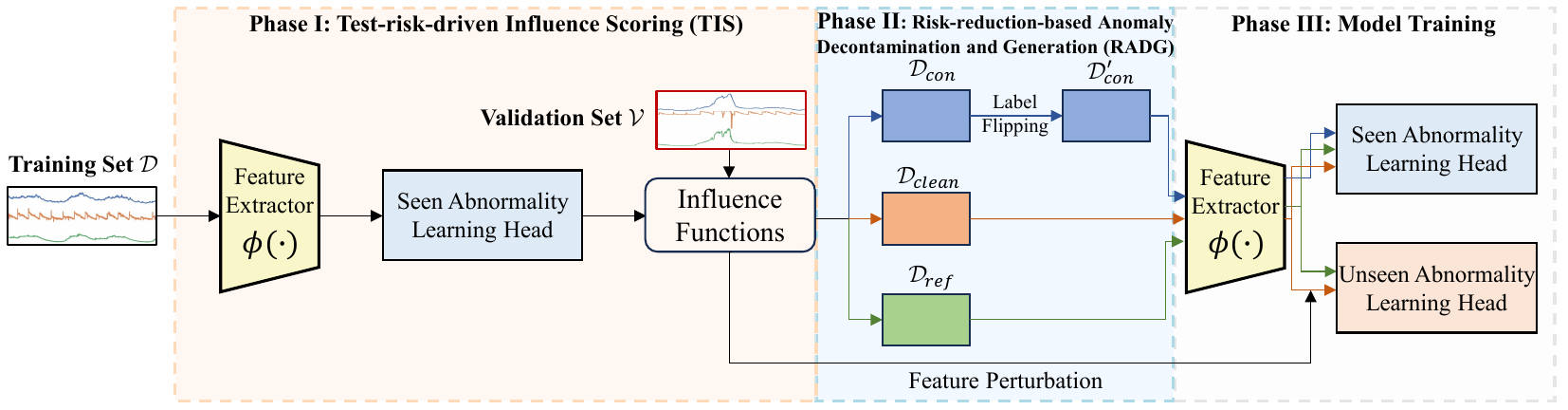}
	\caption{The overall framework of IMPACT. TIS first quantifies the influence of training samples on test risk, then RADG leverages these influence scores from the risk-reduction perspective to decontaminate the training set via label flipping and synthesize realistic unseen anomalies through feature perturbation. A dual-head architecture is finally trained to effectively detect both seen and unseen anomalies.}
	\label{fig:model}
\end{figure*}
\section{Proposed Approach}
\textbf{Problem Statement.}
Let $\mathcal{D} = \{(\bm{x}_{i}, y_i) \in \mathcal{X} \!\times\! \mathcal{Y} \} ^{N}_{i=1}$ be the time series training set for the studied open-set problem. In contrast to unsupervised TSAD where no labeled anomalies from the target dataset are utilized, the open-set setting assumes the availability of a minimal number of labeled anomaly samples. Here, $\mathcal{D}_{n}$ ($|\mathcal{D}_{n}|=M$) is the normal subset with potential anomaly contamination (\ie, a minimal fraction of samples deemed normal ($y_{i} = 0$) may actually be abnormal ($y_{i} = 1$)) and $\mathcal{D}_{a}$ ($|\mathcal{D}_{a}|=A$) is a very small set of labeled anomalies from \textit{seen} anomaly classes. Formally $\mathcal{D}\doteq\mathcal{D}_{n}\cup\mathcal{D}_{a}$, where $A \!\ll\! M$. $\bm{x}_{i} \in \mathbb{R}^{D \times L}$ denotes the $i^{th}$ time interval, which is collected from $D$ variables over $L$ consecutive timestamps. Each time interval is regarded as a sample. The labeled anomalies in $\mathcal{D}_{a}$ belong to a set of seen anomaly classes $\mathcal{S} \subseteq \mathcal{C}$, where $\mathcal{C} = \{c_{i}\} ^{|\mathcal{C}|}_{i=1}$ denotes the set of all possible anomaly classes. The remaining classes $\mathcal{U} = \mathcal{S} \setminus \mathcal{C}$ are \textit{unseen} anomaly classes, for which no labeled samples are available during training. Note that, the contaminated data in $\mathcal{D}_{n}$, \ie, \textit{unlabeled anomalies}, may be from either $\mathcal{S}$ or $\mathcal{U}$ that are mislabeled as normal, since in real-world scenarios the normal subset is collected without any prior knowledge of whether or to what degree it has been contaminated by anomalies of any class.
Our goal is to learn an anomaly score function $f: \mathcal{X}\rightarrow \mathbb{R}$ that assigns larger anomaly scores to anomalies spanning all anomaly classes than normal samples.

\textbf{Overview.}
\cref{fig:model} provides an overview of IMPACT, which consists of three major phases:

\textbf{Phase \uppercase\expandafter{\romannumeral1}}:
\textbf{Test‑risk‑driven Influence Scoring} (\textbf{TIS}) module first trains an initial model by minimizing the empirical risk over all training samples $\bm{z}_{i} = (\bm{x}_{i}, y_i)$, and get the optimized parameters defined as: $\hat\theta = \arg\min_{\theta\in\Theta} \frac{1}{N} \sum_{i=1}^N L(f(\bm{x}_i,\theta),y_i)$, where $L$ is the multi-channel deviation loss described in \cref{train}. We denote $L(f(\bm{x}_i,\theta),y_i) = L(\bm{z}_i,\theta)$ for short. Initial model $f(\cdot, \theta)$ can be defined as a combination of a feature extractor $\phi(\cdot,\theta_{\phi})$ and a learning head $h(\cdot, \theta_{h})$, formalized as: $f(\cdot, \theta) = h(\phi(\cdot,\theta_{\phi}), \theta_{h})$, where $\theta = \{\theta_{\phi}, \theta_{h}\}$.
Then TIS models the influence of perturbing each training sample on the test risk, and utilizes these resulting influence scores to select contaminated samples $\mathcal{D}_{con}$, clean samples $\mathcal{D}_{clean}$ and normal reference samples $\mathcal{D}_{ref}$ from $\mathcal{D}$.

\textbf{Phase \uppercase\expandafter{\romannumeral2}}:
\textbf{Risk-reduction-based Anomaly Decontamination and Generation} (\textbf{RADG}) module leverages the influence scores from a risk-reduction perspective to simultaneously eliminate contamination and expand the anomaly manifold. For \textbf{anomaly decontamination}, RADG performs label flipping on each identified contaminated sample in $\mathcal{D}_{con}$, and obtains an additional set of true anomalies denoted as $\mathcal{D}_{con}^{\prime}$. We theoretically prove that updating the model parameters by introducing $\mathcal{D}_{con}^{\prime}$ can achieve lower test risk. For \textbf{anomaly generation}, RADG selects the $k$ normal samples from $\mathcal{D}_{clean}$ that contribute the least to risk minimization. The underlying intuition is that these samples' features lack sufficient normality to guide the model training, residing near the normality-abnormality boundary. This makes them more susceptible to being affected by perturbations to generate new abnormal characteristics. We theoretically prove that the new features obtained through perturbation introduce unseen abnormality that does not follow the known patterns, and further reduce the test risk.

\textbf{Phase \uppercase\expandafter{\romannumeral3}}: In addition to learning seen abnormality from limited given anomalies, the final training adds an unseen abnormality learning head to generalize unseen anomalies.
\subsection{Test-risk-driven Influence Scoring}
\label{train}
\textbf{Initial Training.}
We conduct initial training based on training samples $\bm{z}_{i} = (\bm{x}_{i}, y_i)$. Inspired by the deviation loss \cite{pang2019deep}, we first formalize multi-channel deviation scores as follows:
\begin{equation}
	\mathit{dev}(\bm{x}_{i}) = \sqrt{(f(\bm{x}_{i},\theta)-\bm{\mu}_r)^{\top}\bm{\Sigma}^{-1}_{r}(f(\bm{x}_{i},\theta)-\bm{\mu}_r)},
\end{equation}
where $f(\bm{x}_{i},\theta)$ outputs multi-channel anomaly scores $\bm{s}_{i} \in \mathbb{R}^{r}$ indicating the likelihood of a sample being abnormal, $\bm{\mu}_r$ and $\bm{\Sigma}_{r}$ are the mean and covariance of the joint Gaussian prior-based anomaly score set $\{\bm{r}_{1}, \bm{r}_{2}, \cdots, \bm{r}_{l}\}$. Each $\bm{r}_{i}$ is drawn from $\mathcal{N}(\bm{\mu},\bm{\Sigma})$. We choose isotropic Gaussian to achieve stable detection performance on various datasets. Then we propose a multi-channel deviation loss to optimize model, which can be formalized as follows:
\begin{equation}
	\small
	L(\bm{z}_i,\theta) = \frac{1}{r}\sum_{j=1}^r[(1-y_i){\mathit{dev}(\bm{x}_{i})}_{j} + y_i \max\big(0, a - {\mathit{dev}(\bm{x}_{i})}_{j}\big)].
\end{equation}
Geometrically, this loss encourages the model to push the anomaly scores of normal samples towards $\bm{\mu}_r$, while enforcing the anomaly scores of anomalies must deviate from $\bm{\mu}_r$ by at least $a$. The key idea behind this is to evaluate the abnormality of samples from multiple dimensions, which is analogous to assessing an object from multiple angles. For normal objects, every perspective should align with expectations, but for abnormal objects, there will always be one perspective that stands out as abnormal.
Next, we explain that this geometric interpretation can also be understood as entropy minimization in the latent distribution. We derive the \cref{theorem1}, and the proof is provided in \cref{app:theorem1}.
\begin{theorem}
	\label{theorem1}
	Assuming the multi-channel anomaly scores $\bm{s} \in \mathbb{R}^{r}$ follow a latent distribution $S\sim \mathcal{N}(\bm{\mu},\sigma^{2}I)$, the geometric proximity to the isotropic Gaussian prior is equivalent to the entropy minimization of the latent distribution:
	\begin{equation}
		\mathcal{H}(S)=\frac{r}{2}(1+\log(2\pi\sigma^2))\propto\log\sigma^2.
	\end{equation}
\end{theorem}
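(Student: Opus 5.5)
The plan has two parts. First I compute the differential entropy of $S\sim\mathcal{N}(\bm{\mu},\sigma^2 I)$ in closed form. Then I show that the expected normal-sample term of the multi-channel deviation loss is a monotone increasing function of $\sigma^2$. Together these say that pulling normal scores toward the isotropic prior, i.e.\ minimizing the loss on $y_i=0$ samples, amounts to shrinking $\log\sigma^2$, hence shrinking $\mathcal{H}(S)$.

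\textbf{Entropy computation.} Write $\mathcal{H}(S)=-\mathbb{E}[\log p(S)]$ with
\begin{equation*}
\log p(\bm{s})=-\tfrac{r}{2}\log(2\pi\sigma^2)-\tfrac{1}{2\sigma^2}\|\bm{s}-\bm{\mu}\|^2 .
\end{equation*}
Since $\mathbb{E}\|\bm{s}-\bm{\mu}\|^2=r\sigma^2$, this gives $\mathcal{H}(S)=\frac{r}{2}(1+\log(2\pi\sigma^2))$. This is an affine function of $\log\sigma^2$ with positive slope $r/2$, which is the sense of $\propto$ in the statement. So minimizing the entropy is the same as minimizing $\sigma^2$.

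\textbf{Linking geometry to $\sigma^2$.} Because the prior is isotropic, $\bm{\Sigma}_r=\sigma_r^2 I$. The deviation then reduces to $\mathit{dev}(\bm{x})=\|f(\bm{x},\theta)-\bm{\mu}_r\|/\sigma_r$, and the per-channel quantity is $|s_j-\mu_{r,j}|/\sigma_r$. The normal part of $L$ is therefore proportional to $\mathbb{E}\sum_j |s_j-\mu_{r,j}|$.

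Decompose $s_j-\mu_{r,j}=(s_j-\mu_j)+(\mu_j-\mu_{r,j})$. For the squared version, $\mathbb{E}\|\bm{s}-\bm{\mu}_r\|^2=r\sigma^2+\|\bm{\mu}-\bm{\mu}_r\|^2$. For the absolute-value version, the aligned case $\bm{\mu}=\bm{\mu}_r$ gives $\mathbb{E}|s_j-\mu_j|=\sigma\sqrt{2/\pi}$. In either case the loss is minimized by setting $\bm{\mu}=\bm{\mu}_r$, and once that holds it is strictly increasing in $\sigma$. So driving the normal-sample loss down forces $\bm{\mu}\to\bm{\mu}_r$ and $\sigma^2\downarrow$, which is exactly entropy minimization by the first step.

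The anomaly hinge term acts only on samples that by assumption lie outside this normal latent distribution. It pushes them to distance at least $a$ and does not affect $\sigma$ of the normal scores. I would state this explicitly so that "latent distribution" is understood to mean the distribution of normal-sample scores.

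\textbf{Main obstacle.} The hard part is making "geometric proximity is equivalent to entropy minimization" precise, since equivalence here can only mean the two objectives are monotonically related. For $\bm{\mu}\neq\bm{\mu}_r$, the expected absolute deviation is a folded-normal mean, $\sigma\sqrt{2/\pi}\,e^{-\delta^2/2\sigma^2}+\delta(1-2\Phi(-\delta/\sigma))$. I would avoid analyzing this general case directly. Instead I would first argue that the optimum sets $\delta=0$, since the folded-normal mean is minimized in $\delta$ at $0$ for fixed $\sigma$. On the slice $\delta=0$ the monotone dependence on $\sigma$ is then immediate, which gives the equivalence.
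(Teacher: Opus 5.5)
Your argument is correct but organized differently from the paper's. The paper does not compute $\mathcal{H}(S)$ directly. It first proves the maximum-entropy property of the Gaussian: via $0\le D_{\mathrm{KL}}(p\,\|\,\psi_\Sigma)=\mathcal{H}(\psi_\Sigma)-\mathcal{H}(S)$, any score law with the same second moments as $\psi_\Sigma=\mathcal{N}(\bm{\mu},\Sigma)$ has entropy at most $\frac{1}{2}\log((2\pi e)^r\det\Sigma)$, with equality iff it is Gaussian. Only then does it set $\Sigma=\sigma^2 I$. It never connects the loss to $\sigma^2$; the ``equivalence'' is left to the surrounding prose.

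Under the theorem's Gaussian hypothesis that bound is an equality. So your one-line computation from $\mathbb{E}\|\bm{s}-\bm{\mu}\|^2=r\sigma^2$ gives the displayed formula with less machinery. What the paper's detour buys is that $\frac{r}{2}(1+\log(2\pi\sigma^2))$ remains an upper bound on the entropy for non-Gaussian scores with covariance $\sigma^2 I$.

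What your second step buys is an actual argument for the equivalence claim, which the paper only asserts. Keep explicit the two readings it depends on. First, $\mathit{dev}(\bm{x})_j=|s_j-\mu_{r,j}|/\sigma_r$; the paper defines $\mathit{dev}$ as a scalar Mahalanobis norm yet indexes it by channel. Second, $S$ is the law of normal-sample scores only; otherwise the hinge term enlarges the spread.

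Also, restricting to the slice $\delta=0$ is unnecessary. With $Z\sim\mathcal{N}(0,1)$, $\partial_\sigma\mathbb{E}|\delta+\sigma Z|=\sqrt{2/\pi}\,e^{-\delta^2/(2\sigma^2)}>0$, and $\partial_\delta\mathbb{E}|\delta+\sigma Z|=1-2\Phi(-\delta/\sigma)$ has the sign of $\delta$. Hence the expected normal loss is increasing in $\sigma$ at every offset and increasing in $|\delta|$ at every $\sigma$.
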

Since the entropy of $S$ is proportional to its log-variance, \cref{theorem1} indicates that the entropy minimization encourages the model to make its predictions (the anomaly scores) as close as possible to the isotropic Gaussian prior that represents the normal data distribution.

\textbf{Influence Modeling.}
Intuitively, if discarding a sample from training set would reduce the test risk, it indicates that this sample is detrimental to the model optimization, and is most likely a contaminated sample. Based on influence functions \cite{hampel1974influence,koh2017understanding}, we can measure how each training sample affects test risk. 

When perturbing $\bm{z}_{i}$ by small $\epsilon_{i}$, we obtain new parameters $\hat\theta_{\epsilon_{i}} = \arg\min_{\theta\in\Theta} \frac{1}{N} \sum_{n=1}^N L(\bm{z}_n,\theta) + \epsilon_{i}L(\bm{z}_i,\theta)$, and the change in model parameters can be estimated as:
\begin{equation}
	\mathcal{I}_{\theta}(\bm{z}_i)\triangleq\frac{d\hat{\theta}_{\epsilon_{i}}}{d\epsilon_{i}}|_{\epsilon_{i}=0}=-H_{\hat{\theta}}^{-1}\nabla_{\theta}L(\bm{z}_i,\hat{\theta}),
\end{equation}
where $H_{\hat{\theta}}\triangleq\frac{1}{N}\sum_{n=1}^{N}\triangledown_{\theta}^{2}L(\bm{z}_n,\hat{\theta})$ is the Hessian matrix. Using the chain rule, we have the following closed-form expression to estimate the influence of perturbing $\bm{z}_{i}$ on the prediction of a test sample $\bm{z}_{t}$:
\begin{equation}
	\label{inf_loss}
	\begin{aligned}
		\mathcal{I}_{L}(\bm{z}_{i},\bm{z}_{t})
		&\triangleq\frac{dL(\bm{z}_{t},\hat{\theta}_{\epsilon_{i}})}{d\epsilon_{i}}|_{\epsilon_{i}=0} \\
		&=-\nabla_{\theta}L(\bm{z}_{t},\hat{\theta})^{\top}H_{\hat{\theta}}^{-1}\nabla_{\theta}L(\bm{z}_{i},\hat{\theta}).
	\end{aligned}
\end{equation}
Therefore, if we perturb $\bm{z}_{i}$ by small $\epsilon_{i}$, we can estimate the change of loss at a test sample $\bm{z}_{t}$ taken from the validation set $\mathcal{V}$ as follows:
\begin{equation}
	L(\bm{z}_t,\hat{\theta}_{\epsilon_i})-L(\bm{z}_t,\hat{\theta})\approx\epsilon_i\times\mathcal{I}_{L}(\bm{z}_{i},\bm{z}_{t}).
\end{equation}
Furthermore, we can estimate the influence of perturbing $\bm{z}_{i}$ on the test risk over $\mathcal{V}$ as follows:
\begin{equation}
	L(\mathcal{V},\hat{\theta}_{\epsilon_i})-L(\mathcal{V},\hat{\theta})\approx\epsilon_i\times\sum_{\bm{z}_{t}\in \mathcal{V}}\mathcal{I}_{L}(\bm{z}_{i},\bm{z}_{t}),
	\label{inf}
\end{equation}
where a negative result indicates that the test risk after the perturbation is lower than before the perturbation. Inspired by \cite{kong2022resolving}, noting that given $\epsilon_{i}\in[-\frac{1}{N},0)$, \cref{inf} actually estimates the influence of downweighting or discarding the training sample $\bm{z}_{i}$. Therefore, when influence score $\mathcal{I}_{L}(\bm{z}_{i}) = \sum_{\bm{z}_{t}\in \mathcal{V}}\mathcal{I}_{L}(\bm{z}_{i},\bm{z}_{t})>0$, it indicates that $\bm{z}_{i}$ is harmful to model's prediction. We can select contaminated samples $\mathcal{D}_{con} = \{\bm{z}_{i}\in \mathcal{D}_{n} \mid\mathcal{I}_{L}(\bm{z}_{i})>0\}$. On the contrary, when $\mathcal{I}_{L}(\bm{z}_{i})$ is less than 0, it indicates that the sample is beneficial for the model's prediction. We can select the $k$ most helpful samples as normal reference samples $\mathcal{D}_{ref}=\{\bm{z}_i\in\mathcal{D}_n\mid\mathcal{I}_L(\bm{z}_i)<0\}|_{\text{top-k smallest}}$, which will be used to further amplify the deviation between normality and abnormality. The remaining normal and abnormal labeled samples form a clean set $\mathcal{D}_{clean} = \mathcal{D} \setminus \left( \mathcal{D}_{con} \cup \mathcal{D}_{ref} \right)$.

\subsection{Risk-reduction-based Anomaly Decontamination and Generation}
\textbf{Anomaly Decontamination via Label Flipping.}
We propose to flip the label of each contaminated sample in $\mathcal{D}_{con}$. First, we consider the influence of perturbing label from $\bm{z}_{i}=(\bm{x}_{i}, y_i)$ to $\bm{z}_{i\delta_{i}}=(\bm{x}_{i}, y_i + \delta_{i})$. According to \cite{koh2017understanding}, the new parameters after moving $\epsilon_{i}$ mass from $\bm{z}_{i}$ to $\bm{z}_{i\delta_{i}}$ can be defined as: $\hat\theta_{\epsilon_{i}\delta_{i}} = \arg\min_{\theta\in\Theta} \frac{1}{N} \sum_{n=1}^N L(\bm{z}_n,\theta) + \epsilon_{i}L(\bm{z}_{i\delta_{i}},\theta) - \epsilon_{i}L(\bm{z}_i,\theta)$. The closed-form estimate of the effect of $\bm{z}_{i}\mapsto\bm{z}_{i\delta_{i}}$ on model parameters can be formalized as follows:
\begin{equation}
	\begin{aligned}
		\frac{d\hat{\theta}_{\epsilon_{i}\delta_{i}}}{d\epsilon_{i}}|_{\epsilon_{i}=0}
		&=\mathcal{I}_{\theta}(\bm{z}_{i\delta_{i}}) - \mathcal{I}_{\theta}(\bm{z}_i) \\
		&=-H_{\hat{\theta}}^{-1}(\nabla_{\theta}L(\bm{z}_{i\delta_{i}},\hat{\theta}) - \nabla_{\theta}L(\bm{z}_i,\hat{\theta})).
	\end{aligned}
\end{equation}
We can further estimate the influence of perturbing $\bm{z}_{i}$ by $\bm{z}_{i\delta_{i}}$ on the prediction of a test sample $\bm{z}_{t}$:
\begin{equation}
	\label{inf_l}
	\begin{aligned}
		&\mathcal{I}_{L\delta_{i}}(\bm{z}_i,\bm{z}_t)
		=\frac{dL(\bm{z}_{t},\hat{\theta}_{\epsilon_{i}\delta_{i}})}{d\epsilon_{i}}|_{\epsilon_{i}=0} \\
		&=-\nabla_{\theta}L(\bm{z}_{t},\hat{\theta})^{\top}H_{\hat{\theta}}^{-1}(\nabla_{\theta}L(\bm{z}_{i\delta_{i}},\hat{\theta}) - \nabla_{\theta}L(\bm{z}_i,\hat{\theta})).
	\end{aligned}
\end{equation}
\cref{inf_l} offers a way to measure the influence of changing the labels of training samples on the model's predictions.

Then we analyse the test risk would be reduced by converting $\bm{z}_i=(\bm{x}_{i}, 0) \in \mathcal{D}_{con}$ to $\bm{z}_{i\mathbf{1}}=(\bm{x}_{i}, 1) \in \mathcal{D}_{con}^{\prime}$. Without loss of generality, we set $\bm{\mu}_r=\bm{0}$ and $\bm{\Sigma}_{r}=I$, and treat the output of $f(\bm{x}_{i},\theta)$ as a single-channel scalar (the multi-channel result is the average). The loss is simplified as: $L(\bm{z}_i,\theta) = (1-y_i)f(\bm{x}_{i},\theta) + y_i \max\big(0, a - f(\bm{x}_{i},\theta)\big)$. When model is retrained with $\mathcal{D}_{con}^{\prime}$, We disregard the case where $a - f(\bm{x}_{i},\theta)<0$, as it does not result in parameter updates, and the loss $L(\bm{z}_i,\theta)$ is changed from $f(\bm{x}_{i},\theta)$ to $a - f(\bm{x}_{i},\theta)$. We have:
\begin{equation}
	\label{l_change}
	\nabla_{\theta}L(\bm{z}_{i\mathbf{1}},\theta) - \nabla_{\theta}L(\bm{z}_i,\theta) = -2\nabla_{\theta}L(\bm{z}_i,\theta).
\end{equation}
Substituting \cref{l_change} into \cref{inf_l}, and combing it with \cref{inf_loss}, the influence of flipping label of $\bm{z}_{i}$ on the prediction at test sample $\bm{z}_{t}$ is:
\begin{equation}
	\begin{aligned}
		\label{l_inf_l}
		\mathcal{I}_{L\mathbf{1}}(\bm{z}_i,\bm{z}_t)
		&=2\nabla_{\theta}L(\bm{z}_{t},\hat{\theta})^{\top}H_{\hat{\theta}}^{-1}\nabla_{\theta}L(\bm{z}_i,\hat{\theta}) \\
		&=-2\mathcal{I}_{L}(\bm{z}_{i},\bm{z}_{t}).
	\end{aligned}
\end{equation}
Similar to \cref{inf}, we denote the optimal parameters as $\hat{\theta}_{\epsilon_{i}\mathbf{1}}$ after flipping label of $\bm{z}_{i}$, and extend the influence to the whole test risk over $\mathcal{V}$:
\begin{equation}
	\label{inf_val}
	L(\mathcal{V},\hat{\theta}_{\epsilon_{i\mathbf{1}}})-L(\mathcal{V},\hat{\theta})\approx\epsilon_i\times\sum_{\bm{z}_{t}\in \mathcal{V}}\mathcal{I}_{L\mathbf{1}}(\bm{z}_{i},\bm{z}_{t}).
\end{equation}
According to \cref{l_inf_l}, the influence of label flipping is related to the influence of perturbing $\bm{z}_{i}$. Then we derive the \cref{theorem2}, and the proof is provided in \cref{app:theorem2}.
\begin{theorem}
	\label{theorem2}
	Label flipping on the contaminated samples in $\mathcal{D}_{con} = \{\bm{z}_{i}\in \mathcal{D}_{n} \mid\mathcal{I}_{L}(\bm{z}_{i})>0\}$ can achieve lower test risk over $\mathcal{V}$:
	\begin{equation}
		L(\mathcal{V},\hat{\theta}_{\epsilon\mathbf{1}})-L(\mathcal{V},\hat{\theta})\approx-\frac{2}{N\cdot|\mathcal{D}_{con}|}\sum_{\bm{z}_{i}\in \mathcal{D}_{con}}\mathcal{I}_{L}(\bm{z}_{i})<0.
	\end{equation}
\end{theorem}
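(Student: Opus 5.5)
The argument combines the per-sample label-flipping influence of \cref{l_inf_l} with the risk estimate \cref{inf_val}, extends it to simultaneous flips by linearity, and then reads off the sign from the selection rule defining $\mathcal{D}_{con}$.

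\textbf{Single flip.} Summing \cref{l_inf_l} over $\bm{z}_t\in\mathcal{V}$ gives
\begin{equation*}
\sum_{\bm{z}_t\in\mathcal{V}}\mathcal{I}_{L\mathbf{1}}(\bm{z}_i,\bm{z}_t)=-2\,\mathcal{I}_L(\bm{z}_i).
\end{equation*}
Substituting this into \cref{inf_val} yields
\begin{equation*}
L(\mathcal{V},\hat\theta_{\epsilon_i\mathbf{1}})-L(\mathcal{V},\hat\theta)\approx-2\epsilon_i\,\mathcal{I}_L(\bm{z}_i).
\end{equation*}

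\textbf{Choice of $\epsilon_i$.} The reweighting here moves mass from $\bm{z}_i$ to $\bm{z}_{i\mathbf{1}}$, so relabeling the sample completely corresponds to a full unit of empirical mass, $\epsilon_i=\tfrac1N$. This sign convention differs from the removal case $\epsilon_i\in[-\tfrac1N,0)$ used for \cref{inf}. Under this choice a single flip changes the test risk by approximately $-\tfrac{2}{N}\mathcal{I}_L(\bm{z}_i)$.

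\textbf{Simultaneous flips.} All samples in $\mathcal{D}_{con}$ are flipped together. I will use the first-order expansion of $\hat\theta$ in the weight vector $(\epsilon_i)_i$: to first order, the parameter change is the sum of the individual changes $\epsilon_i\bigl(\mathcal{I}_\theta(\bm{z}_{i\mathbf{1}})-\mathcal{I}_\theta(\bm{z}_i)\bigr)$, all evaluated with the same Hessian $H_{\hat\theta}$. The total mass transferred is spread uniformly over the flipped set, i.e.\ each $\epsilon_i=\frac{1}{N|\mathcal{D}_{con}|}$, so the aggregate perturbation stays at the scale of one sample weight $1/N$. This normalization is what produces the factor in the theorem. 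By the chain rule the test-risk change is then
\begin{equation*}
L(\mathcal{V},\hat\theta_{\epsilon\mathbf{1}})-L(\mathcal{V},\hat\theta)\approx-\frac{2}{N|\mathcal{D}_{con}|}\sum_{\bm{z}_i\in\mathcal{D}_{con}}\mathcal{I}_L(\bm{z}_i).
\end{equation*}

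\textbf{Sign.} By definition every $\bm{z}_i\in\mathcal{D}_{con}$ satisfies $\mathcal{I}_L(\bm{z}_i)>0$. The sum is therefore strictly positive whenever $\mathcal{D}_{con}\neq\emptyset$, and the change in test risk is strictly negative.

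\textbf{Main obstacle.} The weakest step is the validity of \cref{l_change}, which underlies each per-sample term. It holds only when the hinge is active, $a-f(\bm{x}_i,\hat\theta)>0$, and it requires the single-channel reduction with $\bm\mu_r=\bm 0$ and $\bm\Sigma_r=I$. I will handle this by restricting to active-hinge samples, as the paper does, noting that inactive ones contribute no gradient. Additivity of the first-order influence and invertibility of $H_{\hat\theta}$ (positive definiteness at the optimum) are assumed, as in \cite{koh2017understanding}. The $\approx$ in the statement absorbs the second-order terms.
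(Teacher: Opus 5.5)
Your proposal follows the paper's proof essentially step for step: the single-flip estimate from \cref{l_inf_l} and \cref{inf_val} with $\epsilon_i=\frac{1}{N}$, aggregation over $\mathcal{D}_{con}$ with the factor $\frac{1}{|\mathcal{D}_{con}|}$, and the sign read off from $\mathcal{I}_L(\bm{z}_i)>0$. The one difference is how the factor $\frac{1}{|\mathcal{D}_{con}|}$ is justified: the paper simply defines the aggregate risk change as the average of the single-flip changes, which to first order coincides with your uniform weighting $\epsilon_i=\frac{1}{N|\mathcal{D}_{con}|}$; note that fully flipping every sample (each $\epsilon_i=\frac{1}{N}$, as in your own single-flip convention) would instead give the sum without the factor $\frac{1}{|\mathcal{D}_{con}|}$, although the strict negativity is unaffected.
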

\cref{theorem2} shows that flipping the label of each contaminated sample in $\mathcal{D}_{con}$ can reduce the test risk.
The benefit of label flipping lies in its ability to not only avoid the negative effects of anomaly contamination but also transform it into valuable abnormal knowledge that can be leveraged, ultimately improving the model's performance.

\textbf{Anomaly Generation via Feature Perturbation.}
Then we focus on generalizing the model's ability to detect previously unseen anomalies by perturbing the features of clean samples. To implement this, we first utilize influence scores to select the $k$ normal samples $\mathcal{D}_{per}$ in the clean set $\mathcal{D}_{clean}$ that contribute the least to risk minimization, defined as $\mathcal{D}_{per}=\{\bm{z}_i\in(\mathcal{D}_n\cap\mathcal{D}_{clean})\mid\mathcal{I}_L(\bm{z}_i)<0\}|_{\text{top-k largest}}$. The underlying hypothesis is that these least helpful samples are more prone to being influenced by perturbations, thus offering an opportunity to generate new types of abnormality when modified.

Unlike traditional methods \cite{carmona2022neural,lai2024open,wang2024cutaddpaste} that manually design transformations to generate pseudo anomalies, which may not always guarantee the quality of the generated anomalies, we instead generate unseen anomalies directly from the feature space based on the influence functions. Recall that our model $f(\cdot, \theta)$ includes feature extractor $\phi(\cdot,\theta_{\phi})$ and learning head $h(\cdot, \theta_{h})$, and let $\bm{\varphi}_{i}=\phi(\bm{x}_i,\theta_{\phi})$. Then we can obtain the labeled feature set
$\mathcal{W}_{per}=\{\bm{w}_i=(\bm{\varphi}_{i},0)\mid(\bm{x}_i,0)\in\mathcal{D}_{per}\}$. Consider the feature perturbation $\bm{w}_{i\zeta_{i}}=(\bm{\varphi}_{i}+\zeta_{i},0)$, and we can define the optimal learning head parameters after moving $\epsilon_{i}$ mass from $\bm{w}_{i}$ to $\bm{w}_{i\zeta_{i}}$: $\hat\theta_{h,\epsilon_{i}\zeta_{i}} = \arg\min_{\theta_h\in\Theta_h} \frac{1}{N} \sum_{n=1}^N L(\bm{w}_n,\theta_h) + \epsilon_{i}L(\bm{w}_{i\zeta_{i}},\theta_h) - \epsilon_{i}L(\bm{w}_i,\theta_h)$. Similar to \cref{inf_l}, we can further approximate the effect of $\bm{w}_{i}\mapsto\bm{w}_{i\zeta_{i}}$ on the loss at test feature $\bm{w}_t$:
\begin{equation}
	\label{inf_per}
	\begin{aligned}
		&\mathcal{I}_{L\zeta_{i}}(\bm{w}_i,\bm{w}_t)
		=\frac{dL(\bm{w}_{t},\hat{\theta}_{h,\epsilon_{i}\zeta_{i}})}{d\epsilon_{i}}|_{\epsilon_{i}=0} \\
		&\approx
		-\nabla_{\theta_h}L(\bm{w}_{t},\hat{\theta}_h)^{\top}H_{\hat{\theta}_h}^{-1}[\nabla_{\bm{\varphi}}\nabla_{\theta_h}L(\bm{w}_i,\hat{\theta}_h)]\zeta_{i}.
	\end{aligned}
\end{equation}
Then the influence of $\bm{w}_{i}\mapsto\bm{w}_{i\zeta_{i}}$ on the test risk over $\mathcal{V}$ can be estimated as:
\begin{equation}
	\label{inf_per_val}
	\begin{aligned}
		&\mathcal{I}_{L\zeta_{i}}(\bm{w}_i)
		=\sum_{\bm{w}_{t}\in \mathcal{V}}\mathcal{I}_{L\zeta_{i}}(\bm{w}_{i},\bm{w}_{t}) \\
		&\approx
		\underbrace{-\nabla_{\theta_h}L(\mathcal{V},\hat{\theta}_h)^{\top}H_{\hat{\theta}_h}^{-1}[\nabla_{\bm{\varphi}}\nabla_{\theta_h}L(\bm{w}_i,\hat{\theta}_h)]}_{\mathcal{I}_{per}(\bm{w}_i)}\zeta_{i}.
	\end{aligned}
\end{equation}
According to \cref{inf_per_val}, we can set $\zeta_{i}$ in the direction of $\mathcal{I}_{per}(\bm{w}_i)^{\top}$ to construct feature perturbations of $\bm{w}_i$ that maximally increase test risk. The intuition is that when the feature perturbation causes the greatest increase in test risk, it indicates that the perturbed features have significantly deviated from the original label attributes, \ie, they are in an unknown abnormal distribution. Then we can construct perturbed feature set $\mathcal{W}_{per}^{\prime}=\{\bm{w}_{i\zeta_{i}\mathbf{1}}=(\bm{\varphi}_{i\zeta_{i}},1)\mid(\bm{x}_i,0)\in\mathcal{D}_{per}\}$, where $\bm{\varphi}_{i\zeta_{i}}=\bm{\varphi}_{i}+\alpha\mathcal{I}_{per}(\bm{w}_i)^{\top}$, and $\alpha>0$ is the perturbation strength. Now we derive the \cref{theorem3}, and the proof can be found in \cref{app:theorem3}.
%
\begin{theorem}
	\label{theorem3}
	Considering the original feature $\bm{\varphi}_{i}$ and the feature perturbation $\alpha\mathcal{I}_{per}(\bm{w}_i)^{\top}$. Measuring in the feature space, the perturbed feature $\bm{\varphi}_{i\zeta_{i}}$ follows a new distribution that is different from the original distribution of $\bm{\varphi}_{i}$, and there exists a lower bound for the distribution difference.
\end{theorem}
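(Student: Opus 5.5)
We will make the vague statement concrete by modelling the original features of the selected samples as a distribution $P$ and the perturbed features as the pushforward $Q$. Concretely, $\bm{\varphi}_{i}\sim P$ for $(\bm{x}_i,0)\in\mathcal{D}_{per}$, and $\bm{\varphi}_{i\zeta_i}=\bm{\varphi}_i+\alpha\mathcal{I}_{per}(\bm{w}_i)^{\top}\sim Q$. Since the perturbation is a deterministic shift attached to each feature, the natural metric is a transport distance. We will use the $1$-Wasserstein distance, or alternatively the MMD with a characteristic kernel; its first-moment version gives the cleanest bound.

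\textbf{Step 1: reduce to a mean shift.} By Kantorovich--Rubinstein duality with the $1$-Lipschitz test function $g(\bm{v})=\bm{u}^{\top}\bm{v}$, where $\bm{u}$ is a unit vector, we get
\[
W_1(P,Q)\ \ge\ \bigl\|\mathbb{E}_Q[\bm{\varphi}]-\mathbb{E}_P[\bm{\varphi}]\bigr\| \;=\; \alpha\,\bigl\|\mathbb{E}_P[\mathcal{I}_{per}(\bm{w})]\bigr\|.
\]
The coupling $\bm{\varphi}\mapsto\bm{\varphi}+\alpha\mathcal{I}_{per}^{\top}$ also gives the upper bound $\alpha\,\mathbb{E}\|\mathcal{I}_{per}\|$. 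So the distribution difference is sandwiched between first-moment quantities of the influence direction.

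\textbf{Step 2: lower-bound $\|\mathbb{E}_P\,\mathcal{I}_{per}\|$.} Write $\mathcal{I}_{per}(\bm{w}_i)=-\bm{g}_{\mathcal{V}}^{\top}H_{\hat\theta_h}^{-1}J_i$, where $\bm{g}_{\mathcal{V}}=\nabla_{\theta_h}L(\mathcal{V},\hat\theta_h)$ and $J_i=\nabla_{\bm{\varphi}}\nabla_{\theta_h}L(\bm{w}_i,\hat\theta_h)$. Assume $H_{\hat\theta_h}$ is positive definite with largest eigenvalue $\lambda_{\max}$, and that the averaged mixed Jacobian $\bar J=\mathbb{E}_P J_i$ has smallest singular value $\sigma_{\min}>0$ on the relevant subspace. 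Then
\[
\|\mathbb{E}_P\,\mathcal{I}_{per}\|\ \ge\ \frac{\sigma_{\min}\,\|\bm{g}_{\mathcal{V}}\|}{\lambda_{\max}}.
\]
This yields the claimed lower bound $W_1(P,Q)\ge \alpha\sigma_{\min}\|\bm{g}_{\mathcal{V}}\|/\lambda_{\max}>0$, provided the validation gradient is nonzero.

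The nonzero gradient is the \textbf{main obstacle}: at an exact optimum, the validation gradient of the head could vanish. We would argue it does not vanish because $\mathcal{V}$ differs from the training distribution, since it contains unseen-class anomalies or samples on which the model is imperfect. Failing that, we would state $\|\bm{g}_{\mathcal{V}}\|>0$ as an explicit assumption. We would likewise need to justify $\sigma_{\min}>0$.

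For the single-channel simplified loss with normal label, $L=h(\bm{\varphi},\theta_h)$. Hence $J_i=\nabla_{\bm{\varphi}}\nabla_{\theta_h}h$, which is nondegenerate for a linear head $h=\theta_h^{\top}\bm{\varphi}$ (there $J_i=I$). In that case the bound becomes simply $\alpha\|H^{-1}\bm{g}_{\mathcal{V}}\|$, and we would present this case explicitly.

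\textbf{Step 3: interpret the bound.} The selected samples in $\mathcal{D}_{per}$ are the least helpful normals (largest negative $\mathcal{I}_L$). Their per-sample directions $\mathcal{I}_{per}(\bm{w}_i)$ are aligned because they share the common factor $\bm{g}_{\mathcal{V}}^{\top}H^{-1}$, so no cancellation occurs in the expectation. Finally, by \cref{inf_per_val}, the test-risk change is $\alpha\|\mathcal{I}_{per}\|^2>0$, confirming that the shift moves features out of the normal distribution rather than within it.
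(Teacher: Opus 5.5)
Your argument is sound under its stated assumptions, but it takes a genuinely different route from the paper. The paper writes the shift as a rank-one linear map, $\bm{\varphi}_{i\zeta_i}=(I+\alpha\bm{q}_i\bm{\varphi}_i^{\top}/\|\bm{\varphi}_i\|^2)\bm{\varphi}_i$ with $\bm{q}_i=\mathcal{I}_{per}(\bm{w}_i)^{\top}$. It measures the difference by KL through the change-of-variables determinant, $D_\text{KL}=|\log(1+\alpha\bm{\varphi}_i^{\top}\bm{q}_i/\|\bm{\varphi}_i\|^2)|$. It then uses the linear head to identify $\bm{\varphi}_i^{\top}\bm{q}_i$ with the influence score $\mathcal{I}_{L\theta_h}(\bm{w}_i)$, which is $\le-\beta<0$ by the way $\mathcal{D}_{per}$ is selected, and this gives $D_\text{KL}\ge\alpha\beta/\|\bm{\varphi}_i\|^2$.

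Your $W_1$ mean-shift bound, in terms of $\|\bm{g}_{\mathcal{V}}\|$, $\lambda_{\max}$ and $\sigma_{\min}(\bar J)$, never uses the selection rule, and that is exactly why you have to assume $\bm{g}_{\mathcal{V}}\neq\bm{0}$. You can discharge that assumption the paper's way. With the linear head, $\mathcal{I}_{per}(\bm{w}_i)=-\bm{g}_{\mathcal{V}}^{\top}H_{\hat\theta_h}^{-1}$ is the same for every $i$, and $\mathcal{I}_{L\theta_h}(\bm{w}_i)=\mathcal{I}_{per}(\bm{w}_i)\bm{\varphi}_i\le-\beta$. Cauchy--Schwarz then gives $W_1(P,Q)=\alpha\|H_{\hat\theta_h}^{-1}\bm{g}_{\mathcal{V}}\|\ge\alpha\beta/\|\bm{\varphi}_i\|>0$.

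In the other direction, your route buys rigor. In that case the perturbation is a pure translation, whose Jacobian determinant is $1$, so a determinant-based KL computation cannot register it. The paper gets a nontrivial determinant only by differentiating as if the $\bm{\varphi}_i$-dependent rank-one matrix were fixed. $W_1$ sees the shift directly and also comes with the matching upper bound $\alpha\,\mathbb{E}\|\mathcal{I}_{per}\|$.

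One minor point: your Step 3 claim that the shared factor $\bm{g}_{\mathcal{V}}^{\top}H^{-1}$ prevents cancellation is not a valid argument when the $J_i$ vary. Ruling out cancellation rests entirely on your assumption $\sigma_{\min}(\bar J)>0$.
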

\cref{theorem3} shows that the distribution of perturbed feature can be very different from that of the original feature under the influence-guided perturbation, which further promotes the model's generalization ability to unseen anomalies. Similar to \cref{inf_val}, we derive the \cref{theorem4} to show that the perturbed features can further reduce the test risk, and the proof is provided in \cref{app:theorem4}.
\begin{theorem}
	\label{theorem4}
	Generating anomalies via feature perturbation $\alpha\mathcal{I}_{per}(\bm{w}_i)^{\top}$ from feature set $\mathcal{W}_{per}$ can further reduce the test risk over $\mathcal{V}$:
	\begin{equation}
    \begin{aligned}
	&L(\mathcal{V},\hat{\theta}_{h,\epsilon_{i}\zeta_{i}\mathbf{1}})-L(\mathcal{V},\hat{\theta}_{h}) \\
    &\approx-\frac{\alpha}{N\cdot|\mathcal{W}_{per}|}\sum_{\bm{w}_{i}\in \mathcal{W}_{per}}\Vert\mathcal{I}_{per}(\bm{w}_{i})\Vert^{2}_{2}<0.
    \end{aligned}
	\end{equation}
\end{theorem}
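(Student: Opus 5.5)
The proof mirrors that of \cref{theorem2}. It has two pieces: the first-order influence of the feature perturbation from \cref{inf_per_val}, and the sign reversal produced by flipping the label of the perturbed feature from $0$ to $1$, as in \cref{l_change} and \cref{l_inf_l}.

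\textbf{Step 1: the label flip.} I would first handle the passage from $\bm{w}_{i\zeta_i}=(\bm{\varphi}_{i\zeta_i},0)$ to $\bm{w}_{i\zeta_i\mathbf{1}}=(\bm{\varphi}_{i\zeta_i},1)$. Use the simplified single-channel loss with $\bm{\mu}_r=\bm{0}$, $\bm{\Sigma}_r=I$, and discard the inactive hinge case $a-f<0$ as before. On the head parameters this gives
\[
\nabla_{\theta_h}L(\bm{w}_{i\zeta_i\mathbf{1}},\theta_h)=-\nabla_{\theta_h}L(\bm{w}_{i\zeta_i},\theta_h).
\]

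\textbf{Step 2: first-order expansion in the feature.} Next I would Taylor-expand $\nabla_{\theta_h}L(\bm{w}_{i\zeta_i},\hat\theta_h)$ about $\bm{\varphi}_i$. This gives $\nabla_{\theta_h}L(\bm{w}_i,\hat\theta_h)+[\nabla_{\bm\varphi}\nabla_{\theta_h}L(\bm{w}_i,\hat\theta_h)]\zeta_i$. The label flip therefore contributes two terms to the displacement $\bm{w}_i\mapsto\bm{w}_{i\zeta_i\mathbf{1}}$:
\begin{itemize}
\item a zeroth-order term $-2\nabla_{\theta_h}L(\bm{w}_i,\hat\theta_h)$, which is the analogue of $-2\mathcal{I}_L$;
\item a first-order term $-[\nabla_{\bm\varphi}\nabla_{\theta_h}L]\zeta_i$.
\end{itemize}

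\textbf{Step 3: contract with the validation gradient.} Contracting with $-\nabla_{\theta_h}L(\mathcal{V},\hat\theta_h)^{\top}H_{\hat\theta_h}^{-1}$, exactly as in \cref{inf_per} and \cref{inf_per_val}, yields
\[
\mathcal{I}_{L\zeta_i\mathbf{1}}(\bm{w}_i)\approx 2\,\mathcal{I}_{L}(\bm{w}_i)\;-\;\mathcal{I}_{per}(\bm{w}_i)\zeta_i .
\]
Substituting $\zeta_i=\alpha\mathcal{I}_{per}(\bm{w}_i)^{\top}$ turns the second term into $-\alpha\Vert\mathcal{I}_{per}(\bm{w}_i)\Vert_2^2$.

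\textbf{Step 4: sum over $\mathcal{W}_{per}$ and fix the sign.} I would sum over $\bm{w}_i\in\mathcal{W}_{per}$ with averaged mass, mirroring \cref{inf_val} and the normalization $\frac{1}{N\cdot|\mathcal{W}_{per}|}$ used in \cref{theorem2}. Following that theorem's convention, $\epsilon_i=\frac{1}{N}$ is split across the set. This gives the stated approximation, and it is negative whenever some $\mathcal{I}_{per}(\bm{w}_i)\neq 0$, since every summand is a squared norm.

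\textbf{Main obstacle.} The hard part is the zeroth-order term $2\mathcal{I}_L(\bm{w}_i)$ that appears in Step 3. Its treatment needs one of two arguments:
\begin{itemize}
\item \emph{Separate accounting.} Isolate the effect of the perturbation by taking the baseline to be the already-flipped unperturbed point, so that only the $\zeta_i$-dependent change is attributed to generation. This is consistent with the phrase ``further reduce'': the statement measures the additional effect beyond decontamination.
\item \emph{Vanishing or negligibility.} Argue that the term vanishes to first order, or is dominated by the perturbation term. Samples in $\mathcal{D}_{per}$ are the top-$k$ largest among $\mathcal{I}_L<0$, so $|\mathcal{I}_L(\bm{w}_i)|\approx 0$ (near-boundary samples); the term is then negligible relative to $\alpha\Vert\mathcal{I}_{per}\Vert^2$ for suitable $\alpha$.
\end{itemize}
I would first try the separate-accounting route, since it yields the exact form displayed in \cref{theorem4}. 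I would fall back on the near-zero-influence argument to justify dropping the term if needed.

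A secondary point is justifying the expansion of the cross-derivative $\nabla_{\bm\varphi}\nabla_{\theta_h}L$ in Step 2. The hinge makes $L$ only piecewise smooth, so I would state the result on the active region $a-f>0$ and assume $\alpha$ is small enough to stay in it.
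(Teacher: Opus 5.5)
\paragraph{Comparison with the paper's route.}
You expand the combined move $\bm w_i\mapsto\bm w_{i\zeta_i\mathbf 1}$ in one shot. The paper instead telescopes through $\hat\theta_{h,\epsilon_i\zeta_i}$. It credits the feature move with $\epsilon_i\mathcal I_{L\zeta_i}(\bm w_i)$. It then asserts, by analogy with \cref{l_inf_l}, that the subsequent flip contributes $-2\epsilon_i\mathcal I_{L\zeta_i}(\bm w_i)$, nets $-\epsilon_i\mathcal I_{L\zeta_i}(\bm w_i)$, and substitutes $\epsilon_i=\frac1N$ and $\zeta_i=\alpha\mathcal I_{per}(\bm w_i)^\top$. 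Your expansion is the more careful one, and it exposes what that step hides. By \cref{l_inf_l}, flipping $\bm w_{i\zeta_i}$ contributes $-2$ times the upweighting influence of $\bm w_{i\zeta_i}$, which to first order is $\mathcal I_L(\bm w_i)+\mathcal I_{L\zeta_i}(\bm w_i)$. The paper keeps only the second piece. Your separate-accounting option is therefore, in effect, exactly what the paper's proof does.

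\paragraph{The gap.}
The gap is in disposing of the leftover term, and it begins with a sign slip in Step 3. Contracting $-\nabla_{\theta_h}L(\mathcal V,\hat\theta_h)^\top H_{\hat\theta_h}^{-1}$ against $-2\nabla_{\theta_h}L(\bm w_i,\hat\theta_h)$ gives $-2\mathcal I_L(\bm w_i)$, exactly as in \cref{l_inf_l}, not $+2\mathcal I_L(\bm w_i)$. So
\[
\mathcal I_{L\zeta_i\mathbf 1}(\bm w_i)\approx -2\,\mathcal I_L(\bm w_i)-\alpha\Vert\mathcal I_{per}(\bm w_i)\Vert_2^2 .
\]
$\mathcal W_{per}$ consists of helpful samples with $\mathcal I_L(\bm w_i)<0$; the proof of \cref{theorem3} even assumes $\mathcal I_{L\theta_h}(\bm w_i)\le-\beta<0$. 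Hence the leftover is strictly positive: relabelling a helpful normal sample raises test risk, the mirror image of \cref{theorem2}. With the correct sign, neither fallback proves the stated inequality.

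\emph{Separate accounting} changes the left-hand side. It measures risk relative to parameters in which $\mathcal W_{per}$ has already been relabelled. Those parameters exceed $L(\mathcal V,\hat\theta_h)$ by roughly $-\frac{2}{N}\mathcal I_L(\bm w_i)>0$ per sample, so a reduction from there does not give ``$<0$'' relative to $\hat\theta_h$. Nor is this the effect ``beyond decontamination'', because $\mathcal W_{per}\subseteq\mathcal D_n\cap\mathcal D_{clean}$ is disjoint from $\mathcal D_{con}$.

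\emph{Negligibility} needs an extra hypothesis such as $\alpha\sum_{\mathcal W_{per}}\Vert\mathcal I_{per}(\bm w_i)\Vert_2^2>-2\sum_{\mathcal W_{per}}\mathcal I_L(\bm w_i)$. Top-$k$ selection does not supply this: the $k$ least helpful samples need not have near-zero influence, and the KL bound of \cref{theorem3} is only informative when $\beta$ is not negligible. The hypothesis also pushes $\alpha$ upward, against your own small-$\alpha$ requirement for the Taylor and hinge steps.

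A correct version must do one of two things. It can redefine the baseline as the relabelled-but-unperturbed parameters, where the displayed formula is exactly what you get. Or it can assume the inequality above, in which case the first-order change is $-\frac{1}{N|\mathcal W_{per}|}\sum_{\mathcal W_{per}}\bigl(\alpha\Vert\mathcal I_{per}(\bm w_i)\Vert_2^2+2\mathcal I_L(\bm w_i)\bigr)$ rather than the displayed expression.
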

\subsection{Training and Inference}
We perform final training using the augmented datasets comprising the label-flipped contaminated samples $\mathcal{D}_{con}^{\prime}$, the normal reference samples $\mathcal{D}_{ref}$, the remaining clean samples $\mathcal{D}_{clean}$, and the perturbed feature set $\mathcal{W}_{per}^{\prime}$ representing unseen anomalies. In addition to the existing parameters $\theta = \{\theta_{\phi}, \theta_{h}\}$, we introduce a dedicated unseen abnormality learning head $h^{\prime}(\cdot, \theta_{{h}^{\prime}})$. The final training objective consists of two complementary loss functions.

\textbf{Seen Abnormality Learning} focuses on learning from normal samples and identified anomalies (including originally labeled ones and discovered contaminated samples). We aggregate these samples into $\mathcal{D}_{s} =\mathcal{D}_{con}^{\prime} \cup \mathcal{D}_{ref} \cup \mathcal{D}_{clean}$:
\begin{equation}
	L_{seen} = \sum_{\bm{z}_{i}\in \mathcal{D}_{s}}L(\bm{z}_i,\theta).
\end{equation}
\textbf{Unseen Abnormality Learning} aims to generalize the model's detection capability to previously unseen anomaly classes. We utilize the most helpful normal samples $\mathcal{D}_{h} = \mathcal{D}_{ref} \cup (\mathcal{D}_n\cap\mathcal{D}_{clean})\setminus \mathcal{D}_{per}$ and the perturbed features $\mathcal{W}_{per}^{\prime}$ that embody unseen abnormal characteristics:
\begin{equation}
	L_{unseen} = \sum_{\bm{z}_{i}\in \mathcal{D}_{h}}L(\bm{z}_i,\theta) + \sum_{\bm{w}_{i\zeta_{i}\mathbf{1}}\in \mathcal{W}_{per}^{\prime}}L(\bm{w}_{i\zeta_{i}\mathbf{1}},\theta_{{h}^{\prime}}).
\end{equation}
The complete training objective combines both components to jointly optimize the model parameters:
\begin{equation}
	\label{loss_final}
	L_{re} = L_{seen} + \lambda L_{unseen},
\end{equation}
where $\lambda$ denotes a weight adapting coefficient.

During inference, we compute the final anomaly score for a test sample $\bm{x}_i$ by combining two scoring functions.

\textbf{Maximum Abnormality Score.} We first combine predictions from the seen and unseen abnormality learning heads:
\begin{equation}
	s_{m} = \mathop{{\max}}_{l<r}\big(h(\phi(\bm{x}_i,\theta_{\phi}), \theta_{h}) + h^{\prime}(\phi(\bm{x}_i,\theta_{\phi}), \theta_{{h}^{\prime}})\big)_l.
\end{equation}
The maximum operation across the $r$ channels ensures that if any channel indicates strong abnormality, the sample will receive a high anomaly score, consistent with our multi-perspective anomaly assessment philosophy.

\textbf{Feature Deviation Score.} To detect subtle anomalies, we calculate the deviation of the test sample's features from the normal feature distribution, as represented by the reference normal samples:
\begin{equation}
	s_{f} = \Vert \phi(\bm{x}_i,\theta_{\phi}) - \frac{1}{|\mathcal{D}_{ref}|}\sum_{\bm{x}_{j}\in \mathcal{D}_{ref}}\phi(\bm{x}_j,\theta_{\phi}) \Vert^{2}.
\end{equation}

The final anomaly score is obtained by combining both components: $s = s_{m} + s_{f}$.

\section{Experiments}

\begin{table*}[htbp]
	\centering
	\caption{Detection accuracy (AUC $\pm$ standard deviation) of IMPACT and its competing methods under the unsupervised and open-set general settings. All results are in \%. The best ones are $\textbf{boldfaced}$ and the second best are $\underline{\text{underlined}}$.}
	\scalebox{0.89}{
		\begin{tabular}{clcccccccc}
			\toprule
			& Methods & UCR   & ASD   & PSM   & SMD  & CT  & SAD   & PTBXL & TUSZ \\
			\midrule
			\multirow{7}[2]{*}{\rotatebox{90}{Unsupervised}}
			& TCN-AE & $\text{50.63}_{\pm\text{0.46}}$ & $\text{58.73}_{\pm\text{0.49}}$ & $\text{58.49}_{\pm\text{0.79}}$ & $\text{64.76}_{\pm\text{0.19}}$ & $\text{50.02}_{\pm\text{0.05}}$ & $\text{50.01}_{\pm\text{0.02}}$ & $\text{58.74}_{\pm\text{0.28}}$ &  $\text{64.17}_{\pm\text{0.06}}$ \\
			& THOC  & $\text{52.75}_{\pm\text{0.56}}$ & $\text{55.74}_{\pm\text{1.57}}$ & $\text{58.02}_{\pm\text{1.51}}$ & $\text{62.55}_{\pm\text{0.47}}$ & $\text{74.50}_{\pm\text{3.16}}$ & $\text{57.76}_{\pm\text{3.14}}$ & $\text{62.89}_{\pm\text{1.97}}$ &  $\text{62.45}_{\pm\text{1.92}}$ \\
			& TranAD & $\text{49.26}_{\pm\text{0.73}}$ & $\text{56.86}_{\pm\text{0.68}}$ & $\text{60.34}_{\pm\text{0.40}}$ & $\text{64.40}_{\pm\text{0.25}}$ & $\text{29.28}_{\pm\text{3.12}}$ & $\text{49.10}_{\pm\text{1.09}}$ & $\text{58.51}_{\pm\text{1.18}}$ &  $\text{62.82}_{\pm\text{0.73}}$ \\
			& DCdetector & $\text{51.40}_{\pm\text{1.43}}$ & $\text{47.94}_{\pm\text{1.05}}$ & $\text{50.51}_{\pm\text{0.72}}$ & $\text{50.23}_{\pm\text{0.85}}$ & $\text{49.20}_{\pm\text{1.80}}$ & $\text{49.02}_{\pm\text{1.15}}$ & $\text{51.28}_{\pm\text{0.78}}$ &  $\text{50.55}_{\pm\text{1.10}}$ \\
			& GPT4TS & $\text{54.60}_{\pm\text{0.87}}$ & $\text{62.42}_{\pm\text{2.06}}$ & $\text{61.14}_{\pm\text{0.63}}$ & $\text{66.20}_{\pm\text{0.31}}$ & $\text{60.32}_{\pm\text{0.80}}$ & $\text{48.09}_{\pm\text{3.59}}$ & $\text{47.82}_{\pm\text{0.78}}$ &  $\text{66.31}_{\pm\text{0.31}}$ \\
			& COUTA  & $\text{53.44}_{\pm\text{1.14}}$ & $\text{61.74}_{\pm\text{2.67}}$ & $\text{58.29}_{\pm\text{3.90}}$ & $\text{60.93}_{\pm\text{1.36}}$ & $\text{51.09}_{\pm\text{0.00}}$ & $\text{50.00}_{\pm\text{0.00}}$ & $\text{56.71}_{\pm\text{1.86}}$ &  $\text{62.45}_{\pm\text{0.89}}$ \\
			& DADA  & $\text{50.06}_{\pm\text{0.42}}$ & $\text{50.67}_{\pm\text{0.09}}$ & $\text{54.27}_{\pm\text{0.06}}$ & $\text{64.45}_{\pm\text{0.07}}$ & $\text{56.08}_{\pm\text{0.12}}$ & $\text{49.98}_{\pm\text{0.90}}$ & $\text{63.22}_{\pm\text{0.21}}$ &  $\text{44.20}_{\pm\text{0.01}}$ \\
			\midrule
			\multirow{8}[2]{*}{\rotatebox{90}{Open-Set}}
			& DevNet & $\text{49.82}_{\pm\text{0.36}}$ & $\text{49.41}_{\pm\text{3.41}}$ & $\text{49.88}_{\pm\text{2.20}}$ & $\text{49.62}_{\pm\text{1.37}}$ & $\text{49.63}_{\pm\text{0.65}}$ & $\text{50.00}_{\pm\text{0.01}}$ & $\text{50.00}_{\pm\text{0.00}}$ &  $\text{49.63}_{\pm\text{0.80}}$ \\
			& DSAD & $\text{51.99}_{\pm\text{1.16}}$ & $\text{62.56}_{\pm\text{1.27}}$ & $\text{60.93}_{\pm\text{1.79}}$ & $\underline{\text{72.75}}_{\pm\text{0.84}}$ & $\text{87.77}_{\pm\text{0.43}}$ & $\text{62.42}_{\pm\text{1.27}}$ & $\text{62.49}_{\pm\text{2.29}}$ &  $\text{68.60}_{\pm\text{2.69}}$ \\
			& NCAD & $\text{55.02}_{\pm\text{0.15}}$ & $\text{61.71}_{\pm\text{0.89}}$ & $\text{60.05}_{\pm\text{2.03}}$ & $\text{64.39}_{\pm\text{0.72}}$ & $\text{64.30}_{\pm\text{1.13}}$ & $\text{50.07}_{\pm\text{1.88}}$ & $\text{45.93}_{\pm\text{0.69}}$ &  $\text{69.80}_{\pm\text{1.02}}$ \\
			& DRA  & $\text{54.38}_{\pm\text{0.82}}$ & $\text{57.26}_{\pm\text{1.52}}$ & $\text{60.29}_{\pm\text{2.37}}$ & $\text{59.31}_{\pm\text{2.50}}$ & $\underline{\text{88.73}}_{\pm\text{0.27}}$ & $\text{40.57}_{\pm\text{2.79}}$ & $\text{63.33}_{\pm\text{2.45}}$ &  $\text{72.48}_{\pm\text{1.65}}$ \\
			& MOSAD  & $\text{54.76}_{\pm\text{0.09}}$ & $\text{57.84}_{\pm\text{0.25}}$ & $\underline{\text{61.58}}_{\pm\text{0.18}}$ & $\text{71.20}_{\pm\text{0.38}}$ & $\text{80.42}_{\pm\text{1.56}}$ & $\text{53.14}_{\pm\text{0.87}}$ & $\text{62.74}_{\pm\text{1.72}}$ & $\text{69.80}_{\pm\text{2.83}}$ \\
			& InvAD  & $\underline{\text{55.64}}_{\pm\text{0.71}}$ & $\text{61.32}_{\pm\text{5.44}}$ & $\text{61.34}_{\pm\text{0.88}}$ & $\text{71.24}_{\pm\text{0.86}}$ & $\text{75.71}_{\pm\text{0.43}}$ & $\text{52.88}_{\pm\text{1.71}}$ & $\text{50.20}_{\pm\text{3.15}}$ &  $\text{61.63}_{\pm\text{4.27}}$ \\
			& WSAD-DT & $\text{51.47}_{\pm\text{0.48}}$ & $\underline{\text{63.76}}_{\pm\text{1.22}}$ & $\text{60.76}_{\pm\text{1.58}}$ & $\text{70.60}_{\pm\text{0.80}}$ & $\text{85.18}_{\pm\text{1.22}}$ & $\underline{\text{64.42}}_{\pm\text{1.30}}$ & $\underline{\text{65.83}}_{\pm\text{3.04}}$ &  $\underline{\text{78.70}}_{\pm\text{0.98}}$ \\
			\cmidrule(lr{0pt}){2-10}
			& IMPACT  & $\textbf{59.21}_{\pm\text{1.34}}$ & $\textbf{65.76}_{\pm\text{2.17}}$ & $\textbf{64.24}_{\pm\text{1.67}}$ & $\textbf{75.97}_{\pm\text{3.03}}$ & $\textbf{91.96}_{\pm\text{2.32}}$ & $\textbf{68.13}_{\pm\text{2.59}}$ & $\textbf{66.99}_{\pm\text{1.95}}$ &  $\textbf{82.91}_{\pm\text{0.74}}$ \\
			\bottomrule
		\end{tabular}%
	}
	\label{tab:general}%
\end{table*}%

\subsection{Experimental Setup}

\textbf{Datasets.}
To assess the effectiveness of our method, comprehensive experiments are conducted across eight datasets, including four datasets with single anomaly class (UCR, ASD, PSM, SMD) and four datasets with multiple anomaly classes (CT, SAD, PTBXL, TUSZ). These datasets are commonly used in recent literature \cite{xu2024calibrated,lai2024open,liu2025detecting,shentu2025towards}. To verify the robustness, the training data of each dataset was contaminated by 2\% anomalies. Details are included in \cref{app:datasets}.

\textbf{Baselines.}
We compare our method against state-of-the-art baselines under two primary experimental settings: \textit{unsupervised setting} and \textit{open-set setting}. For \textit{unsupervised setting}, where assuming only normal samples are available in the training phase, seven unsupervised methods are included: TCN-AE \cite{garg2021evaluation}, THOC \cite{shen2020timeseries}, TranAD \cite{tuli2022tranad}, DCdetector \cite{yang2023dcdetector}, GPT4TS \cite{zhou2023one}, COUTA \cite{xu2024calibrated}, and DADA \cite{shentu2025towards}. For \textit{open-set setting}, we follow the previous baselines \cite{ding2022catching,lai2024open} to adopt two protocols for sampling, including general setting and hard setting. The general setting simulates a scenario where labeled anomalies are randomly sampled from all possible anomaly classes, while the hard setting samples labeled anomalies from a single class to evaluate generalization to unseen anomaly classes.
The datasets CT, SAD, PTBXL and TUSZ are utilized for the hard setting, as they contain multiple distinct anomaly classes. Given the rarity of anomalies, both settings utilize only a limited number of labeled anomaly samples, fixed at ten per class.
Meanwhile, our method is compared with seven closely related baselines, including DevNet \cite{pang2019deep}, DSAD \cite{ruffdeep}, NCAD \cite{carmona2022neural}, DRA \cite{ding2022catching}, MOSAD \cite{lai2024open}, InvAD \cite{liu2025detecting}, and WSAD-DT \cite{duraniweakly}. DevNet, DRA, MOSAD, InvAD and WSAD-DT are specifically designed for open-set environment, while DSAD and NCAD are supervised anomaly detectors.

\textbf{Evaluation Metrics.}
The performance across all methods and settings is evaluated using the Area Under the Receiver Operating Characteristic Curve (AUC), a widely adopted metric for anomaly detection tasks. All reported results are averaged over five independent runs.
\subsection{Experimental Results}

\textbf{General Setting.}
\cref{tab:general} illustrates the detection performance under both unsupervised and open-set general settings. Our method consistently outperforms all state-of-the-art approaches in AUC across the eight datasets from diverse application domains. 
Our method averagely obtains 6.37\%-44.52\% AUC improvement over all contenders. Particularly impressive is our method's performance on complex real-world datasets like CT (91.96\%) and TUSZ (82.91\%), where it substantially outperforms the second-best methods by large margins.
While access to limited seen anomalies during training offers open-set methods an advantage over unsupervised baselines, this same factor causes detectors tend to overfit the limited seen anomalies and lack the ability to resist anomaly contamination. Notably, the superiority of our method demonstrates that our test-risk-driven influence scoring and label flipping strategies can effectively eliminate the impact of potential contamination and further expand the utilization of the known abnormal knowledge.

\textbf{Hard Setting.}
\cref{tab:hard} shows the performance comparison under the hard setting. Our method demonstrates remarkable generalization capability, averagely achieving 10.52\%-42.83\% AUC improvement over seven competitors.
The superior performance underscores the effectiveness of our unseen abnormality learning. By leveraging influence scores to identify samples for feature perturbation and incorporating a dedicated unseen abnormality learning head, our method successfully bridges the gap between seen and unseen anomalies, achieving robust detection performance even when faced with completely novel anomaly classes.
\begin{table*}[htbp]
	\centering
	\caption{Detection accuracy (AUC $\pm$ standard deviation) of IMPACT and its competing methods under the open-set hard settings. All results are in \%. The best ones are $\textbf{boldfaced}$ and the second best are $\underline{\text{underlined}}$.}
	\scalebox{0.80}{
	\begin{tabular}{l|c||cccccccc}
		\toprule
		\textbf{Dataset} & \multicolumn{1}{l||}{seen anomaly} & DevNet & DSAD & NCAD  & DRA   & MOSAD & InvAD & WSAD-DT & IMPACT \\
		\midrule
		\multirow{6}[3]{*}{CT} 
		& Letter-G & $\text{49.54}_{\pm\text{1.03}}$ & $\text{73.56}_{\pm\text{3.41}}$ & $\text{63.27}_{\pm\text{0.65}}$ & $\text{68.79}_{\pm\text{3.43}}$ & $\underline{\text{75.64}}_{\pm\text{1.50}}$ & $\text{75.43}_{\pm\text{0.52}}$ & $\text{68.83}_{\pm\text{1.77}}$ & $\textbf{80.90}_{\pm\text{1.03}}$ \\
		& Letter-M & $\text{50.75}_{\pm\text{1.52}}$ & $\underline{\text{75.67}}_{\pm\text{1.41}}$ & $\text{64.28}_{\pm\text{0.58}}$ & $\text{74.93}_{\pm\text{2.22}}$ & $\text{75.15}_{\pm\text{0.69}}$ & $\text{75.60}_{\pm\text{0.89}}$ & $\text{72.51}_{\pm\text{2.35}}$ & $\textbf{81.65}_{\pm\text{0.38}}$ \\
		& Letter-Q & $\text{50.91}_{\pm\text{1.68}}$ & $\text{74.46}_{\pm\text{1.16}}$ & $\text{64.52}_{\pm\text{0.77}}$ & $\text{71.85}_{\pm\text{3.11}}$ & $\underline{\text{76.37}}_{\pm\text{0.84}}$ & $\text{75.01}_{\pm\text{0.77}}$ & $\text{74.01}_{\pm\text{2.17}}$ & $\textbf{82.50}_{\pm\text{0.37}}$ \\
		& Letter-W & $\text{51.40}_{\pm\text{3.83}}$ & $\text{74.23}_{\pm\text{3.22}}$ & $\text{63.71}_{\pm\text{0.39}}$ & $\text{73.02}_{\pm\text{1.57}}$ & $\text{75.75}_{\pm\text{0.48}}$ & $\text{75.74}_{\pm\text{0.62}}$ & $\underline{\text{76.20}}_{\pm\text{1.07}}$ & $\textbf{80.87}_{\pm\text{1.05}}$ \\
		& Letter-Z & $\text{49.54}_{\pm\text{1.05}}$ & $\underline{\text{79.46}}_{\pm\text{2.10}}$ & $\text{64.22}_{\pm\text{0.35}}$ & $\text{75.11}_{\pm\text{1.38}}$ & $\text{78.70}_{\pm\text{0.81}}$ & $\text{75.33}_{\pm\text{0.67}}$ & $\text{74.94}_{\pm\text{1.36}}$ & $\textbf{83.58}_{\pm\text{0.97}}$ \\
		\cmidrule{2-10}          
		& Mean  & $\text{50.43}_{\pm\text{0.76}}$  & $\text{75.48}_{\pm\text{2.11}}$ & $\text{64.00}_{\pm\text{0.45}}$  & $\text{72.74}_{\pm\text{2.32}}$  & $\underline{\text{76.32}}_{\pm\text{1.25}}$  & $\text{75.42}_{\pm\text{0.25}}$  & $\text{73.30}_{\pm\text{2.54}}$ & $\textbf{81.90}_{\pm\text{1.03}}$  \\
		\midrule
		\multirow{5}[2]{*}{SAD} 
		& Digit-Zero & $\text{50.41}_{\pm\text{1.42}}$ & $\text{57.49}_{\pm\text{2.01}}$ & $\text{45.84}_{\pm\text{1.91}}$ & $\text{48.04}_{\pm\text{2.02}}$ & $\text{52.95}_{\pm\text{0.72}}$ & $\text{50.66}_{\pm\text{2.99}}$ & $\underline{\text{60.53}}_{\pm\text{2.06}}$ & $\textbf{62.85}_{\pm\text{4.79}}$ \\
		& Digit-Three & $\text{50.00}_{\pm\text{0.00}}$ & $\text{60.07}_{\pm\text{3.61}}$ & $\text{47.69}_{\pm\text{1.40}}$ & $\text{46.90}_{\pm\text{1.94}}$ & $\text{52.94}_{\pm\text{0.28}}$ & $\text{49.84}_{\pm\text{1.88}}$ & $\textbf{70.26}_{\pm\text{1.17}}$ & $\underline{\text{67.34}}_{\pm\text{3.66}}$ \\
		& Digit-Seven & $\text{50.00}_{\pm\text{0.00}}$ & $\text{60.90}_{\pm\text{2.01}}$ & $\text{47.06}_{\pm\text{1.58}}$ & $\text{41.87}_{\pm\text{1.92}}$ & $\text{52.68}_{\pm\text{0.96}}$ & $\text{49.93}_{\pm\text{2.39}}$ & $\underline{\text{63.44}}_{\pm\text{1.09}}$ & $\textbf{70.71}_{\pm\text{2.02}}$ \\
		& Digit-Eight & $\text{50.00}_{\pm\text{0.00}}$ & $\text{60.82}_{\pm\text{2.87}}$ & $\text{47.57}_{\pm\text{0.80}}$ & $\text{46.77}_{\pm\text{3.15}}$ & $\text{53.16}_{\pm\text{0.88}}$ & $\text{51.44}_{\pm\text{1.41}}$ & $\underline{\text{62.36}}_{\pm\text{1.87}}$ & $\textbf{63.86}_{\pm\text{1.05}}$ \\
		\cmidrule{2-10}          
		& Mean  & $\text{50.10}_{\pm\text{0.18}}$  & $\text{59.82}_{\pm\text{1.38}}$  & $\text{47.04}_{\pm\text{0.73}}$  & $\text{45.90}_{\pm\text{2.38}}$  & $\text{52.93}_{\pm\text{0.17}}$  & $\text{50.47}_{\pm\text{0.65}}$  & $\underline{\text{64.15}}_{\pm\text{3.68}}$ & $\textbf{66.19}_{\pm\text{3.10}}$  \\
		\midrule
		\multirow{5}[2]{*}{PTBXL} 
		& MI    & $\text{50.00}_{\pm\text{0.00}}$ & $\text{58.69}_{\pm\text{2.46}}$ & $\text{46.32}_{\pm\text{0.64}}$ & $\text{60.33}_{\pm\text{2.84}}$ & $\text{59.61}_{\pm\text{4.73}}$ & $\text{52.95}_{\pm\text{4.88}}$ & $\underline{\text{65.27}}_{\pm\text{1.45}}$ & $\textbf{65.80}_{\pm\text{3.95}}$ \\
		& STTC  & $\text{50.00}_{\pm\text{0.00}}$ & $\text{50.60}_{\pm\text{4.09}}$ & $\text{46.51}_{\pm\text{1.29}}$ & $\text{51.68}_{\pm\text{2.15}}$ & $\text{53.36}_{\pm\text{4.81}}$ & $\text{54.00}_{\pm\text{5.18}}$ & $\underline{\text{56.42}}_{\pm\text{2.61}}$ & $\textbf{59.35}_{\pm\text{3.89}}$ \\
		& CD    & $\text{50.00}_{\pm\text{0.00}}$ & $\text{54.20}_{\pm\text{3.53}}$ & $\text{46.25}_{\pm\text{0.91}}$ & $\text{52.36}_{\pm\text{2.87}}$ & $\text{52.72}_{\pm\text{3.63}}$ & $\text{51.44}_{\pm\text{7.09}}$ & $\textbf{64.51}_{\pm\text{0.82}}$ & $\underline{\text{61.73}}_{\pm\text{5.17}}$ \\
		& HYP   & $\text{50.00}_{\pm\text{0.00}}$ & $\text{58.32}_{\pm\text{3.78}}$ & $\text{46.11}_{\pm\text{0.76}}$ & $\underline{\text{60.32}}_{\pm\text{1.67}}$ & $\text{60.17}_{\pm\text{2.38}}$ & $\text{52.76}_{\pm\text{6.70}}$ & $\text{60.02}_{\pm\text{2.48}}$ & $\textbf{63.02}_{\pm\text{3.32}}$ \\
		\cmidrule{2-10}          
		& Mean  & $\text{50.00}_{\pm\text{0.00}}$  & $\text{55.45}_{\pm\text{3.31}}$  & $\text{46.30}_{\pm\text{0.14}}$  & $\text{56.17}_{\pm\text{4.16}}$  & $\text{56.47}_{\pm\text{3.44}}$  & $\text{52.79}_{\pm\text{0.91}}$  & $\underline{\text{61.56}}_{\pm\text{3.58}}$ & $\textbf{62.48}_{\pm\text{2.33}}$  \\
		\midrule
		\multirow{4}[2]{*}{TUSZ} 
		& FNSZ  & $\text{49.62}_{\pm\text{0.67}}$ & $\underline{\text{75.00}}_{\pm\text{0.77}}$ & $\text{69.94}_{\pm\text{2.28}}$ & $\text{63.61}_{\pm\text{2.01}}$ & $\text{69.87}_{\pm\text{2.75}}$ & $\text{57.15}_{\pm\text{8.60}}$ & $\text{64.79}_{\pm\text{2.00}}$ & $\textbf{76.27}_{\pm\text{1.65}}$ \\
		& GNSZ  & $\text{50.04}_{\pm\text{0.06}}$ & $\text{53.64}_{\pm\text{2.36}}$ & $\underline{\text{69.59}}_{\pm\text{2.29}}$ & $\text{53.00}_{\pm\text{3.35}}$ & $\text{66.65}_{\pm\text{2.15}}$ & $\text{54.99}_{\pm\text{8.50}}$ & $\text{58.60}_{\pm\text{1.71}}$ & $\textbf{74.53}_{\pm\text{2.27}}$ \\
		& CPSZ  & $\text{50.04}_{\pm\text{0.06}}$ & $\underline{\text{72.54}}_{\pm\text{1.22}}$ & $\text{69.54}_{\pm\text{2.30}}$ & $\text{56.62}_{\pm\text{3.82}}$ & $\text{69.71}_{\pm\text{2.49}}$ & $\text{56.04}_{\pm\text{8.98}}$ & $\text{56.65}_{\pm\text{1.72}}$ & $\textbf{76.30}_{\pm\text{3.17}}$ \\
		\cmidrule{2-10}          
		& Mean  & $\text{49.90}_{\pm\text{0.20}}$  & $\text{67.06}_{\pm\text{9.54}}$ & $\underline{\text{69.69}}_{\pm\text{0.18}}$  & $\text{57.74}_{\pm\text{4.40}}$  & $\text{68.74}_{\pm\text{1.48}}$  & $\text{56.06}_{\pm\text{0.88}}$  & $\text{60.01}_{\pm\text{3.47}}$ & $\textbf{75.70}_{\pm\text{0.83}}$  \\
		\bottomrule
	\end{tabular}%
	}
	\label{tab:hard}%
\end{table*}%
\begin{table*}[t]
	\centering
	\caption{
		Ablation study results (as \%) of ours and its variants under both general and hard settings. The best results are $\textbf{boldfaced}$.
	}
	\setlength{\tabcolsep}{3pt}
	\scalebox{0.85}{
		\begin{tabular}{lccccccccc}
			\toprule
			Dataset & Ours &
			\multicolumn{1}{c}{\textbf{w/o}~\text{$\mathcal{D}_{con}^{\prime}$}} &
			\multicolumn{1}{c}{\textbf{w/o}~\text{$L_{seen}$}} & \multicolumn{1}{c}{\textbf{w/o}~\text{$L_{unseen}$}} & \multicolumn{1}{c}{\textbf{w/o}~\text{$s_{f}$}} &
            \multicolumn{1}{c}{\textbf{w/}~\text{$\mathcal{D}_{con}$}} &
			\multicolumn{1}{c}{\textbf{w/}~\text{rnd $\mathcal{D}_{ref}$}} & \multicolumn{1}{c}{\textbf{w/}~\text{rnd $\mathcal{D}_{con}^{\prime}$}} & \multicolumn{1}{c}{\textbf{w/}~\text{rnd $\mathcal{W}_{per}^{\prime}$}}\\
			\midrule
			\multicolumn{10}{c}{\textbf{General Setting}} \\
			\midrule
			UCR & $\text{59.21}_{\pm\text{1.34}}$ & $\text{58.95}_{\pm\text{1.86}}$ & $\text{50.14}_{\pm\text{4.26}}$ & $\textbf{59.27}_{\pm\text{1.55}}$ & $\text{54.25}_{\pm\text{2.24}}$ &
            $\text{55.74}_{\pm\text{1.21}}$ & $\text{52.12}_{\pm\text{3.14}}$ & $\text{53.47}_{\pm\text{4.22}}$ & $\text{53.63}_{\pm\text{3.05}}$\\
			ASD & $\textbf{65.76}_{\pm\text{2.17}}$ & $\text{64.34}_{\pm\text{2.97}}$ & $\text{57.39}_{\pm\text{3.25}}$ & $\text{65.60}_{\pm\text{2.73}}$ & $\text{64.67}_{\pm\text{4.13}}$ &
            $\text{60.23}_{\pm\text{1.44}}$ & $\text{58.43}_{\pm\text{3.56}}$ & $\text{57.89}_{\pm\text{2.54}}$ & $\text{58.14}_{\pm\text{2.41}}$\\
			PSM & $\textbf{64.24}_{\pm\text{1.67}}$ & $\text{63.37}_{\pm\text{2.37}}$ & $\text{61.67}_{\pm\text{2.49}}$ & $\text{63.47}_{\pm\text{2.35}}$ & $\text{62.38}_{\pm\text{3.43}}$ &
            $\text{62.44}_{\pm\text{0.96}}$ & $\text{60.13}_{\pm\text{3.76}}$ & $\text{59.82}_{\pm\text{2.35}}$ & $\text{60.47}_{\pm\text{1.97}}$\\
			SMD & $\textbf{75.97}_{\pm\text{3.03}}$ & $\text{74.07}_{\pm\text{2.80}}$ & $\text{72.28}_{\pm\text{2.24}}$ & $\text{75.63}_{\pm\text{1.39}}$ & $\text{69.02}_{\pm\text{3.34}}$ &
            $\text{73.13}_{\pm\text{1.25}}$ & $\text{67.49}_{\pm\text{3.44}}$ & $\text{69.34}_{\pm\text{1.31}}$ & $\text{71.82}_{\pm\text{2.22}}$\\
			CT  & $\text{91.96}_{\pm\text{2.32}}$ & $\text{91.81}_{\pm\text{2.05}}$ & $\text{90.24}_{\pm\text{2.78}}$ & $\textbf{92.16}_{\pm\text{1.84}}$ & $\text{91.32}_{\pm\text{2.12}}$ &
            $\text{90.42}_{\pm\text{1.51}}$ & $\text{87.75}_{\pm\text{2.76}}$ & $\text{86.13}_{\pm\text{1.36}}$ & $\text{87.54}_{\pm\text{2.41}}$\\
			SAD & $\textbf{68.13}_{\pm\text{2.59}}$ & $\text{67.56}_{\pm\text{3.62}}$ & $\text{66.88}_{\pm\text{3.34}}$ & $\text{67.95}_{\pm\text{4.29}}$ & $\text{68.04}_{\pm\text{3.62}}$ &
            $\text{66.94}_{\pm\text{2.07}}$ & $\text{65.43}_{\pm\text{2.86}}$ & $\text{62.27}_{\pm\text{1.37}}$ & $\text{63.89}_{\pm\text{2.26}}$\\
			PTBXL & $\text{66.99}_{\pm\text{1.95}}$ & $\text{65.49}_{\pm\text{2.39}}$ & $\text{63.41}_{\pm\text{2.68}}$ & $\textbf{67.11}_{\pm\text{2.20}}$ & $\text{65.13}_{\pm\text{1.93}}$ &
            $\text{63.55}_{\pm\text{1.46}}$ & $\text{62.31}_{\pm\text{3.77}}$ & $\text{61.58}_{\pm\text{1.09}}$ & $\text{62.12}_{\pm\text{2.46}}$\\
			TUSZ & $\textbf{82.91}_{\pm\text{0.74}}$ & $\text{82.62}_{\pm\text{0.88}}$ & $\text{70.91}_{\pm\text{1.77}}$ & $\text{82.63}_{\pm\text{0.87}}$ & $\text{74.18}_{\pm\text{6.47}}$ &
            $\text{71.37}_{\pm\text{3.53}}$ & $\text{69.76}_{\pm\text{2.76}}$ & $\text{72.43}_{\pm\text{1.47}}$ & $\text{74.68}_{\pm\text{2.58}}$\\
			
			\midrule
			\multicolumn{10}{c}{\textbf{Hard Setting}} \\
			\midrule
			
			$\text{CT}{\text{\scriptsize (mean)}}$  & $\textbf{81.90}_{\pm\text{1.03}}$ & $\text{78.95}_{\pm\text{1.29}}$ & $\text{77.32}_{\pm\text{2.08}}$ & $\text{76.34}_{\pm\text{1.39}}$ & $\text{79.56}_{\pm\text{1.53}}$ &
            $\text{77.68}_{\pm\text{1.74}}$ & $\text{74.22}_{\pm\text{3.11}}$ & $\text{76.23}_{\pm\text{2.31}}$ & $\text{77.12}_{\pm\text{2.11}}$\\
			$\text{SAD}{\text{\scriptsize (mean)}}$ & $\textbf{66.19}_{\pm\text{3.10}}$ & $\text{65.66}_{\pm\text{3.04}}$ & $\text{64.32}_{\pm\text{2.87}}$ & $\text{60.09}_{\pm\text{2.73}}$ & $\text{63.60}_{\pm\text{1.55}}$ &
            $\text{64.93}_{\pm\text{2.16}}$ & $\text{59.74}_{\pm\text{2.93}}$ & $\text{63.23}_{\pm\text{1.32}}$ & $\text{63.46}_{\pm\text{3.31}}$ \\
			$\text{PTBXL}{\text{\scriptsize (mean)}}$ & $\textbf{62.48}_{\pm\text{2.33}}$ & $\text{61.16}_{\pm\text{2.37}}$ & $\text{59.37}_{\pm\text{2.16}}$ & $\text{53.00}_{\pm\text{1.44}}$ & $\text{56.93}_{\pm\text{2.43}}$ &
            $\text{60.23}_{\pm\text{2.46}}$ & $\text{51.56}_{\pm\text{2.13}}$ & $\text{56.88}_{\pm\text{2.75}}$ & $\text{56.41}_{\pm\text{2.43}}$ \\
			$\text{TUSZ}{\text{\scriptsize (mean)}}$ & $\textbf{75.70}_{\pm\text{0.83}}$ & $\text{72.64}_{\pm\text{1.55}}$ & $\text{70.11}_{\pm\text{0.78}}$ & $\text{60.92}_{\pm\text{1.27}}$ & $\text{66.61}_{\pm\text{0.72}}$ &
            $\text{70.84}_{\pm\text{1.21}}$ & $\text{59.78}_{\pm\text{1.26}}$ & $\text{63.57}_{\pm\text{1.68}}$ & $\text{62.87}_{\pm\text{1.74}}$ \\
			\bottomrule
		\end{tabular}
	}%
	\label{tab:ablation}%
\end{table*}%
\subsection{Ablation Study}
This experiment first validates the effectiveness of label flipping, the importance of each abnormality learning, and the role of feature deviation in anomaly scoring. Based on AUC results reported in \cref{tab:ablation}, It is worth noting that removing the label-flipped contaminated samples (\textbf{w/o}~\text{$\mathcal{D}_{con}^{\prime}$}) consistently leads to performance degradation across all datasets, which indicates that correcting contaminated samples is crucial for enhancing model robustness. The absence of seen abnormality learning (\textbf{w/o}~\text{$L_{seen}$}) results in the most significant performance drop, particularly in the general setting (\eg, TUSZ declines from 82.91\% to 70.91\%), underscoring that utilizing known anomaly patterns can substantially enhance discriminative power. While removing unseen abnormality learning (\textbf{w/o}~\text{$L_{unseen}$}) shows less impact in the general setting as all anomaly classes are provided, its omission severely harms performance under the hard setting (\eg, TUSZ drops to 60.92\%), confirming that capturing unknown abnormality is essential when anomalies exhibit high diversity. Lastly, ablating the feature deviation score (\textbf{w/o}~\text{$s_{f}$}) leads to noticeable deterioration, validating that feature deviation provides complementary information beyond the abnormality score.

To further validate that the improvements stem from the influence-guided design, we compare against three randomized alternatives and one variant that directly uses unflipped contaminated samples. Replacing the influence-guided reference set with a randomly selected one (\textbf{w/}~\text{rnd $\mathcal{D}_{ref}$}) causes substantial performance degradation across all datasets, which demonstrates that the quality of $\mathcal{D}_{ref}$ is critical. Similarly, replacing influence-guided label flipping with random label flipping (\textbf{w/}~\text{rnd $\mathcal{D}_{con}^{\prime}$}) consistently degrades performance, as random flipping indiscriminately corrupts genuinely normal samples alongside true contaminants, introducing additional label noise. Replacing influence-guided feature perturbation with random Gaussian perturbation (\textbf{w/}~\text{rnd $\mathcal{W}_{per}^{\prime}$}) also leads to consistent drops (\eg, TUSZ declines from 75.70\% to 62.87\% in the hard setting), confirming that random perturbations fail to target risk-increasing directions in the feature space and thus produce low-quality pseudo anomalies that do not effectively simulate unseen anomaly distributions. Finally, directly incorporating the original unflipped contaminated samples (\textbf{w/}~\text{$\mathcal{D}_{con}$}) demonstrates that contamination severely misleads the model. Collectively, these results confirm that the influence-guided design is indispensable, and naive random alternatives consistently fail to replicate its benefits.
\subsection{Sensitivity Analysis}
To further investigate the impact of unseen abnormality learning, we conduct an analysis of two key hyperparameters in IMPACT under the hard setting, the weight adapting coefficient ($\lambda$) and the number of perturbed samples ($k$). The results are shown in \cref{fig:sensitivity}. We observe that the detection performance is improved and tends to be stable with the increase of $\lambda$, which indicates the introduced unseen abnormality learning can contribute to stronger generalization under challenging open-set conditions. Notably, increasing $k$ does not always result in improved performance. This is mainly because that only a fraction of normal samples in each batch exhibit transformation compatibility for generating unseen anomalies. When an excessive number of normal samples undergo perturbation, the intrinsic normality within certain samples would be forcibly corrupted, consequently inducing learning biases during model optimization. The results indicate that $\lambda=1.0$ and $k=5$ are sufficient for effective anomaly generalization, which are the default settings for IMPACT throughout comprehensive experiments.
\begin{figure}[t]
	\centering
	\includegraphics[width=0.95\columnwidth]{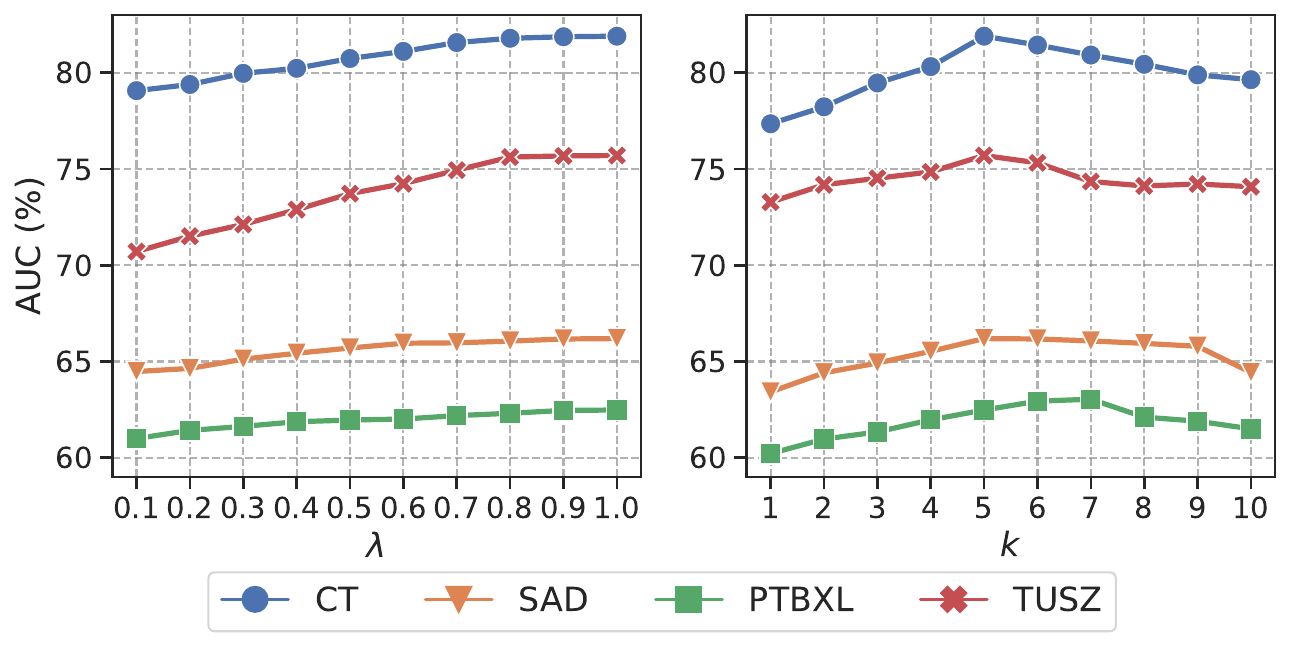}
	\caption{
		Sensitivity analysis results (AUC performance w.r.t. different hyperparameters) under the hard setting. 
	}
	\label{fig:sensitivity}
\end{figure}
\section{Conclusion}
We propose IMPACT, a robust framework for open-set TSAD that effectively tackles the dual challenges of anomaly contamination and generation to unseen anomalies. By integrating influence functions into the anomaly detection pipeline, we repurpose contaminated samples into valuable supervised knowledge through label flipping. Furthermore, we propose feature perturbation to generate high-quality pseudo anomalies, thereby effectively detecting unseen anomaly patterns. Extensive experiments demonstrate that IMPACT outperforms existing state-of-the-art methods.
\section*{Acknowledgements}

This work was supported in part by the National Science and Technology Major Project (Grant No.2022ZD0115302), the National Natural Science Foundation of China (Grant No.61379052, 62406328, and 62506369), the Science Foundation of Ministry of Education of China (Grant No.2018A02002), and the Natural Science Foundation for Distinguished Young Scholars of Hunan Province (Grant No.14JJ1026). G. Pang's participation in this research was partially supported by the National Research Foundation, Singapore and CyberSG R\&D Programme Office under its Translation and Innovation Grants (CRPO-GC4-SMU-001), the Singapore Ministry of Education (MOE) Academic Research Fund (AcRF) Tier 1 Grant (24-SIS-SMU-008), A*STAR under its MTC YIRG Grant (M24N8c0103), and the Lee Kong Chian Fellowship (T050273).

\section*{Impact Statement}

This paper presents work whose goal is to advance the field of 
Machine Learning. There are many potential societal consequences 
of our work, none which we feel must be specifically highlighted here.

\nocite{langley00}

\bibliography{ref}
\bibliographystyle{icml2026}

\newpage
\appendix
\onecolumn
\section{Proof of \cref{theorem1}}
\label{app:theorem1}
\begin{proof}
Recall the output multi-channel anomaly scores $f(\bm{x}_{i},\theta)=\bm{s} \in \mathbb{R}^{r}$. We first do not assume a specific distribution, and let $\bm{s}$ follow a latent distribution $S$ with covariance $\Sigma$, probability density function $p(\bm{s})$. Then we know $p(\bm{s})$ is the density satisfying $\int s_{i}s_{j}p(\bm{s})d\bm{s}=\Sigma_{ij}$ for all $i, j$. $\mathcal{H}(S)=\mathbb{E}[-\log p(S)]
= -\int_{S}p(\bm{s})\log p(\bm{s})\mathrm{d}\bm{s}$. Let $\psi_{\Sigma}$ be the density of distribution $\mathcal{N}(\bm{\mu},\Sigma)$:
\begin{equation}
	\begin{aligned}
		\psi_{\Sigma}(\bm{s})=\frac{1}{\left(\sqrt{2\pi}\right)^r|\Sigma|^{\frac{1}{2}}}e^{-\frac{1}{2}(\bm{s}-\bm{\mu})^T\Sigma^{-1}(\bm{s}-\bm{\mu})}
	\end{aligned}.
\end{equation}
The expectation $\mathbb{E}[s_i s_j]$ is given by: $\mathbb{E}[s_i s_j] = \Sigma_{ij} + \mu_i \mu_j$. This is because the covariance between \( s_i \) and \( s_j \) is \( \Sigma_{ij} \), and the expectation of the product is the sum of the covariance and the product of the means \( \mu_i \mu_j \). 
When the distribution is centered (\ie, \( \bm{\mu} = 0 \)), the expectation simplifies to: $\mathbb{E}[s_i s_j] = \int s_i s_j \psi_{\Sigma}(\bm{s}) \, d\bm{s} = \Sigma_{ij}$. Then we have:
\begin{equation}
	\label{quad}
	\begin{aligned}
		\int s_{i}s_{j}p(\bm{s})d\bm{s}=\int s_i s_j \psi_{\Sigma}(\bm{s}) \, d\bm{s},
	\end{aligned}
\end{equation}
which means both $p(\bm{s})$ and $\psi_{\Sigma}(\bm{s})$ yield the same quadratic moments. Since we know the KL-divergence $D_\text{KL}(p||q)$ between two densities $p$ and $q$ is defined by: $D_\text{KL}(p||q)=\int p\log\frac{q}{p}$, we have the fact:
\begin{equation}
	\begin{aligned}
		-D_\text{KL}(p||q)&=\int p\log\frac{q}{p}\\
		&\leq\log\int p\frac{q}{p}&\text{(by Jensen's inequality)} \\
		&=\log\int q \leq\log1=0.
	\end{aligned}
\end{equation}
KL-divergence achieves $0$ iff we have equality in Jensen’s inequality, which
occurs iff $p = q$. Then we have:
\begin{equation}
	\begin{aligned}
		\mathrm{0}&\leq D_\text{KL}(p||\psi_{\Sigma})\\
		&=\int_{S} p\log(p/\psi_{\Sigma})\mathrm{d}\bm{s}\\
		&=\int_{S} p\log p\mathrm{d}\bm{s} - \int_{S} p\log \psi_{\Sigma}\mathrm{d}\bm{s}\\
		&=-\mathcal{H}(S)-\int_{S} p\log \psi_{\Sigma}\mathrm{d}\bm{s} \\
		&=-\mathcal{H}(S)-\int_{S} \psi_{\Sigma}\log \psi_{\Sigma}\mathrm{d}\bm{s}\\
		&=-\mathcal{H}(S)+\mathcal{H}(\psi_\Sigma),
	\end{aligned}
\end{equation}
where the substitution $\int_{S} p\log \psi_{\Sigma}\mathrm{d}\bm{s} = \int_{S} \psi_{\Sigma}\log \psi_{\Sigma}\mathrm{d}\bm{s}$ follows from the fact
that $p$ and $\psi_\Sigma$ yield the same moments of the quadratic form $\log \psi_{\Sigma}$, as indicated in \cref{quad}. Therefore, $\mathcal{H}(S) \leq \mathcal{H}(\psi_\Sigma)$. Below we further compute $\mathcal{H}(\psi_\Sigma)$:
\begin{equation}
	\begin{aligned}
		\mathcal{H}(\psi_\Sigma)&=-\int_{S} \psi_\Sigma(\bm{s})\left[-{\frac{1}{2}}(\bm{s}-\bm{\mu})^{T}\Sigma^{-1}(\bm{s}-\bm{\mu})-\ln\left({\sqrt{2\pi}}\right)^{r}|\Sigma|^{\frac{1}{2}}\right]\mathrm{d}\bm{s}\\
		&=\frac{1}{2}\mathbb{E}\left[\sum_{i,j}(s_{i}-\mu_{i})\left(\Sigma^{-1}\right)_{ij}(s_{j}-\mu_{j})\right]+\frac{1}{2}\ln(2\pi)^{r}|\Sigma|\\
		&=\frac{1}{2}\mathbb{E}\left[\sum_{i,j}(s_{i}-\mu_{i})(s_{j}-\mu_{j})\left(\Sigma^{-1}\right)_{ij}\right]+\frac{1}{2}\ln(2\pi)^{r}|\Sigma|\\
		&=\frac{1}{2}\sum_{i,j}\mathbb{E}[(s_{j}-\mu_{j})(s_{i}-\mu_{i})]\left(\Sigma^{-1}\right)_{ij}+\frac{1}{2}\ln(2\pi)^{r}|\Sigma|\quad\\
		&=\frac{1}{2}\sum_{j}\sum_{i}\Sigma_{ji}\left(\Sigma^{-1}\right)_{ij}+\frac{1}{2}\ln(2\pi)^{r}|\Sigma|\\
		&=\frac{1}{2}\sum_{j}(\Sigma\Sigma^{-1})_{jj}+\frac{1}{2}\ln(2\pi)^{r}|\Sigma|\\
		&=\frac{1}{2}\sum_{j}I_{jj}+\frac{1}{2}\ln(2\pi)^{r}|\Sigma|\\
		&=\frac{r}{2}+\frac{1}{2}\ln(2\pi)^r|\Sigma|\\
		&=\frac{1}{2}\ln(2\pi e)^{r}|\Sigma|\quad\mathrm{nats}\\
		&=\frac{1}{2}\log(2\pi e)^{r}\det\Sigma\quad\mathrm{bits.}
	\end{aligned}
\end{equation}
Then we have the following bound on entropy:
\begin{equation}
	\begin{aligned}
		\mathcal{H}(S) \leq \mathcal{H}(\psi_\Sigma)=\frac{1}{2}\log((2\pi e)^{r}\det\Sigma),
	\end{aligned}
\end{equation}
which holds with equality iff $S$ is jointly Gaussian.

Assuming the latent distribution $S$ follows an isotropic Gaussian, $S\sim \mathcal{N}(\bm{\mu},\sigma^{2}I)$, we have:
\begin{equation}
	\begin{aligned}
		\mathcal{H}(S)= \frac{1}{2}\log((2\pi e)^r\det\sigma^2I)
		= \frac{1}{2}\log((2\pi e\sigma^2)^r\cdot1)=\frac{r}{2}(1+\log(2\pi\sigma^2))\propto\log\sigma^2.
	\end{aligned}
\end{equation}
\end{proof}

\section{Proof of \cref{theorem2}}
\label{app:theorem2}
\begin{proof}
It is worth noting that $\hat\theta_{\epsilon_{i}\mathbf{1}} = \arg\min_{\theta\in\Theta} \frac{1}{N} \sum_{n=1}^N L(\bm{z}_n,\theta) + \epsilon_{i}L(\bm{z}_{i\mathbf{1}},\theta) - \epsilon_{i}L(\bm{z}_i,\theta)$. In this way, label flipping on the contaminated training sample $\bm{z}_i$ in $\mathcal{D}_{con}$ means setting $\epsilon_{i}=\frac{1}{N}$.
According to \cref{l_inf_l,inf_val}, we have:
\begin{equation}
	\begin{aligned}
		L(\mathcal{V},\hat{\theta}_{\epsilon_{\mathbf{1}}})-L(\mathcal{V},\hat{\theta})
		&=\frac{1}{|\mathcal{D}_{con}|}\sum_{\bm{z}_{i}\in \mathcal{D}_{con}}	L(\mathcal{V},\hat{\theta}_{\epsilon_{i\mathbf{1}}})-L(\mathcal{V},\hat{\theta}) \\
		&\approx\frac{1}{|\mathcal{D}_{con}|}\sum_{\bm{z}_{i}\in \mathcal{D}_{con}}\epsilon_i\times\sum_{\bm{z}_{t}\in \mathcal{V}}\mathcal{I}_{L\mathbf{1}}(\bm{z}_{i},\bm{z}_{t}) \\
		&=\frac{1}{N\cdot|\mathcal{D}_{con}|}\sum_{\bm{z}_{i}\in \mathcal{D}_{con}}\sum_{\bm{z}_{t}\in \mathcal{V}}-2\mathcal{I}_{L}(\bm{z}_{i},\bm{z}_{t}) \\
		&=\frac{-2}{N\cdot|\mathcal{D}_{con}|}\sum_{\bm{z}_{i}\in \mathcal{D}_{con}}\mathcal{I}_{L}(\bm{z}_{i})<0.
	\end{aligned}
\end{equation}
\end{proof}

\section{Proof of \cref{theorem3}}
\label{app:theorem3}
\begin{proof}

According to \cref{inf_per_val}, we first formalize $\mathcal{I}_{per}(\bm{w}_i)$ as follows:
\begin{equation}
	\begin{aligned}
		\mathcal{I}_{per}(\bm{w}_i)
		&=-\nabla_{\theta_h}L(\mathcal{V},\hat{\theta}_h)^{\top}H_{\hat{\theta}_h}^{-1}[\nabla_{\bm{\varphi}}\nabla_{\theta_h}L(\bm{w}_i,\hat{\theta}_h)]\\
		&=\sum_{\bm{w}_{t}\in \mathcal{V}}-\nabla_{\theta_h}L(\bm{w}_{t},\hat{\theta}_h)^{\top}H_{\hat{\theta}_h}^{-1}[\nabla_{\bm{\varphi}}\nabla_{\theta_h}L(\bm{w}_i,\hat{\theta}_h)],
	\end{aligned}
\end{equation}
where $\theta_h$ is flatten into $\mathbb{R}^{m\times1}$, $\nabla_{\theta_h}L(\bm{w}_{t},\hat{\theta}_h)^{\top} \in \mathbb{R}^{1\times m}$, $H_{\hat{\theta}_h}^{-1} \in \mathbb{R}^{m\times m}$, $\nabla_{\bm{\varphi}}\nabla_{\theta_h}L(\bm{w}_i,\hat{\theta}_h) \in \mathbb{R}^{m\times d}$, and $\mathcal{I}_{per}(\bm{w}_i) \in \mathbb{R}^{1\times d}$.
Consider the embedded features $\bm{\varphi}_{i} \in \mathbb{R}^{d\times 1}$, and let $\bm{\varphi}_{i\zeta_{i}}=\bm{\varphi}_{i}+\alpha\mathcal{I}_{per}(\bm{w}_i)^{\top}=(I + \alpha P)\bm{\varphi}_{i}$, then we can obtain $\left\lvert\frac{d\bm{\varphi}_{i\zeta_{i}}}{d\bm{\varphi}_i}\right\rvert = \left\lvert(I+\alpha P)\right\rvert$. Assuming that the original features $\bm{\varphi}_i$ are i.i.d. drawn from the distribution with the probability density function $p(\bm{\varphi}_i)$, then we can conduct the perturbed features $\bm{\varphi}_{i\zeta_{i}}$ are i.i.d. drawn from the distribution with $p(\bm{\varphi}_{i\zeta_{i}}) = p(\bm{\varphi}_i) \left\lvert\frac{d\bm{\varphi}_i}{d\bm{\varphi}_{i\zeta_{i}}}\right\rvert = p(\bm{\varphi}_i)\left\lvert (I+\alpha P)^{-1} \right\rvert =
p(\bm{\varphi}_i)\left\lvert I+\alpha P \right\rvert^{-1}$.

By using the KL-divergence to measure the discrepancy between the original feature distribution and the perturbed feature distribution, we have 
\begin{equation}
	\begin{aligned}
		\label{kl}
		D_\text{KL}(p(\bm{\varphi}_i) || p(\bm{\varphi}_{i\zeta_{i}}))=\mathbb{E}_{p(\bm{\varphi}_i)} \log \frac{p(\bm{\varphi}_i)}{p(\bm{\varphi}_{i\zeta_{i}})}
		=\left\lvert \log \left\lvert I+\alpha P \right\rvert \right\rvert.  
	\end{aligned}
\end{equation}
Recall $P\bm{\varphi}_{i}=\mathcal{I}_{per}(\bm{w}_i)^{\top}$, we can further let $P=\mathcal{I}_{per}(\bm{w}_i)^{\top} \cdot \frac{\bm{\varphi}_{i}^{\top}}{\left\| \bm{\varphi}_{i} \right\|^{2}}$, which satisfies
\begin{equation}
	\begin{aligned}
		P\bm{\varphi}_{i}=
		\mathcal{I}_{per}(\bm{w}_i)^{\top} \cdot \frac{\bm{\varphi}_{i}^{\top}}{\left\| \bm{\varphi}_{i} \right\|^{2}}\cdot\bm{\varphi}_{i}=\mathcal{I}_{per}(\bm{w}_i)^{\top}\cdot\frac{\bm{\varphi}_{i}^{\top}\bm{\varphi}_{i}}{\left\| \bm{\varphi}_{i} \right\|^{2}}=\mathcal{I}_{per}(\bm{w}_i)^{\top}. 
	\end{aligned}
\end{equation}
Since $P$ is the matrix of rank 1, we can know there are only one nonzero eigenvalue and remaining $d-1$ zero eigenvalue.
To calculate $\left\lvert \log \left\lvert I+\alpha P \right\rvert \right\rvert$, below we first derive that the only one nonzero eigenvalue of $P$ is $\lambda = \frac{\mathcal{I}_{per}(\bm{w}_i)^{\top}\bm{\varphi}_{i}^{\top}}{\left\| \bm{\varphi}_{i} \right\|^{2}}$. 

For any $\bm{x} \in \mathbb{R}^{d\times 1}$, we have:
\begin{equation}
	\begin{aligned}
		P\bm{x}=\frac{\mathcal{I}_{per}(\bm{w}_i)^{\top}\bm{\varphi}_{i}^{\top}\bm{x}}{\left\| \bm{\varphi}_{i} \right\|^{2}}
		=\frac{\bm{\varphi}_{i}^{\top}\bm{x}}{\left\| \bm{\varphi}_{i} \right\|^{2}}\mathcal{I}_{per}(\bm{w}_i)^{\top}.
	\end{aligned}
\end{equation}
Therefore, $P$ maps any vectors to the direction of $\mathcal{I}_{per}(\bm{w}_i)^{\top}$. Let $\bm{x}=\mathcal{I}_{per}(\bm{w}_i)^{\top}$, we have:
\begin{equation}
	\begin{aligned}
		P\mathcal{I}_{per}(\bm{w}_i)^{\top}
		=\frac{\bm{\varphi}_{i}^{\top}\mathcal{I}_{per}(\bm{w}_i)^{\top}}{\left\| \bm{\varphi}_{i} \right\|^{2}}\mathcal{I}_{per}(\bm{w}_i)^{\top},
	\end{aligned}
\end{equation}
where $P$ modifies the magnitude of $\mathcal{I}_{per}(\bm{w}_i)^{\top}$ rather than the direction. Thus $\mathcal{I}_{per}(\bm{w}_i)^{\top}$ is the eigenvector of $P$, and the corresponding nonzero eigenvalue is $\lambda = \frac{\bm{\varphi}_{i}^{\top}\mathcal{I}_{per}(\bm{w}_i)^{\top}}{\left\| \bm{\varphi}_{i} \right\|^{2}}$.

Then we have $\left\lvert I+\alpha P \right\rvert = (1 + \alpha \lambda) \cdot 1^{d-1} = 1 + \alpha \lambda$, which can be further expanded as:
\begin{equation}
	\label{p_cal}
	\begin{aligned}
		\left\lvert I+\alpha P \right\rvert
		&=1 + \alpha \frac{\bm{\varphi}_{i}^{\top}\mathcal{I}_{per}(\bm{w}_i)^{\top}}{\left\| \bm{\varphi}_{i} \right\|^{2}} \\
		&=1 + \alpha \frac{\bm{\varphi}_{i}^{\top}\sum_{\bm{w}_{t}\in \mathcal{V}}-[\nabla_{\bm{\varphi}}\nabla_{\theta_h}L(\bm{w}_i,\hat{\theta}_h)]^{\top}H_{\hat{\theta}_h}^{-1}\nabla_{\theta_h}L(\bm{w}_{t},\hat{\theta}_h)}{\left\| \bm{\varphi}_{i} \right\|^{2}}.
	\end{aligned}
\end{equation}
Similar to \cref{l_change}, we simplify the loss $L(\bm{w}_i,\theta_h)$ as: $L(\bm{w}_i,\theta_h) = (1-y_i)h(\bm{\varphi}_{i},\theta_h) + y_i \max\big(0, a - h(\bm{\varphi}_{i},\theta_h)\big) = h(\bm{\varphi}_{i},\theta_h)$, and treat the learning head as a linear transformation. In this way, $m=d$ and $L(\bm{w}_i,\theta_h) = \theta_h^{\top}\bm{\varphi}_{i}$, and we obtain $\nabla_{\theta_h}L(\bm{w}_{i},\hat{\theta}_h)^{\top} = \bm{\varphi}_{i}^{\top}$. Then \cref{p_cal} can be approximated as:
\begin{equation}
	\label{p_app}
	\begin{aligned}
		\left\lvert I+\alpha P \right\rvert
		&\approx1 + \alpha \frac{\nabla_{\theta_h}L(\bm{w}_{i},\hat{\theta}_h)^{\top}\sum_{\bm{w}_{t}\in \mathcal{V}}-H_{\hat{\theta}_h}^{-1}\nabla_{\theta_h}L(\bm{w}_{t},\hat{\theta}_h)}{\left\| \bm{\varphi}_{i} \right\|^{2}} \\
		&=1 + \alpha \frac{\sum_{\bm{w}_{t}\in \mathcal{V}}-\nabla_{\theta_h}L(\bm{w}_{t},\hat{\theta}_h)^{\top}H_{\hat{\theta}_h}^{-1}\nabla_{\theta_h}L(\bm{w}_{i},\hat{\theta}_h)}{\left\| \bm{\varphi}_{i} \right\|^{2}}.
	\end{aligned}
\end{equation}
Recall \cref{inf_loss}, we have
\begin{equation}
	\begin{aligned}
		\mathcal{I}_{L\theta_h}(\bm{w}_{i},\bm{w}_{t})
		&\triangleq\frac{dL(\bm{w}_{t},\hat{\theta}_{h,\epsilon_{i}})}{d\epsilon_{i}}|_{\epsilon_{i}=0} \\
		&=-\nabla_{\theta_h}L(\bm{w}_{t},\hat{\theta}_h)^{\top}H_{\hat{\theta}_h}^{-1}\nabla_{\theta_h}L(\bm{w}_{i},\hat{\theta}_h).
	\end{aligned}
\end{equation}
Since the perturbed samples are selected from $\mathcal{D}_{per}$ where $\mathcal{I}_{L\theta_h}(\bm{w}_{i}) = \sum_{\bm{w}_{t}\in \mathcal{V}}\mathcal{I}_{L\theta_h}(\bm{w}_{i},\bm{w}_{t})<0$, we can define $-\beta$ as the upper bound, where $\beta>0$. Then \cref{p_cal} has the following upper bound:
\begin{equation}
	\label{bound}
	\begin{aligned}
		\left\lvert I+\alpha P \right\rvert
		&\approx 1 + \alpha \frac{\sum_{\bm{w}_{t}\in \mathcal{V}}\mathcal{I}_{L\theta_h}(\bm{w}_{i},\bm{w}_{t})}{\left\| \bm{\varphi}_{i} \right\|^{2}} \\
		&= 1 + \alpha \frac{\mathcal{I}_{L\theta_h}(\bm{w}_{i})}{\left\| \bm{\varphi}_{i} \right\|^{2}} \leq 1 - \frac{\alpha\beta}{\left\| \bm{\varphi}_{i} \right\|^{2}} < 1,
	\end{aligned}
\end{equation}
where $\alpha, \beta >0$. Since $\left\| \bm{\varphi}_{i} \right\|^{2} \!\gg\! \alpha\beta$, then we have $\frac{\alpha\beta}{\left\| \bm{\varphi}_{i} \right\|^{2}} \in (0, 1)$, and \cref{kl} has the following lower bound:
\begin{equation}
	\begin{aligned}
		\label{kl_final}
		D_\text{KL}(p(\bm{\varphi}_i) || p(\bm{\varphi}_{i\zeta_{i}}))
		&=\left\lvert \log \left\lvert I+\alpha P \right\rvert \right\rvert \\
		&\geq - \log (1 - \frac{\alpha\beta}{\left\| \bm{\varphi}_{i} \right\|^{2}})\\
		&= \log (\frac{\left\| \bm{\varphi}_{i} \right\|^{2}}{\left\| \bm{\varphi}_{i} \right\|^{2} -\alpha\beta}) \\
		&= \log (\frac{1}{1 -\frac{\alpha\beta}{\left\| \bm{\varphi}_{i} \right\|^{2}}}) \geq \frac{\alpha\beta}{\left\| \bm{\varphi}_{i} \right\|^{2}},
	\end{aligned}
\end{equation}
where the last inequality holds because of the fact $\log (\frac{1}{1 -x}) \geq x$ for $x \in (0, 1)$.
\end{proof}

\section{Proof of \cref{theorem4}}
\label{app:theorem4}
\begin{proof}
Similar to \cref{inf,inf_val}, we denote the optimal parameters as $\hat\theta_{h,\epsilon_{i}\zeta_{i}\mathbf{1}}$ after perturbing each feature $\bm{w}_i=(\bm{\varphi}_{i},0)$ from feature set $\mathcal{W}_{per}$, and quantify the overall influence of $(\bm{\varphi}_{i},0)\mapsto(\bm{\varphi}_{i\zeta_{i}},1)$ on the test risk over $\mathcal{V}$ as:
\begin{equation}
\label{app_per}
	\begin{aligned}
	L(\mathcal{V},\hat{\theta}_{h,\epsilon\zeta\mathbf{1}})-L(\mathcal{V},\hat{\theta}_{h})
		&=\frac{1}{|\mathcal{W}_{per}|}\sum_{\bm{w}_{i}\in \mathcal{W}_{per}}	L(\mathcal{V},\hat{\theta}_{h,\epsilon_{i}\zeta_{i}\mathbf{1}})-L(\mathcal{V},\hat{\theta}_{h}) \\
        &=\frac{1}{|\mathcal{W}_{per}|}\sum_{\bm{w}_{i}\in \mathcal{W}_{per}}	L(\mathcal{V},\hat{\theta}_{h,\epsilon_{i}\zeta_{i}\mathbf{1}})-L(\mathcal{V},\hat{\theta}_{h,\epsilon_{i}\zeta_{i}}) + L(\mathcal{V},\hat{\theta}_{h,\epsilon_{i}\zeta_{i}})-L(\mathcal{V},\hat{\theta}_{h}) \\
		&\approx\frac{1}{|\mathcal{W}_{per}|}\sum_{\bm{w}_{i}\in \mathcal{W}_{per}}\epsilon_i\times\sum_{\bm{w}_{t}\in \mathcal{V}}-2\mathcal{I}_{L\zeta_{i}}(\bm{w}_{i},\bm{w}_{t}) + \mathcal{I}_{L\zeta_{i}}(\bm{w}_{i},\bm{w}_{t}) \\
		&=\frac{1}{|\mathcal{W}_{per}|}\sum_{\bm{w}_{i}\in \mathcal{W}_{per}}\epsilon_i\times\sum_{\bm{w}_{t}\in \mathcal{V}}-\mathcal{I}_{L\zeta_{i}}(\bm{w}_{i},\bm{w}_{t}).
	\end{aligned}
\end{equation}
Similar to \cref{app:theorem2}, we can calculate the gap in test risk before and after the feature perturbation by setting $\epsilon_{i}=\frac{1}{N}$. Recall that we set feature perturbation $\zeta_{i}$ as $\alpha\mathcal{I}_{per}(\bm{w}_i)^{\top}$ for each feature $\bm{w}_i$ from $\mathcal{W}_{per}$, then combing \cref{inf_per_val,app_per}, we have:
\begin{equation}
\label{app_per_risk}
	\begin{aligned}
	L(\mathcal{V},\hat{\theta}_{h,\epsilon\zeta\mathbf{1}})-L(\mathcal{V},\hat{\theta}_{h})
		&\approx\frac{1}{N\cdot|\mathcal{W}_{per}|}\sum_{\bm{w}_{i}\in \mathcal{W}_{per}}\sum_{\bm{w}_{t}\in \mathcal{V}}-\mathcal{I}_{L\zeta_{i}}(\bm{w}_{i},\bm{w}_{t}) \\
		&\approx-\frac{1}{N\cdot|\mathcal{W}_{per}|}\sum_{\bm{w}_{i}\in \mathcal{W}_{per}}\mathcal{I}_{per}(\bm{w}_i)\zeta_{i} \\ 
        &=-\frac{1}{N\cdot|\mathcal{W}_{per}|}\sum_{\bm{w}_{i}\in \mathcal{W}_{per}}\mathcal{I}_{per}(\bm{w}_i)\times\alpha\mathcal{I}_{per}(\bm{w}_i)^{\top} \\
        &=-\frac{\alpha}{N\cdot|\mathcal{W}_{per}|}\sum_{\bm{w}_{i}\in \mathcal{W}_{per}}\Vert\mathcal{I}_{per}(\bm{w}_{i})\Vert^{2}_{2}<0.
	\end{aligned}
\end{equation}
\end{proof}

\section{Dataset Details}
\label{app:datasets}
\begin{table}[hbp]
	\centering
	\caption{Details in eight datasets. \#lens and \#dims indicate the length of a time interval, and the number of variables. \#classes represents the number of anomaly classes. \#train and \#test denote the number of training and test samples. \#anom is the number of anomalies over all the test samples.}
	\scalebox{1.0}{
		\begin{tabular}[\linewidth]{lcccccc}
			\toprule
			\textbf{Dataset} & \multicolumn{1}{c}{\textbf{\#lens}} & \multicolumn{1}{c}{\textbf{\#dims}} & \multicolumn{1}{c}{\textbf{\#classes}} & \multicolumn{1}{c}{\textbf{\#train}} & \multicolumn{1}{c}{\textbf{\#test}} & \multicolumn{1}{c}{\textbf{\#anom}} \\
			\midrule
			UCR  & 100 & 1  &  1 & 54,106 & 70,512 & 172\\
			ASD   & 100 & 19 &  1 &  1,043  & 468 & 78\\
			PSM   & 100 & 25 &   1  & 1,351  & 798 & 272\\
			SMD  & 100 & 38 &   1 & 7,228  & 6,084 & 528\\
			CT   & 205 & 3 & 5 & 889  & 1,715 & 412\\
			SAD   & 93 & 13 &  4 & 4,040  & 2,200 & 880\\
			PTBXL   & 1000 & 12 &  4 & 8,323  & 2,198 & 1,286\\
			TUSZ   & 1000 & 19 & 3 & 39,120  & 4,564 & 304\\
			\bottomrule
		\end{tabular}%
	}
	\label{tab:datasets}%
\end{table}%
\cref{tab:datasets} summarizes the statistical details of these datasets. Below we introduce each dataset in detail.

$\textbf{UCR}$~\cite{wu2021current} contains 250 heterogeneous univariate sub-datasets from various natural sources. Each signal is anomaly-free during training and contains exactly one anomaly in testing. We combined all the training data together and selected a portion of the test anomalies for anomaly contamination injection and anomaly exploitation.

$\textbf{ASD}$~\cite{li2021multivariate} originates from production environments of a major Internet service provider. It comprises 19 complementary system metrics encompassing CPU utilization, memory allocation, network throughput, and virtual machine performance indicators. Each anomaly in the test set is labeled by domain experts using incident reports, offering reliable ground truth for anomaly detection research.

$\textbf{PSM}$~\cite{abdulaal2021practical} is sourced internally from eBay's distributed application infrastructure, comprising operational telemetry collected from multiple server nodes within their production environment. The dataset captures a comprehensive suite of server performance indicators, including CPU utilization, memory allocation and consumption metrics, and so on.

$\textbf{SMD}$~\cite{su2019robust} is a comprehensive collection of server telemetry obtained from production clusters within a major Internet corporation. Spanning five consecutive weeks, this multivariate time series dataset captures 38 distinct resource utilization metrics across heterogeneous computing nodes in a distributed cluster environment. Anomalies are annotated by domain experts.

$\textbf{CT}$\footnote{\url{https://archive.ics.uci.edu/dataset/175/character+trajectories}} captures three-dimensional pen tip movement during handwriting tasks. This specialized collection comprises velocity trajectory data for 20 distinct characters, with each character represented by multiple labeled samples. All samples originate from a single writer, ensuring consistent writing style for robust primitive extraction and modeling. Considering the particularity of the handwriting style, letters G, M, Q, W, and Z are regarded as anomaly classes. 

$\textbf{SAD}$\footnote{\url{https://archive.ics.uci.edu/dataset/195/spoken+arabic+digit}} comprises time series of Mel-Frequency Cepstral Coefficients (MFCCs) corresponding to spoken utterances of ten Arabic digits (0-9). This comprehensive collection includes recordings from 88 native Arabic speakers, balanced with 44 male and 44 female participants. Taking into account the speaking rates and styles of spoken digit articulation, digits zero, three, seven, and eight are treated as anomaly classes.

$\textbf{PTBXL}$~\cite{wagner2020ptb} represents a large-scale, publicly available electrocardiography (ECG) collection encompassing both normal sinus rhythms and pathological cardiac conditions. The dataset is systematically annotated with four major diagnostic categories: Myocardial Infarction (MI), ST/T Change (STTC), Conduction Disturbance (CD), and Hypertrophy (HYP).

$\textbf{TUSZ}$~\cite{shah2018temple} represents the largest publicly available electroencephalogram (EEG) dataset specifically curated for seizure detection and classification research. This comprehensive collection encompasses both normal brain activity patterns, primarily captured during resting-state conditions, and abnormal epileptiform activity across diverse seizure types. The dataset is systematically annotated with three major seizure classifications: Focal Non-Specific Seizure (FNSZ), Generalized Non-Specific Seizure (GNSZ), and Complex Partial Seizure (CPSZ), each representing distinct electrophysiological patterns and clinical manifestations.

Unsupervised and open-set settings are both included in our experiments. To verify the robustness, the experiments under both settings were conducted under a contamination level of 2\%, where the contaminated samples were extracted from the original test data and explicitly excluded from the final test set, thereby eliminating any potential data leakage. For our method, 20\% samples are randomly selected from the training set to form the validation set, and the influence of each training
sample is estimated with the validation set using the validation loss.

\section{Implementation Details}
\label{app:implementation}
In IMPACT, Temporal Convolutional Network (TCN) is utilized as the feature extractor $\phi$. TCN offers distinct advantages over traditional recurrent neural networks, and is widely applied in the field requiring the temporal dependencies. $\phi$ is with one hidden layer, and the two different learning heads $h$ and $h^{\prime}$ are two-layer multi-layer perceptron networks with ReLU activation. The hidden layer of $\phi$ and learning heads has 64 neural units by default, and the dimension of the feature space is also 64. Our sample selection uses $k$ to control the number of normal reference samples and perturbed samples, and we use $k=5$ by default. $\alpha$ is to control the perturbation strength, and we set $\alpha=0.02$ by default. $\lambda$ is set to 1.0 to balance the loss term. For training and retraining process, we conduct the whole stage within ten epochs, where nine epochs are used for training and retraining only requires one epoch.

For all the methods, we set the mini-batch size as 64, the learning rate as 3e-4, and conduct all experiments within 10 epochs. All the methods used in our experiments are implemented in Python. For unsupervised baselines, TCN-AE is sourced from the mvts-ano-eval\footnote{\url{https://github.com/astha-chem/mvts-ano-eval}} repository, and THOC is implemented based on a publicly unofficial PyTorch adaptation\footnote{\url{https://github.com/carrtesy/THOC-Pytorch}} due to the absence of an official release. We implement TranAD, DCdetector, and COUTA from the DeepOD\footnote{\url{https://github.com/xuhongzuo/DeepOD}} package. GPT4TS\footnote{\url{https://github.com/DAMO-DI-ML/NeurIPS2023-One-Fits-All}} and DADA\footnote{\url{https://github.com/decisionintelligence/DADA}} utilize the authors' original implementations. For open-set competitors, the feature extractor used is exactly the same as that employed in IMPACT. DevNet, DSAD and NCAD are also adopted from DeepOD package. DRA\footnote{\url{https://github.com/choubo/DRA}}, MOSAD\footnote{\url{https://github.com/Armanfard-Lab/MOSAD}}, InvAD\footnote{\url{https://github.com/fly-orange/InvAD}}, and WSAD-DT \footnote{\url{https://github.com/Walid10010/Weakly-Supervised-Anomaly-Detection-via-Dual-Tailed-Kernel}} are reproduced from their respective official code repositories. All experiments are conducted on a workstation with an Intel(R) Core(TM) i9-10980XE CPU and a single NVIDIA GeForce RTX 3090 GPU.

\section{Algorithms Details}
\label{app:algorithms}
We outline the detailed procedure of IMPACT in Algorithm \ref{alg:train}. IMPACT begins by training the initial model $f$ with the training set $\mathcal{D}$ containing both contaminated normal samples $\mathcal{D}_{n}$ and labeled anomalies $\mathcal{D}_{a}$. During the iterative training process, IMPACT identifies contaminated samples through influence function analysis in Step 8, where samples with positive influence values are subjected to label flipping in Step 10. Then several subsets are constructed in Steps 11-24. Among them, $\mathcal{D}_{per}$ contains the top-$k$ normal samples with the largest influence values, and $\mathcal{D}_{s}$ includes normal samples and identified anomalies (including originally labeled ones and discovered contaminated samples). Most helpful normal samples $\mathcal{D}_{h}$ are combined with the perturbed features $\mathcal{W}_{per}^{\prime}$ that embody unseen abnormality. Finally, model is retained via a combined loss function in Step 25. 

\begin{algorithm}[htbp]
	\caption{Procedure of IMPACT}
	\label{alg:train}
	\begin{algorithmic}[1]
		\renewcommand{\REQUIRE}{\item[\textbf{Input:}]}
		\renewcommand{\ENSURE}{\item[\textbf{Output:}]}
		\REQUIRE Initial model $f$ with parameters $\theta = \{\theta_{\phi}, \theta_{h}\}$, contaminated normal subset $\mathcal{D}_{n} = \{(\bm{x}_{i}, y_i)\} ^{N}_{i=1}$, abnormal subset $\mathcal{D}_{a}$, training set $\mathcal{D}\doteq\mathcal{D}_{n}\cup\mathcal{D}_{a}$, validation set $\mathcal{V}$.
		\ENSURE Trained model $f^{\prime}$ with parameters $\hat\theta_{re} = \{\hat\theta_{\phi}, \hat\theta_{h}, \hat\theta_{{h}^{\prime}}\}$
		
		\STATE Train the model $f$ with $\mathcal{D}$ within $w$ epochs to get optimal parameters $\hat\theta$.
		\STATE Initialize $\mathcal{D}_{ref}=\emptyset$
		\REPEAT
		\STATE Initialize $\mathcal{D}_{con}^{\prime}, \mathcal{D}_{help}, \mathcal{D}_{s}, \mathcal{D}_{anom}, \mathcal{W}_{per}^{\prime}=\emptyset$
		\STATE Sample mini-batch training data $\mathcal{B} \sim \mathcal{D}$
		\FOR{$\bm{z}_i \in \mathcal{B}$}
		\IF{$\bm{z}_i \in \mathcal{D}_{n}$}
		\STATE Calculate the influence of the training sample $\bm{z}_i$ on $\mathcal{V}$ using \cref{inf}
		\IF{$\mathcal{I}_{L}(\bm{z}_{i})>0$}
		\STATE Label flipping on the identified contaminated training sample with $\bm{z}_{i\mathbf{1}}\gets\bm{z}_{i}$
		\STATE $\mathcal{D}_{con}^{\prime}\gets\mathcal{D}_{con}^{\prime}\cup \{\bm{z}_{i\mathbf{1}}\}$
		\ELSE
		\STATE $\mathcal{D}_{help}\gets\mathcal{D}_{help}\cup \{\bm{z}_{i}\}$, $\mathcal{D}_{s}\gets\mathcal{D}_{s}\cup \{\bm{z}_{i}\}$
		\ENDIF
		\ELSE
		\STATE $\mathcal{D}_{anom}\gets\mathcal{D}_{anom}\cup \{\bm{z}_{i}\}$, $\mathcal{D}_{s}\gets\mathcal{D}_{s}\cup \{\bm{z}_{i}\}$
		\ENDIF
		\ENDFOR
		\STATE $\mathcal{D}_{per}\gets\mathcal{D}_{help}|_{\text{top-k largest}}$, $\mathcal{D}_{ref}\gets\mathcal{D}_{ref} \cup\mathcal{D}_{help}|_{\text{top-k smallest}}$, $\mathcal{D}_{clean}\gets \mathcal{D}_{help}\cup\mathcal{D}_{anom}$
		\STATE $\mathcal{D}_{s}\gets \mathcal{D}_{s}\cup\mathcal{D}_{con}^{\prime}$, $\mathcal{D}_{h}\gets \mathcal{D}_{help} \setminus \mathcal{D}_{per}$
		\FOR{$(\bm{x}_i, 0) \in \mathcal{D}_{per}$}
		\STATE Perturbing the embedded feature $\bm{\varphi}_{i}$ of $\bm{x}_i$ to obtain $\bm{\varphi}_{i\zeta_{i}}$ using \cref{inf_per_val}
		\STATE $\mathcal{W}_{per}^{\prime}\gets\mathcal{W}_{per}^{\prime}\cup \{(\bm{\varphi}_{i\zeta_{i}}, 1)\}$
		\ENDFOR
		\STATE Update network parameters according to \cref{loss_final} with $\mathcal{D}_{s}$ and $\mathcal{D}_{h} \cup \mathcal{W}_{per}^{\prime}$
		\UNTIL{Reach maximum number of mini-batches}
		\STATE {\bfseries return:} $f^{\prime}$ with parameters $\hat\theta_{re}$
	\end{algorithmic}
\end{algorithm}

\section{Additional Empirical Results}
\label{app:add_results}

\subsection{Detection Performance of Seen Versus Unseen Anomalies}
\label{app::detect}
Although the results reported in \cref{tab:hard} have highlighted the superiority of IMPACT for detecting seen and unseen anomalies together, it is still necessary to further explore the performance of IMPACT and several advanced baselines on the seen and unseen anomaly classes separately. As shown in \cref{tab:vs}, IMPACT demonstrates strong competitiveness on both seen and unseen anomaly detection across all datasets. While WSAD-DT excel on seen anomalies in CT and TUSZ but struggles with unseen ones, IMPACT achieves balanced performance on both tasks, confirming its capability to simultaneously optimize for seen anomaly discrimination and unseen anomaly generalization.
\begin{table*}[htbp]
	\centering
	\caption{Detection accuracy (AUC $\pm$ standard deviation) of IMPACT and its competing methods w.r.t sample types (seen and unseen anomalies). All results are in \%. The best ones are $\textbf{boldfaced}$ and the second best are $\underline{\text{underlined}}$.}
	\scalebox{0.85}{
	\begin{tabular}{l|c||cccc|cccc}
		\toprule
		\multicolumn{1}{c|}{\multirow{2}[1]{*}{\textbf{Dataset}}} & \multicolumn{1}{c||}{\multirow{2}[1]{*}{seen anomaly}} & \multicolumn{4}{c|}{\textbf{Seen}}   & \multicolumn{4}{c}{\textbf{Unseen}} \\
		\cmidrule{3-10}
		& & MOSAD  & InvAD & WSAD-DT & Ours  & MOSAD  & InvAD & WSAD-DT & Ours \\
		\midrule
		\multirow{6}[3]{*}{CT} 
		& Letter-G & $\text{53.67}_{\pm\text{5.80}}$ & $\text{51.94}_{\pm\text{2.74}}$ & $\underline{\text{97.12}}_{\pm\text{0.31}}$ & $\textbf{99.60}_{\pm\text{0.22}}$ &  $\underline{\text{74.53}}_{\pm\text{0.38}}$ & $\text{72.18}_{\pm\text{0.75}}$ & $\text{60.70}_{\pm\text{2.32}}$ & $\textbf{75.53}_{\pm\text{3.90}}$ \\
		& Letter-M & $\text{94.47}_{\pm\text{0.78}}$ & $\text{85.33}_{\pm\text{3.75}}$ & $\underline{\text{98.39}}_{\pm\text{0.31}}$ & $\textbf{99.82}_{\pm\text{0.01}}$ & $\text{71.26}_{\pm\text{0.84}}$ & $\underline{\text{73.88}}_{\pm\text{1.07}}$ & $\text{67.93}_{\pm\text{2.78}}$ & $\textbf{75.04}_{\pm\text{5.16}}$ \\
		& Letter-Q & $\text{75.06}_{\pm\text{2.25}}$ & $\text{71.78}_{\pm\text{2.87}}$ & $\underline{\text{96.00}}_{\pm\text{1.02}}$ & $\textbf{99.89}_{\pm\text{0.04}}$ & $\text{73.11}_{\pm\text{1.11}}$ & $\textbf{75.73}_{\pm\text{1.25}}$ & $\text{69.12}_{\pm\text{2.69}}$ & $\underline{\text{73.27}}_{\pm\text{4.11}}$ \\
		& Letter-W & $\text{74.21}_{\pm\text{2.13}}$ & $\text{68.23}_{\pm\text{2.30}}$ & $\underline{\text{81.65}}_{\pm\text{2.89}}$ & $\textbf{95.52}_{\pm\text{0.53}}$ & $\underline{\text{78.50}}_{\pm\text{0.31}}$ & $\text{77.33}_{\pm\text{0.85}}$ & $\text{65.73}_{\pm\text{1.86}}$ & $\textbf{79.16}_{\pm\text{2.95}}$ \\
		& Letter-Z & $\text{97.22}_{\pm\text{1.57}}$ & $\text{91.43}_{\pm\text{1.75}}$ & $\underline{\text{99.91}}_{\pm\text{0.01}}$ & $\textbf{99.99}_{\pm\text{0.01}}$ & $\underline{\text{71.87}}_{\pm\text{0.95}}$ & $\text{69.39}_{\pm\text{1.00}}$ & $\text{65.73}_{\pm\text{1.86}}$ & $\textbf{74.34}_{\pm\text{2.70}}$ \\
		\cmidrule{2-10}          
		& Mean  & $\text{78.93}_{\pm\text{15.82}}$ & $\text{73.74}_{\pm\text{13.83}}$ & $\underline{\text{94.61}}_{\pm\text{6.61}}$ & $\textbf{98.96}_{\pm\text{1.73}}$ & $\underline{\text{73.85}}_{\pm\text{2.58}}$ & $\text{73.70}_{\pm\text{2.76}}$ & $\text{67.71}_{\pm\text{4.67}}$ & $\textbf{75.47}_{\pm\text{2.00}}$ \\
		\midrule
		\multirow{5}[2]{*}{SAD} 
		& Digit-Zero & $\text{26.78}_{\pm\text{0.90}}$ & $\text{49.55}_{\pm\text{4.21}}$ & $\textbf{83.86}_{\pm\text{1.32}}$ & $\underline{\text{77.08}}_{\pm\text{6.55}}$ & $\textbf{61.68}_{\pm\text{1.02}}$ & $\text{51.03}_{\pm\text{3.41}}$ & $\text{52.76}_{\pm\text{2.42}}$ & $\underline{\text{58.11}}_{\pm\text{5.23}}$ \\
		& Digit-Three & $\text{62.84}_{\pm\text{0.71}}$ & $\text{53.89}_{\pm\text{2.95}}$ & $\underline{\text{73.87}}_{\pm\text{3.25}}$ & $\textbf{79.25}_{\pm\text{4.70}}$ & $\text{49.65}_{\pm\text{0.52}}$ & $\text{48.49}_{\pm\text{1.62}}$ & $\textbf{69.05}_{\pm\text{0.48}}$ & $\underline{\text{63.36}}_{\pm\text{4.83}}$ \\
		& Digit-Seven & $\text{39.36}_{\pm\text{1.69}}$ & $\text{44.89}_{\pm\text{7.52}}$ & $\underline{\text{61.82}}_{\pm\text{1.46}}$ & $\textbf{63.68}_{\pm\text{3.70}}$ & $\text{57.13}_{\pm\text{0.99}}$ & $\text{51.61}_{\pm\text{2.35}}$ & $\underline{\text{63.99}}_{\pm\text{1.30}}$ & $\textbf{73.05}_{\pm\text{2.11}}$ \\
		& Digit-Eight & $\underline{\text{82.88}}_{\pm\text{1.10}}$ & $\text{50.89}_{\pm\text{4.17}}$ & $\text{74.47}_{\pm\text{1.27}}$ & $\textbf{97.96}_{\pm\text{0.58}}$ & $\text{43.26}_{\pm\text{0.83}}$ & $\text{51.62}_{\pm\text{1.37}}$ & $\underline{\text{51.66}}_{\pm\text{2.44}}$ & $\textbf{52.50}_{\pm\text{4.13}}$ \\
		\cmidrule{2-10}          
		& Mean  & $\text{52.97}_{\pm\text{21.58}}$ & $\text{49.81}_{\pm\text{3.24}}$ & $\underline{\text{78.51}}_{\pm\text{12.08}}$ & $\textbf{79.49}_{\pm\text{12.22}}$ & $\text{52.93}_{\pm\text{7.04}}$ & $\text{50.69}_{\pm\text{1.29}}$ & $\underline{\text{59.37}}_{\pm\text{7.39}}$ & $\textbf{61.76}_{\pm\text{7.57}}$ \\
		\midrule
		\multirow{5}[2]{*}{PTBXL} 
		& MI    & $\text{66.19}_{\pm\text{4.98}}$ & $\text{52.34}_{\pm\text{5.65}}$ & $\underline{\text{67.97}}_{\pm\text{1.55}}$ & $\textbf{69.37}_{\pm\text{5.29}}$ & $\underline{\text{63.91}}_{\pm\text{4.99}}$ & $\text{53.76}_{\pm\text{4.56}}$ & $\text{62.94}_{\pm\text{1.52}}$ & $\textbf{64.90}_{\pm\text{2.94}}$ \\
		& STTC  & $\underline{\text{57.42}}_{\pm\text{4.24}}$ & $\text{53.51}_{\pm\text{4.89}}$ & $\text{56.05}_{\pm\text{2.10}}$ & $\textbf{59.85}_{\pm\text{3.64}}$ & $\text{56.12}_{\pm\text{4.83}}$ & $\text{54.74}_{\pm\text{5.53}}$ & $\underline{\text{56.48}}_{\pm\text{3.14}}$ & $\textbf{59.59}_{\pm\text{4.11}}$ \\
		& CD    & $\text{62.89}_{\pm\text{4.06}}$ & $\text{51.84}_{\pm\text{7.04}}$ & $\textbf{67.38}_{\pm\text{1.26}}$ & $\underline{\text{64.37}}_{\pm\text{7.14}}$ & $\underline{\text{62.85}}_{\pm\text{3.62}}$ & $\text{51.34}_{\pm\text{7.28}}$ & $\textbf{64.86}_{\pm\text{0.75}}$ & $\text{61.92}_{\pm\text{5.25}}$ \\
		& HYP   & $\text{64.64}_{\pm\text{3.50}}$ & $\text{54.16}_{\pm\text{7.71}}$ & $\underline{\text{67.59}}_{\pm\text{2.87}}$ & $\textbf{68.07}_{\pm\text{3.04}}$ & $\underline{\text{61.73}}_{\pm\text{2.79}}$ & $\text{52.03}_{\pm\text{6.23}}$ & $\text{58.63}_{\pm\text{2.38}}$ & $\textbf{64.91}_{\pm\text{3.62}}$ \\
		\cmidrule{2-10}          
		& Mean  & $\text{62.79}_{\pm\text{3.31}}$ & $\text{52.96}_{\pm\text{0.92}}$ & $\underline{\text{64.75}}_{\pm\text{5.03}}$ & $\textbf{65.42}_{\pm\text{3.70}}$ & $\underline{\text{61.15}}_{\pm\text{3.01}}$ & $\text{52.97}_{\pm\text{1.35}}$ & $\text{60.73}_{\pm\text{3.33}}$ & $\textbf{62.83}_{\pm\text{2.23}}$ \\
		\midrule
		\multirow{4}[2]{*}{TUSZ} 
		& FNSZ  & $\underline{\text{73.92}}_{\pm\text{1.48}}$ & $\text{57.14}_{\pm\text{7.39}}$ & $\text{73.02}_{\pm\text{0.89}}$ & $\textbf{79.54}_{\pm\text{1.40}}$ & $\underline{\text{64.46}}_{\pm\text{4.57}}$ & $\text{57.17}_{\pm\text{10.99}}$ & $\text{53.76}_{\pm\text{3.60}}$ & $\textbf{71.90}_{\pm\text{5.44}}$ \\
		& GNSZ  & $\text{62.91}_{\pm\text{3.35}}$ & $\text{53.48}_{\pm\text{11.11}}$ & $\underline{\text{84.49}}_{\pm\text{0.75}}$ & $\textbf{85.95}_{\pm\text{2.23}}$ & $\underline{\text{68.33}}_{\pm\text{1.74}}$ & $\text{55.67}_{\pm\text{7.67}}$ & $\text{47.01}_{\pm\text{2.20}}$ & $\textbf{69.42}_{\pm\text{2.77}}$ \\
		& CPSZ  & $\text{60.45}_{\pm\text{5.08}}$ & $\text{58.51}_{\pm\text{10.26}}$ & $\underline{\text{84.66}}_{\pm\text{1.92}}$ & $\textbf{89.57}_{\pm\text{3.12}}$ & $\underline{\text{70.95}}_{\pm\text{2.16}}$ & $\text{55.71}_{\pm\text{9.04}}$ & $\text{52.89}_{\pm\text{1.72}}$ & $\textbf{74.52}_{\pm\text{3.61}}$ \\
		\cmidrule{2-10}          
		& Mean  & $\text{65.76}_{\pm\text{5.86}}$ & $\text{56.38}_{\pm\text{2.12}}$ & $\underline{\text{80.72}}_{\pm\text{5.45}}$ & $\textbf{85.02}_{\pm\text{4.15}}$ & $\underline{\text{67.91}}_{\pm\text{2.67}}$ & $\text{56.18}_{\pm\text{0.70}}$ & $\text{51.22}_{\pm\text{3.00}}$ & $\textbf{71.95}_{\pm\text{2.08}}$ \\
		\bottomrule
	\end{tabular}%
	}
	\label{tab:vs}%
\end{table*}%
\begin{figure}[!h]
	\centering 
	\includegraphics[width=0.8\textwidth]{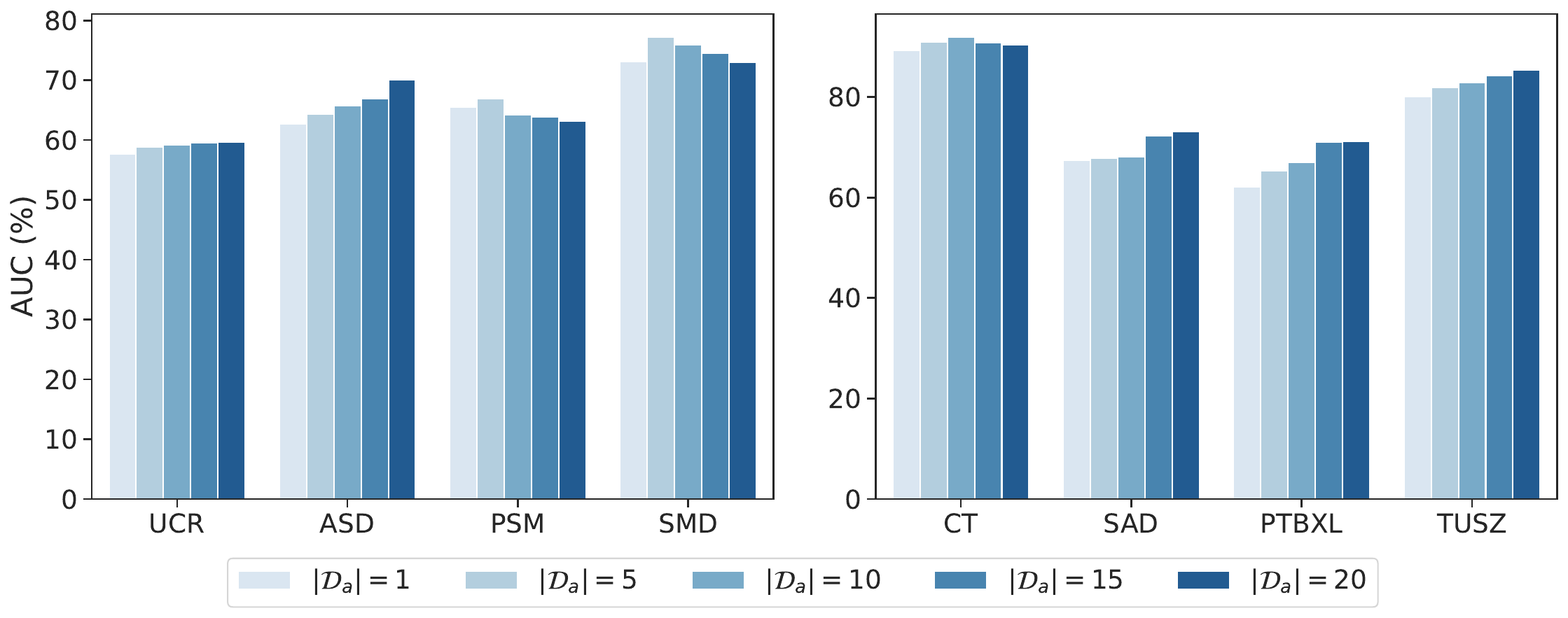}
	\caption{AUC performance of IMPACT w.r.t different number of labeled anomalies under the general setting.}
	\label{fig:fig_general_utility}
\end{figure}
\subsection{Utility of Few-Shot Labeled Anomalies}
\label{app::utility}
This experiment investigates how effective our method is in utilizing the provided few-shot anomaly samples. We choose the number of labeled anomalies $|\mathcal{D}_{a}|$ from $\{1,5,10,15,20\}$ under both general and hard settings, and the results are presented in \cref{fig:fig_general_utility} and \cref{fig:fig_hard_utility} respectively. For general setting, the performance on most datasets improves as the number of labeled anomalies increases, demonstrating that our method effectively leverages the provided anomalies. However, datasets such as PSM and SMD exhibit performance degradation with additional anomaly samples, likely due to their relatively simple anomaly patterns, which induce model overfitting. This phenomenon similarly manifests in the hard setting, increasing available anomalies in CT and SAD impedes generalization to unseen anomalies, which suggests that excessive supervision on single anomaly class constrains the model's capacity to capture diverse unknown patterns. Conversely, for larger and more complex datasets like PTBXL and TUSZ, additional anomaly samples consistently enhance detection performance, indicating that richer supervised anomaly knowledge in complex scenarios facilitates more robust and generalizable anomaly detection.
\begin{figure}[t]
	\centering 
	\includegraphics[width=0.8\textwidth]{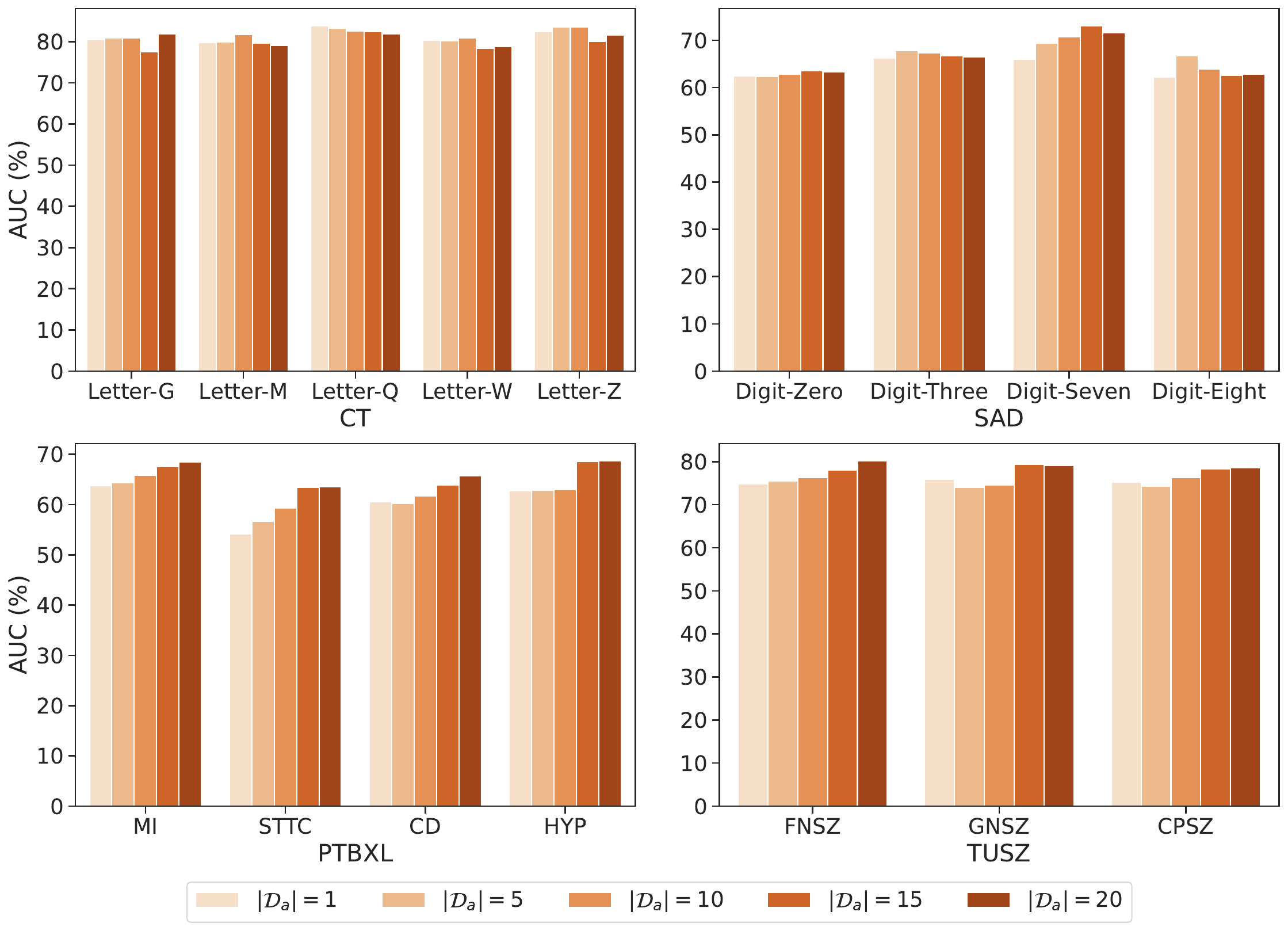}
	\caption{AUC performance of IMPACT w.r.t different number of labeled anomalies under the hard setting.}
	\label{fig:fig_hard_utility}
\end{figure}
\begin{figure}[!h]
	\centering 
	\includegraphics[width=0.9\textwidth]{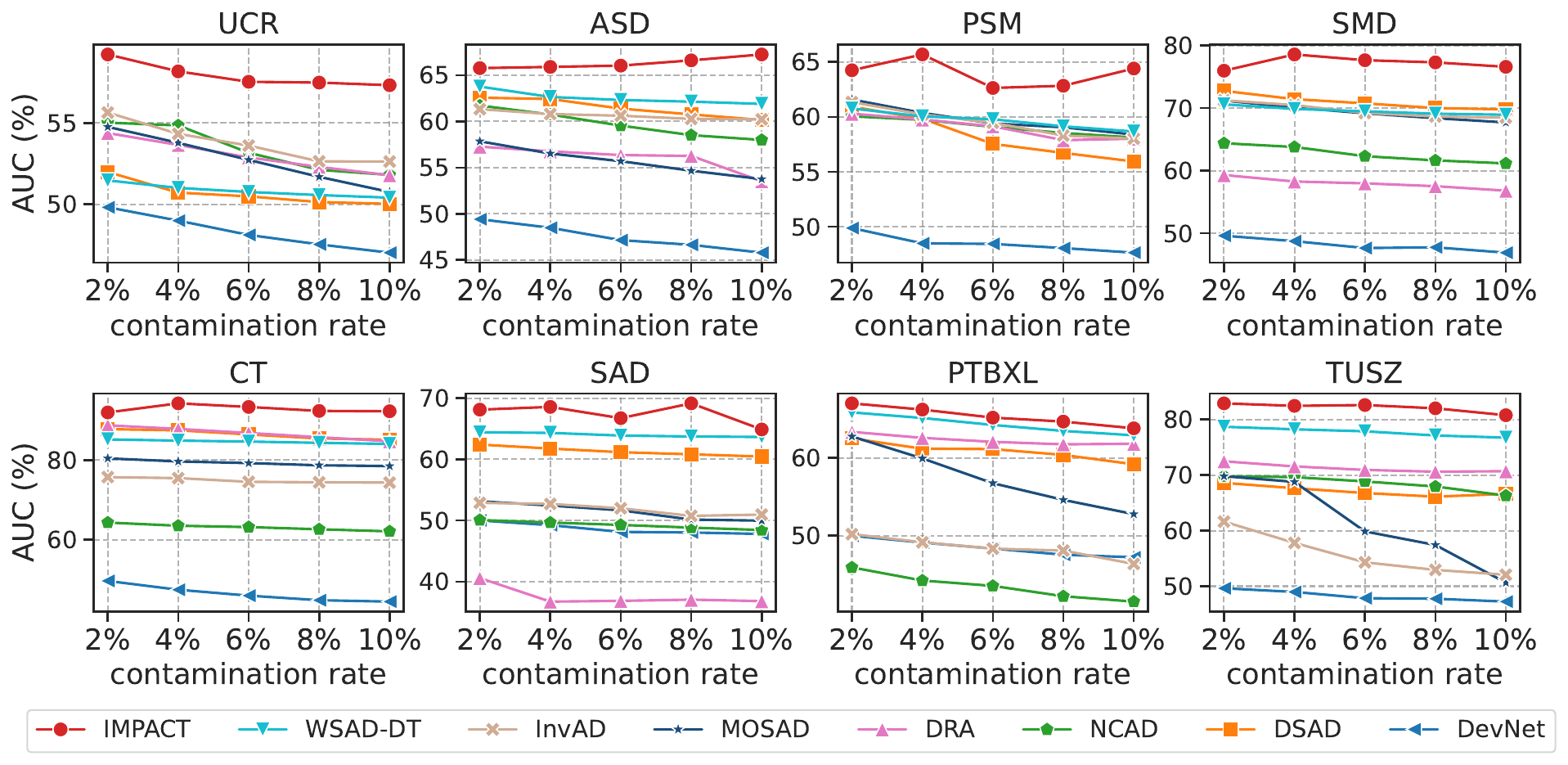}
	\caption{AUC performance of IMPACT and its competing baselines w.r.t different contamination rates under the general setting.}
	\label{fig:fig_contamination}
\end{figure}
\subsection{Effectiveness of Handling Anomaly Contamination}
\label{app::robustness}
Existing open-set and supervised anomaly detection methods mainly focus on how to leverage provided abnormal knowledge, while simply treating the remaining unlabeled training data as normal data. However, in real-world scenarios, training data may be inadvertently contaminated by potential anomalies, which mislead the model during learning. Therefore, we examine the effectiveness of handling anomaly contamination under different contamination rates. A small portion of the test anomalies is selected to contaminate the training set, with the contamination rate ranging from 2\% to 10\%.

\cref{fig:fig_contamination} depicts the AUC performance of eight anomaly detection methods under general setting. IMPACT demonstrates consistent advantages in various scenarios. The other competing baselines essentially treat the unlabeled data as the normal class in a binary classification task, thereby identifying anomalies through classification during the testing phase. However, when the training data is contaminated, the model learns biased patterns that incorrectly identify test anomalies resembling the contaminating samples as normal, leading to elevated false negative rates. As anticipated, the detection performance of these methods deteriorates progressively with increasing contamination rates.
In contrast, our approach employs the influence function to identify potential contaminated samples and relabels them as true anomalies, thereby converting detrimental contamination into beneficial anomaly knowledge. Experimental results confirm that our method maintains substantially more stable performance across all datasets. Notably, on certain datasets such as ASD and SMD, performance even improves as contamination rate increases, indicating that the repurposing of contaminated samples into supervised anomaly examples effectively enhances the model's capacity for abnormality learning.
\begin{figure}[t]
	\centering 
	\includegraphics[width=0.8\textwidth]{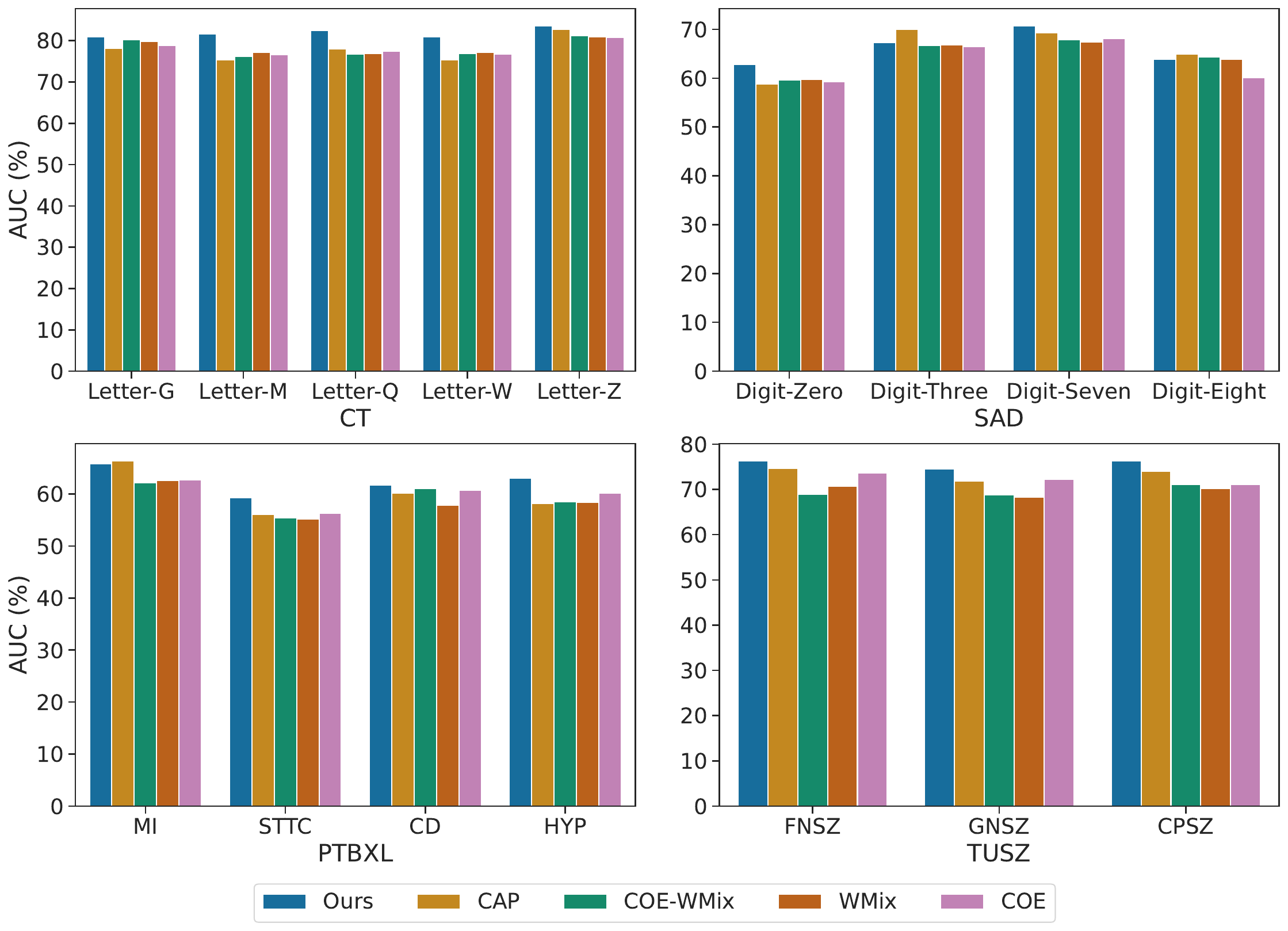}
	\caption{AUC results w.r.t. various methods to create pseudo anomalies under the hard setting.}
	\label{fig:pseudo_compare}
\end{figure}
\begin{figure}[t]
	\centering 
	\includegraphics[width=0.8\textwidth]{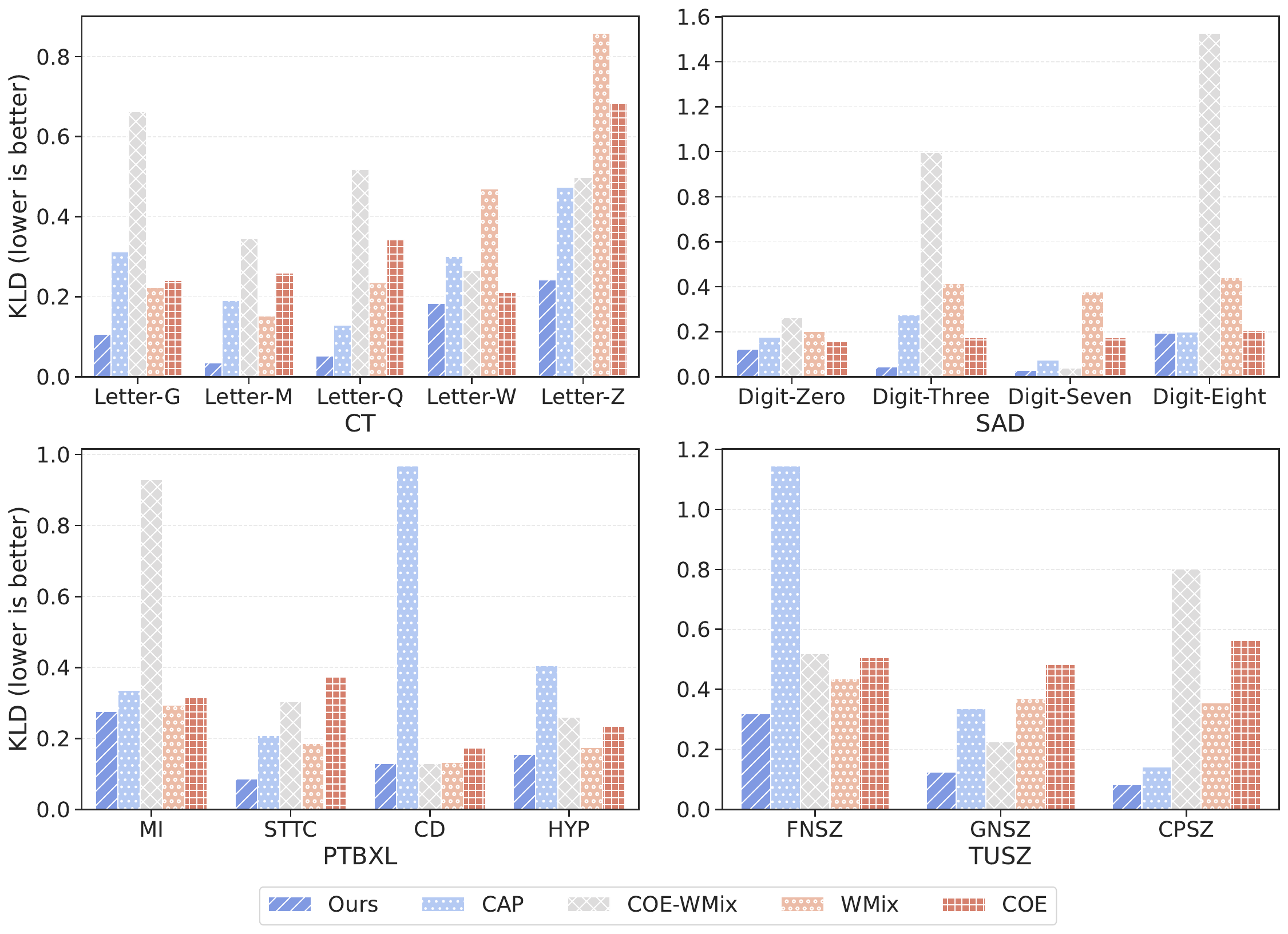}
	\caption{Kullback-Leibler Divergence (KLD) of anomaly features from various datasets under the hard setting. $KLD({\Phi}_{unseen} || {\Phi}_{pseudo})$ is given, which implies distribution discrepancies of unseen anomaly features and the generated pseudo anomaly features.}
	\label{fig:pseudo_compare_kld}
\end{figure}
\begin{figure}[ht]
	\centering 
	\includegraphics[width=0.8\textwidth]{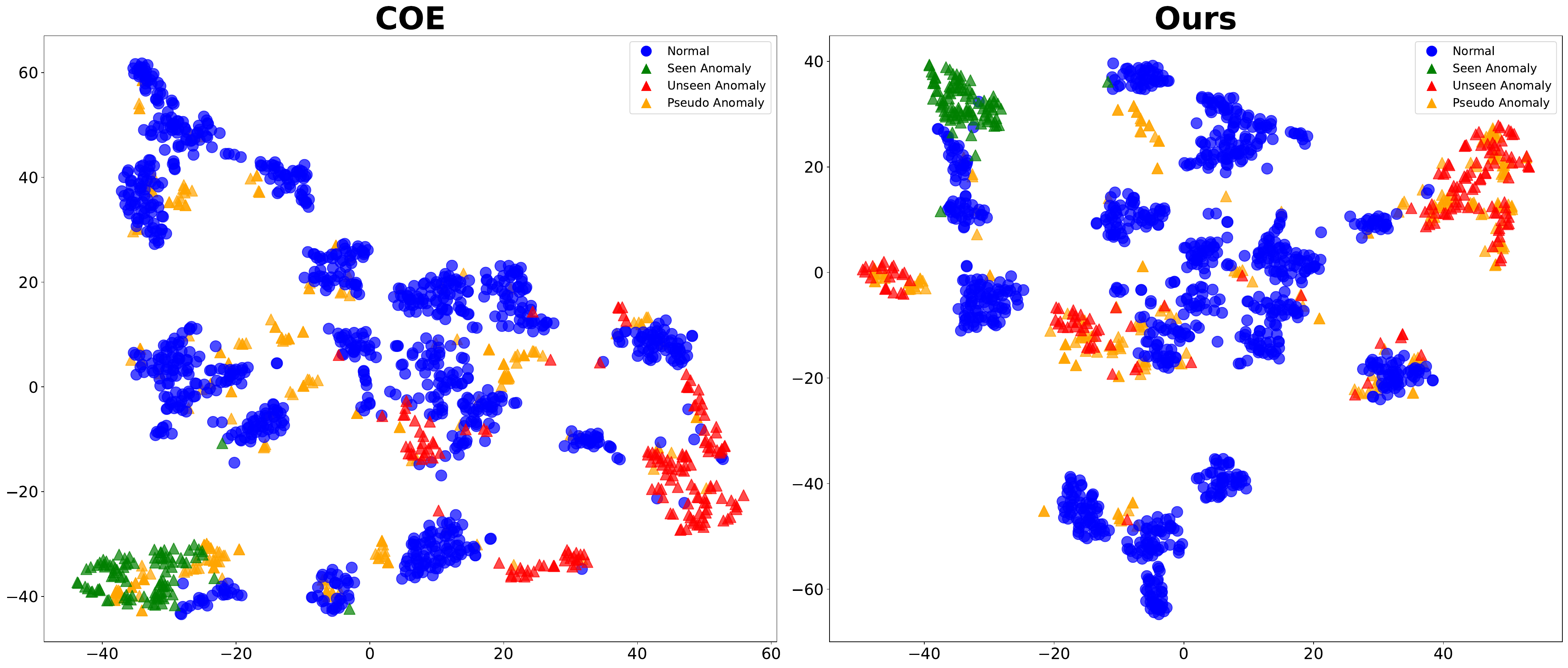}
	\caption{Feature t-SNE visualization. The features are extracted under the condition that Letter-G serves as the seen anomaly class in the CT dataset.}
	\label{fig:tsne_compare}
\end{figure}
\begin{figure}[!h]
	\centering 
	\includegraphics[width=0.95\textwidth]{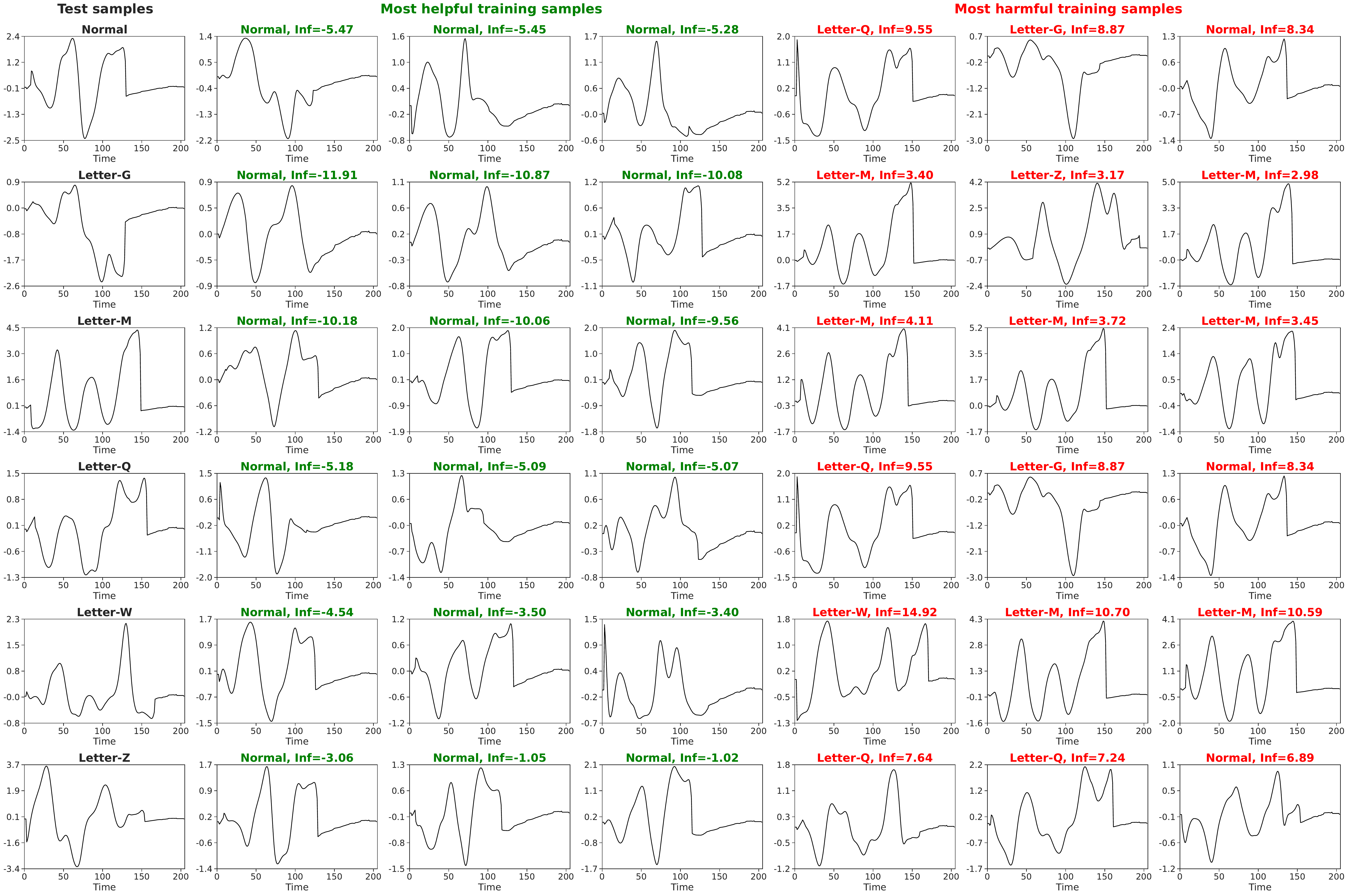}
	\caption{Identified helpful and harmful training samples from CT. For the test samples from normal class and five distinct anomaly classes, each corresponds to three helpful training samples with the minimum influence estimates and three harmful training samples with the highest influence estimates.}
	\label{fig:visual_samples}
\end{figure}
\subsection{Effect of Anomaly Generation Methods}
\label{app::pseudo}
Current methods usually generate pseudo anomalies during training to simulate possible classes of unseen anomalies, aiming to generalize to unknown anomalies in the test phase. To validate that our risk-reduction-based anomaly generation more effectively facilitates generalization to unseen anomalies, we further evaluate four alternative anomaly augmentation techniques, including Contextual Outlier Exposure (COE) \cite{carmona2022neural}, Window Mixup (WMix) \cite{carmona2022neural}, COE-WMix that utilizes both COE and WMix, and CutAddPaste (CAP) \cite{wang2024cutaddpaste}. 

Under the hard setting, we substitute each of the above-mentioned techniques into our framework while keeping other components unchanged, and the results are shown in \cref{fig:pseudo_compare}. It is obvious that our methods achieve more stable and much better performance than other anomaly augmentation methods. 
This advantage stems from a critical limitation of existing augmentation methods: they cannot guarantee the quality of generated pseudo anomalies.
Low-quality pseudo anomalies exhibit limited coverage of real unseen anomaly distributions and may provide misleading information far from actual anomalies, potentially causing models to overfit to incorrect abnormal knowledge.
In contrast, our feature perturbation approach directly modifies embedded features, theoretically ensuring that generated samples diverge meaningfully from known patterns while remaining within plausible anomaly manifolds.

We further quantify the quality of generated anomalies through distributional analysis. As illustrated in Figure \ref{fig:pseudo_compare_kld}, our method achieves the lowest Kullback-Leibler Divergence (KLD) values across all datasets, indicating that our generated pseudo anomalies exhibit the closest distributional alignment with real unseen anomalies. Specifically, alternative methods (CAP, COE-WMix, WMix, and COE) produce most KLD values ranging from 0.15 to 1.53 across different datasets, and our approach maintains consistently lower values between 0.03 and 0.32, representing an average reduction of 64.13\% in distributional discrepancy.
This distributional superiority directly corresponds to previously demonstrated improvement in model performance. The lower KLD values correspond to better coverage of the unseen anomaly manifold, ensuring that models trained with our generated anomalies develop more accurate decision boundaries. Traditional augmentation methods, particularly those relying on simple feature mixing or interpolation (\eg, WMix and COE-WMix), often generate anomalies that cluster in limited regions of the feature space, failing to capture the full diversity of real anomalous patterns. Consequently, models trained on such limited distributions exhibit reduced generalization capabilities when encountering unseen anomalies.

\begin{figure}[t]
	\centering 
	\includegraphics[width=0.95\textwidth]{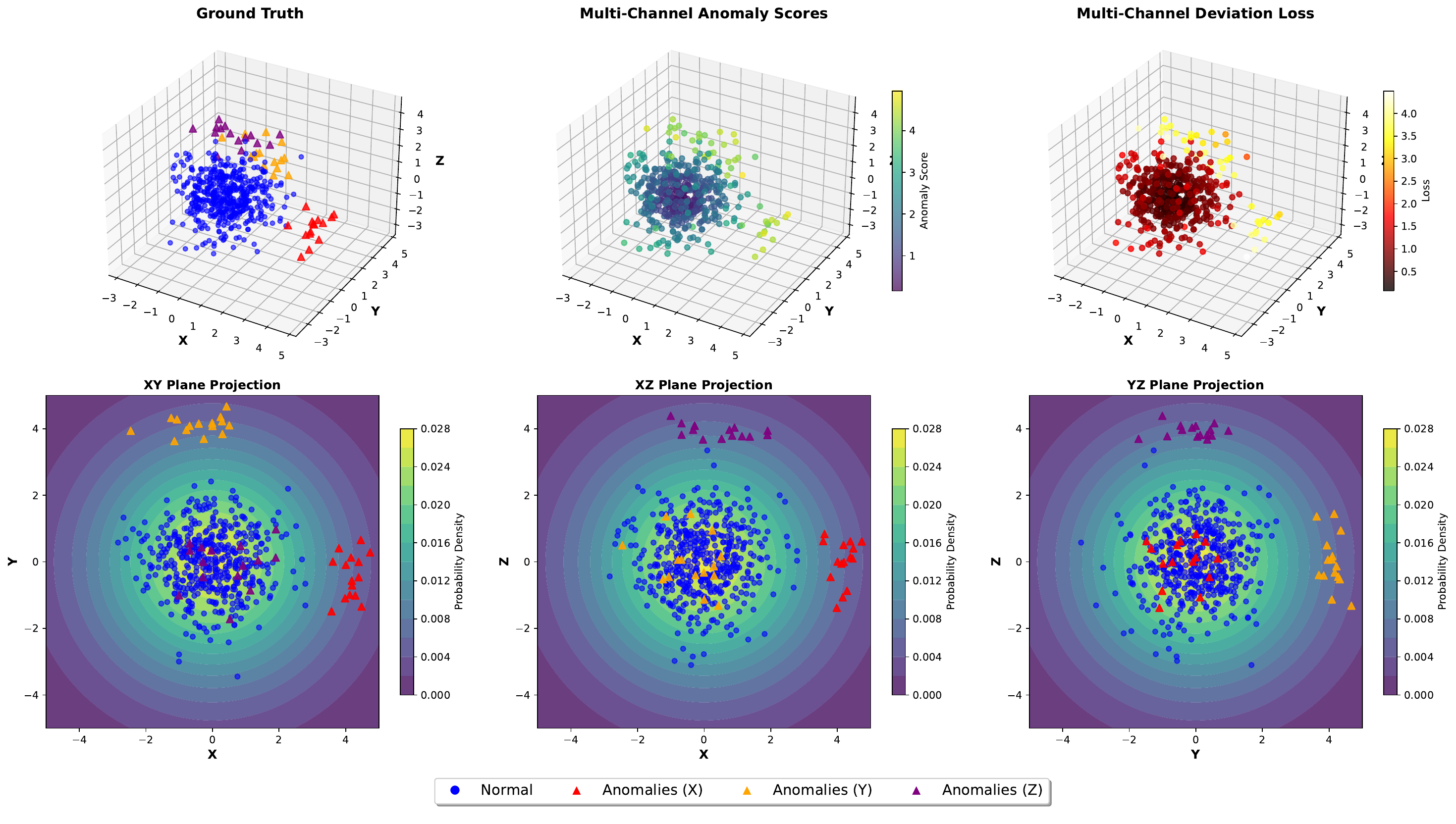}
	\caption{A qualitative example from the PTBXL dataset to demonstrate how multi-channel strategy helps to identify channel-specific anomalies that may be missed in single-channel analysis. The first row displays three-dimensional visualizations of ground truth data distribution, multi-channel anomaly scores, and multi-channel deviation losses. The second row presents different plane projections, which indicate that some anomalies are difficult to distinguish from a single projection, thereby highlighting the importance of multi-channel anomaly discrimination.}
	\label{fig:multi_channel}
\end{figure}
\begin{figure}[!h]
	\centering 
	\includegraphics[width=0.9\textwidth]{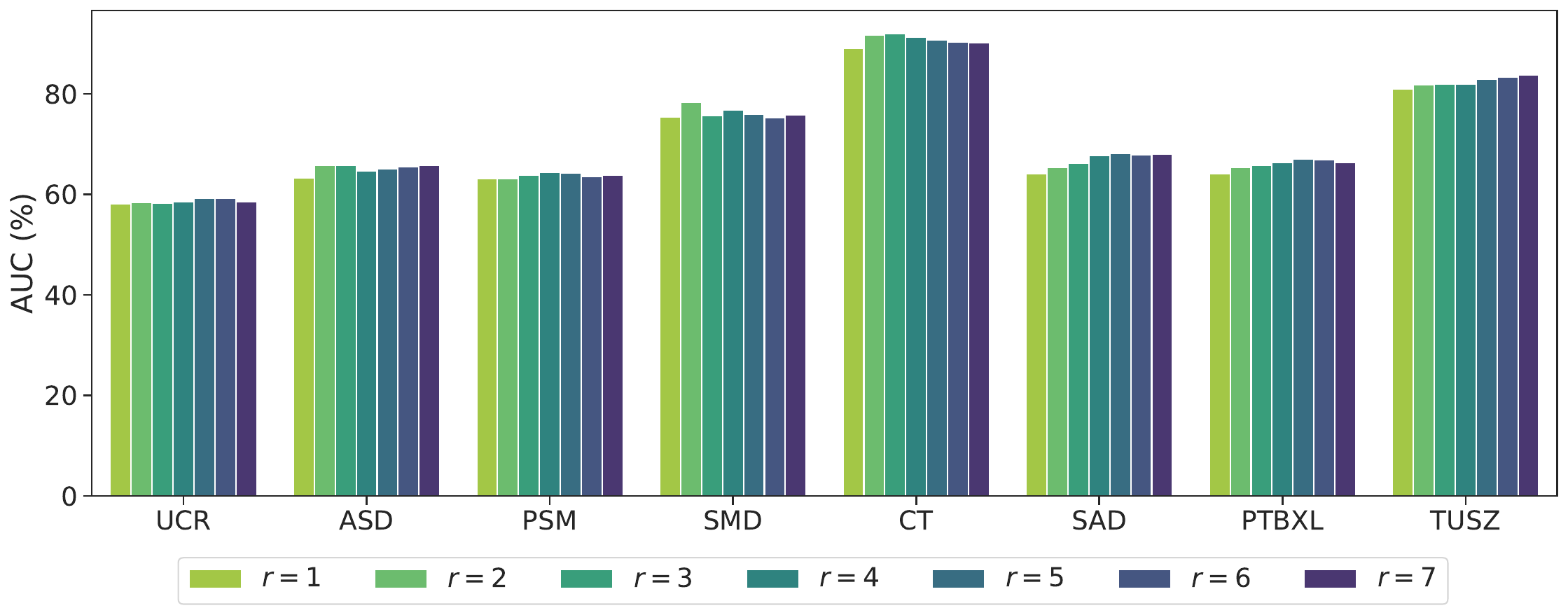}
	\caption{AUC performance of IMPACT w.r.t different number of channels under the general setting.}
	\label{fig:multi_channel_compare}
\end{figure}

Finally, we analyze the differences among the seen, unseen, and generated pseudo anomalies, and depict the feature t-SNE visualization in \cref{fig:tsne_compare}, where the features are extracted under the condition that Letter-G serves as the seen anomaly class in the CT dataset. It can be observed that the pseudo anomalies generated by COE predominantly cluster near seen anomalies, with some scattered around normal samples but showing minimal overlap with unseen anomalies. Conversely, the pseudo anomalies produced by our method exhibit substantial divergence from seen anomalies while aligning closely with the actual unseen anomaly distribution.
This visualization confirms that our feature perturbation strategy generates pseudo anomalies that better approximate the real unseen anomalies, thereby facilitating more effective generalization in challenging open-set scenarios.

\subsection{Visualization of Identified Helpful and Harmful Samples}
\label{app::identify}
We provide examples of helpful and harmful samples identified by influence functions to illustrate the effectiveness of establishing the referable normality and eliminating potential contamination. As shown in \cref{fig:visual_samples}, the visualization results reveal that the most helpful samples are typically normal samples exhibiting morphological divergence from the given test sample. The underlying rationale is that such diverse normal samples compel the model to learn from a broader spectrum of feature variations, thereby enhancing its discriminative capacity across heterogeneous normal patterns. Conversely, the most harmful samples are those morphologically similar to the test sample yet inherently abnormal (\ie, contaminated samples). These samples introduce conflicting supervisory signals that misguide the optimization trajectory, causing the model to converge toward biased decision boundaries and consequently degrading detection performance.
\subsection{Interpretation of Multi-Channel Strategy}
\label{app::inter}
Recall \cref{theorem1}, we extend the deviation loss \cite{pang2019deep} by proposing a multi-channel strategy and provide a theoretical explanation from the perspective of entropy minimization. In this experiment, we further empirically inspect how each individual channel contributes to the anomaly detection process. As shown in \cref{fig:multi_channel}, the qualitative results demonstrate that different channels capture distinct aspects of anomalies and some anomalies are strongly manifested in specific channels while barely detectable in others. For instance, anomalies primarily visible in the X-Y plane projection are nearly indistinguishable in the X-Z view, illustrating the complementary nature of multi-channel information. Our multi-channel strategy naturally assesses the abnormality degree of samples from multiple perspectives, thereby maintaining sensitivity to various types of anomalies. Meanwhile, we also conduct a further study on the impact of the number of channels on the effectiveness of anomaly detection. As illustrated in \cref{fig:multi_channel_compare}, multi-channel configurations yield substantial performance gains over single-channel setting. However, employing an excessive number of channels may increase model sensitivity to noise, leading to performance degradation. Based on our experimental observations, a configuration of 3–5 channels provides an optimal balance between information richness and noise robustness for most applications.

\begin{table*}[tbp]
	\centering
	\caption{Point-level anomaly detection results (Affiliation-F1/VUS-ROC $\pm$ standard deviation) of IMPACT and its competing methods under the unsupervised and open-set general settings. All results are in \%. The best ones are $\textbf{boldfaced}$ and the second best are $\underline{\text{underlined}}$.}
	\scalebox{0.95}{
		\begin{tabular}{cl|cccc|cccc}
			\toprule
			& \multicolumn{1}{l|}{\multirow{2}[1]{*}{\textbf{Methods}}} & \multicolumn{4}{c|}{\textbf{Affiliation-F1}}   & \multicolumn{4}{c}{\textbf{VUS-ROC}} \\
			\cmidrule{3-10}
			& & UCR   & ASD   & PSM   & SMD  & UCR   & ASD   & PSM   & SMD \\
			\midrule
			\multirow{7}[2]{*}{\rotatebox{90}{Unsupervised}}
			& TCN-AE & $\text{67.30}_{\pm\text{0.54}}$ & $\text{70.50}_{\pm\text{0.05}}$ & $\text{73.47}_{\pm\text{0.73}}$ & $\text{64.76}_{\pm\text{0.19}}$ & $\text{51.89}_{\pm\text{0.68}}$ & $\text{62.62}_{\pm\text{0.17}}$ & $\text{60.44}_{\pm\text{0.77}}$ &  $\text{64.17}_{\pm\text{0.06}}$ \\
			& THOC  & $\text{67.34}_{\pm\text{0.11}}$ & $\text{70.84}_{\pm\text{0.76}}$ & $\text{72.50}_{\pm\text{1.53}}$ & $\text{74.85}_{\pm\text{1.00}}$ & $\text{53.48}_{\pm\text{0.51}}$ & $\text{61.65}_{\pm\text{1.46}}$ & $\text{55.35}_{\pm\text{2.56}}$ &  $\text{64.42}_{\pm\text{0.77}}$ \\
			& TranAD & $\text{66.93}_{\pm\text{0.18}}$ & $\text{71.40}_{\pm\text{0.45}}$ & $\text{76.32}_{\pm\text{0.44}}$ & $\text{68.62}_{\pm\text{0.00}}$ & $\text{50.20}_{\pm\text{0.56}}$ & $\text{62.54}_{\pm\text{0.57}}$ & $\text{63.72}_{\pm\text{0.35}}$ &  $\text{68.25}_{\pm\text{0.17}}$ \\
			& DCdetector & $\text{66.67}_{\pm\text{0.00}}$ & $\text{69.78}_{\pm\text{0.00}}$ & $\text{70.77}_{\pm\text{0.09}}$ & $\text{68.43}_{\pm\text{0.11}}$ & $\text{52.04}_{\pm\text{1.86}}$ & $\text{54.85}_{\pm\text{0.17}}$ & $\text{52.21}_{\pm\text{0.14}}$ &  $\text{54.89}_{\pm\text{0.47}}$ \\
			& GPT4TS & $\text{67.92}_{\pm\text{0.30}}$ & $\text{71.50}_{\pm\text{0.63}}$ & $\text{72.78}_{\pm\text{0.41}}$ & $\text{70.43}_{\pm\text{0.21}}$ & $\underline{\text{61.09}}_{\pm\text{0.36}}$ & $\text{66.20}_{\pm\text{0.41}}$ & $\text{59.80}_{\pm\text{0.46}}$ &  $\text{72.71}_{\pm\text{0.30}}$ \\
			& COUTA  & $\text{66.99}_{\pm\text{0.46}}$ & $\text{70.41}_{\pm\text{1.25}}$ & $\text{78.71}_{\pm\text{0.23}}$ & $\text{66.34}_{\pm\text{0.03}}$ & $\text{55.55}_{\pm\text{1.51}}$ & $\text{65.96}_{\pm\text{1.14}}$ & $\text{67.02}_{\pm\text{4.32}}$ &  $\text{65.67}_{\pm\text{1.60}}$ \\
			& DADA  & $\text{66.67}_{\pm\text{0.00}}$ & $\text{69.79}_{\pm\text{0.00}}$ & $\textbf{86.93}_{\pm\text{3.88}}$ & $\text{67.82}_{\pm\text{0.00}}$ & $\text{54.69}_{\pm\text{0.03}}$ & $\text{60.96}_{\pm\text{0.03}}$ & $\underline{\text{73.12}}_{\pm\text{0.05}}$ &  $\text{67.60}_{\pm\text{0.00}}$ \\
			\midrule
			\multirow{8}[2]{*}{\rotatebox{90}{Open-Set}}
			& DevNet & $\text{66.55}_{\pm\text{0.00}}$ & $\text{70.38}_{\pm\text{0.21}}$ & $\text{33.09}_{\pm\text{2.55}}$ & $\text{68.45}_{\pm\text{1.52}}$ & $\text{48.72}_{\pm\text{2.58}}$ & $\text{52.01}_{\pm\text{2.89}}$ & $\text{48.64}_{\pm\text{2.35}}$ &  $\text{50.58}_{\pm\text{0.94}}$ \\
			& DSAD & $\text{67.04}_{\pm\text{0.24}}$ & $\underline{\text{76.05}}_{\pm\text{0.46}}$ & $\text{76.59}_{\pm\text{0.36}}$ & $\text{72.73}_{\pm\text{2.52}}$ & $\text{51.20}_{\pm\text{0.78}}$ & $\underline{\text{76.01}}_{\pm\text{0.36}}$ & $\text{63.47}_{\pm\text{1.67}}$ &  $\underline{\text{79.79}}_{\pm\text{1.19}}$ \\
			& NCAD & $\underline{\text{68.85}}_{\pm\text{0.16}}$ & $\text{74.96}_{\pm\text{0.66}}$ & $\text{79.56}_{\pm\text{1.32}}$ & $\underline{\text{80.03}}_{\pm\text{0.75}}$ & $\text{60.01}_{\pm\text{0.35}}$ & $\text{70.64}_{\pm\text{1.04}}$ & $\text{61.41}_{\pm\text{1.42}}$ &  $\text{68.80}_{\pm\text{0.61}}$ \\
			& DRA  & $\text{67.35}_{\pm\text{0.36}}$ & $\text{71.13}_{\pm\text{0.86}}$ & $\text{73.09}_{\pm\text{1.44}}$ & $\text{70.14}_{\pm\text{1.36}}$ & $\text{59.64}_{\pm\text{0.55}}$ & $\text{58.84}_{\pm\text{0.07}}$ & $\text{52.40}_{\pm\text{0.33}}$ &  $\text{55.90}_{\pm\text{0.77}}$ \\
			& MOSAD  & $\text{67.24}_{\pm\text{0.02}}$ & $\text{71.77}_{\pm\text{0.07}}$ & $\text{73.17}_{\pm\text{0.12}}$ & $\text{69.26}_{\pm\text{0.07}}$ & $\text{55.28}_{\pm\text{0.02}}$ & $\text{62.28}_{\pm\text{0.08}}$ & $\text{57.45}_{\pm\text{0.20}}$ & $\text{68.93}_{\pm\text{0.54}}$ \\
			& InvAD  & $\text{67.06}_{\pm\text{0.02}}$ & $\text{70.94}_{\pm\text{0.25}}$ & $\text{72.97}_{\pm\text{0.64}}$ & $\text{71.79}_{\pm\text{0.32}}$ & $\text{58.18}_{\pm\text{0.12}}$ & $\text{61.25}_{\pm\text{1.49}}$ & $\text{54.07}_{\pm\text{0.82}}$ &  $\text{62.00}_{\pm\text{1.58}}$ \\
			& WSAD-DT & $\text{68.60}_{\pm\text{0.53}}$ & $\text{73.57}_{\pm\text{0.98}}$ & $\text{72.58}_{\pm\text{0.37}}$ & $\text{71.51}_{\pm\text{0.74}}$ & $\text{50.04}_{\pm\text{0.94}}$ & $\text{68.74}_{\pm\text{1.12}}$ & $\text{53.79}_{\pm\text{1.01}}$ &  $\text{69.57}_{\pm\text{0.93}}$ \\
			\cmidrule(lr{0pt}){2-10}
			& IMPACT  & $\textbf{72.02}_{\pm\text{0.24}}$ & $\textbf{77.12}_{\pm\text{0.18}}$ & $\underline{\text{86.52}}_{\pm\text{0.63}}$ & $\textbf{81.22}_{\pm\text{0.21}}$ & $\textbf{64.53}_{\pm\text{0.01}}$ & $\textbf{78.35}_{\pm\text{0.94}}$ & $\textbf{76.52}_{\pm\text{0.84}}$ &  $\textbf{81.37}_{\pm\text{0.29}}$ \\
			\bottomrule
		\end{tabular}%
	}
	\label{tab:vus}%
\end{table*}%
\begin{figure}[ht]
	\centering 
	\includegraphics[width=0.83\textwidth]{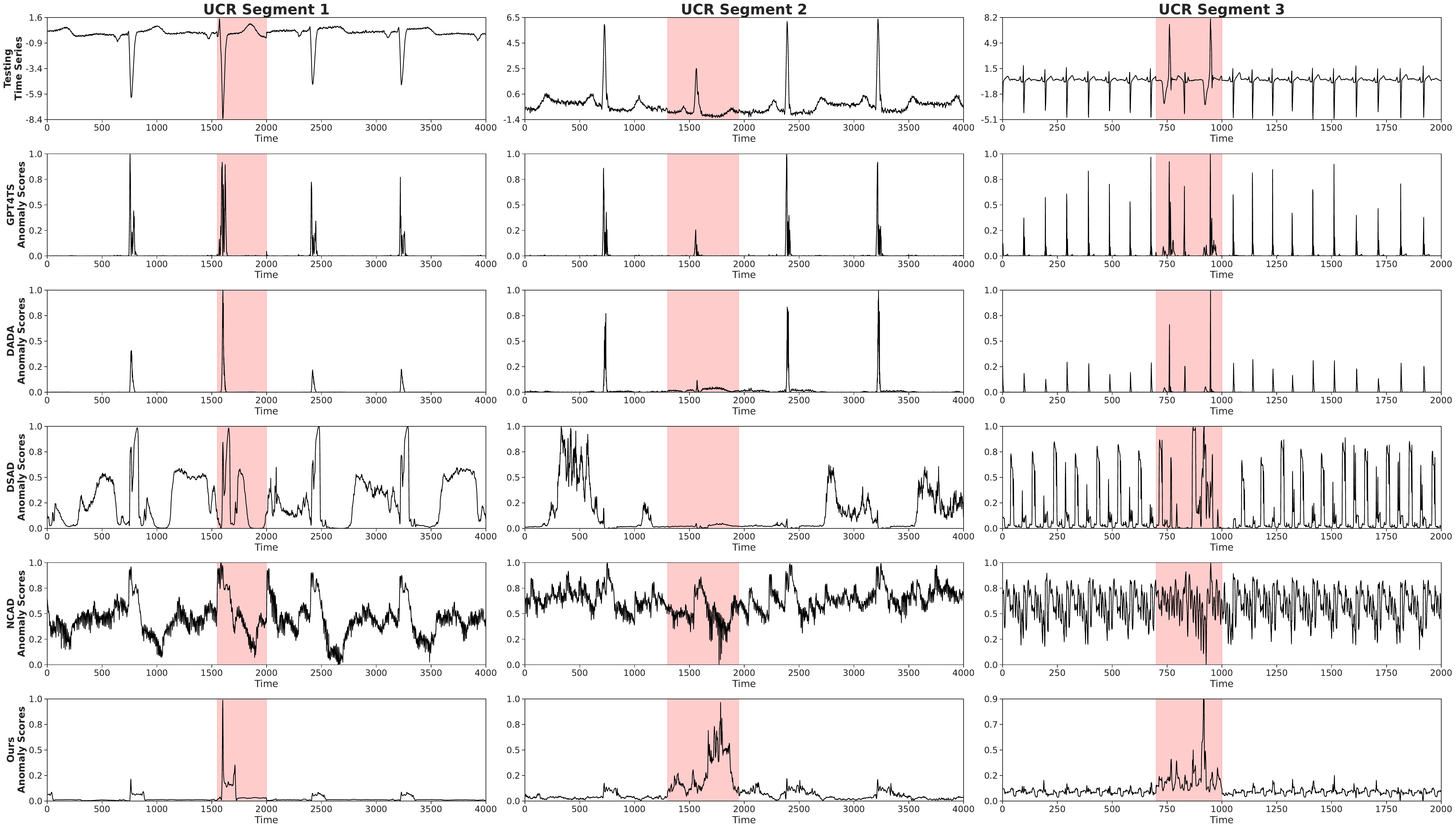}
	\caption{Visualization of point-level anomaly scores in comparison with SOTA baselines on the UCR dataset. The first row displays the testing time series, while the remaining rows show the point-level anomaly scores generated by each method. Red regions highlight the ground truth anomaly intervals.}
	\label{fig:visual_uni}
\end{figure}
\begin{figure}[!h]
	\centering 
	\includegraphics[width=0.83\textwidth]{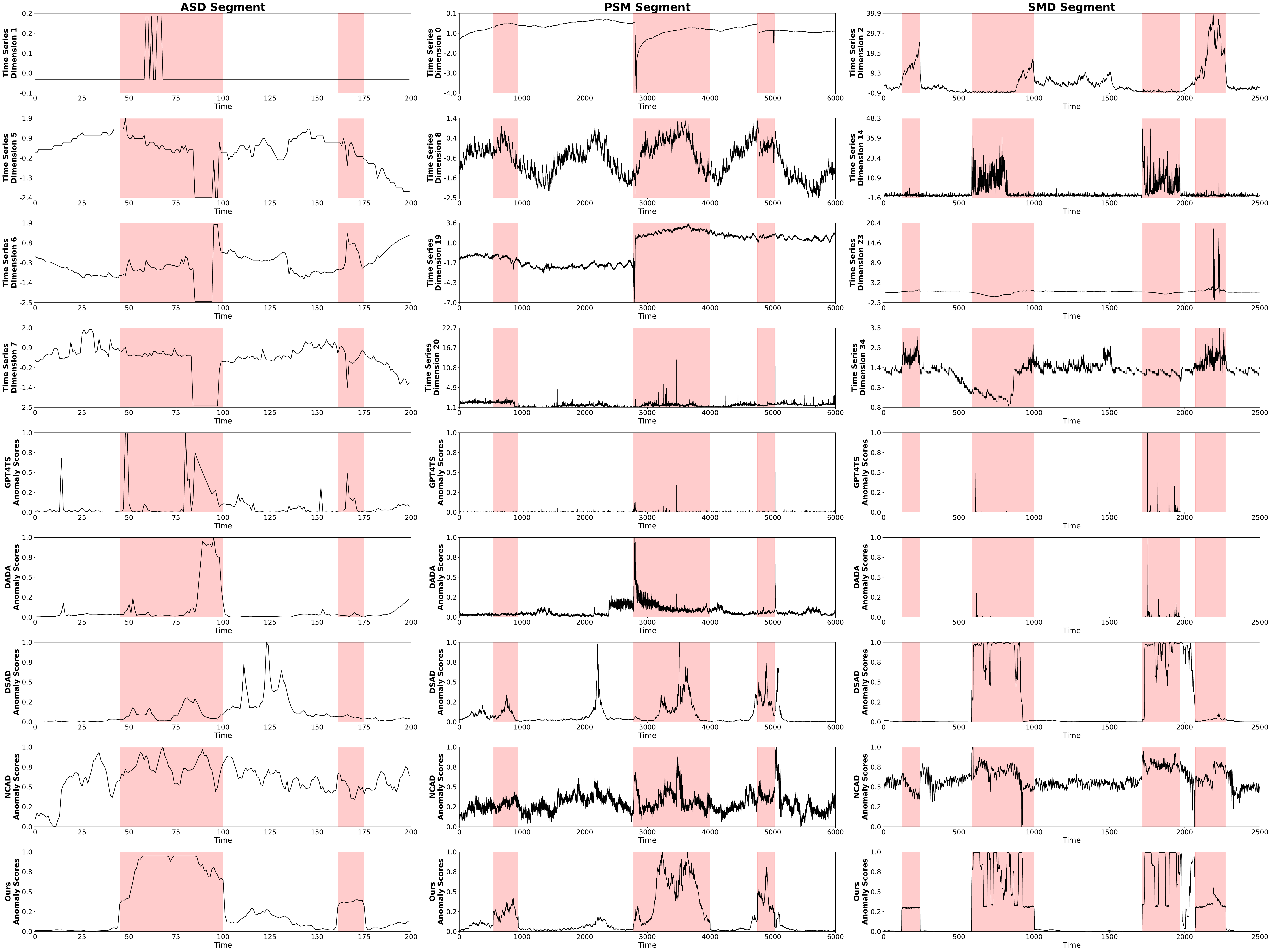}
	\caption{Visualization of point-level anomaly scores in comparison with SOTA baselines on the multivariate datasets with several selected dimensions. The first four rows display the selected dimensions of the testing time series, which represent the major anomaly patterns in the data. The last three rows show the point-level anomaly scores generated by each method. Red regions highlight the ground truth anomaly intervals.}
	\label{fig:visual_multi}
\end{figure}
\subsection{Point-Level Anomaly Detection}
\label{app::scores}
Open-set time series anomaly detection treats each time interval as an individual sample and provides a few time interval-level anomaly labels, since acquiring point-level labels is laborious and prone to inaccuracies in practice.
When point-level anomaly detection is required, we adopt the standard practice \cite{xu2024calibrated,shentu2025towards} of employing a sliding window with a step size of 1 to partition the test sequence into consecutive intervals. The anomaly score computed for each interval is then assigned to its final timestep, thereby generating pointwise anomaly predictions while maintaining temporal continuity. We adopt the widely used Affiliation-F1 score \cite{huet2022local} and VUS-ROC \cite{boniol2025vus} to evaluate the point-level detection performance. As shown in \cref{tab:vus}, IMPACT consistently outperforms both unsupervised and open-set baselines across multiple datasets. Specifically, IMPACT achieves the best Affiliation-F1 scores on three out of four datasets (UCR, ASD, and SMD) and the second best on PSM. In terms of VUS-ROC, IMPACT obtains the highest scores on all four datasets. Notably, on the UCR dataset, IMPACT improves the VUS-ROC by a significant margin (from 61.09\% by GPT4TS to 64.53\% by ours).
The consistent performance gains across different datasets and evaluation metrics validate the proposed framework's ability to achieve more accurate and robust point-level anomaly detection. In addition, \cref{fig:visual_uni} and \cref{fig:visual_multi} further illustrate the point-level anomaly detection results on univariate and multivariate time series data, respectively. Our method demonstrates superior capability in generating more precise pointwise anomaly scores for both univariate and multivariate data within open-set environments.

\begin{figure}[t]
	\centering 
	\includegraphics[width=0.75\textwidth]{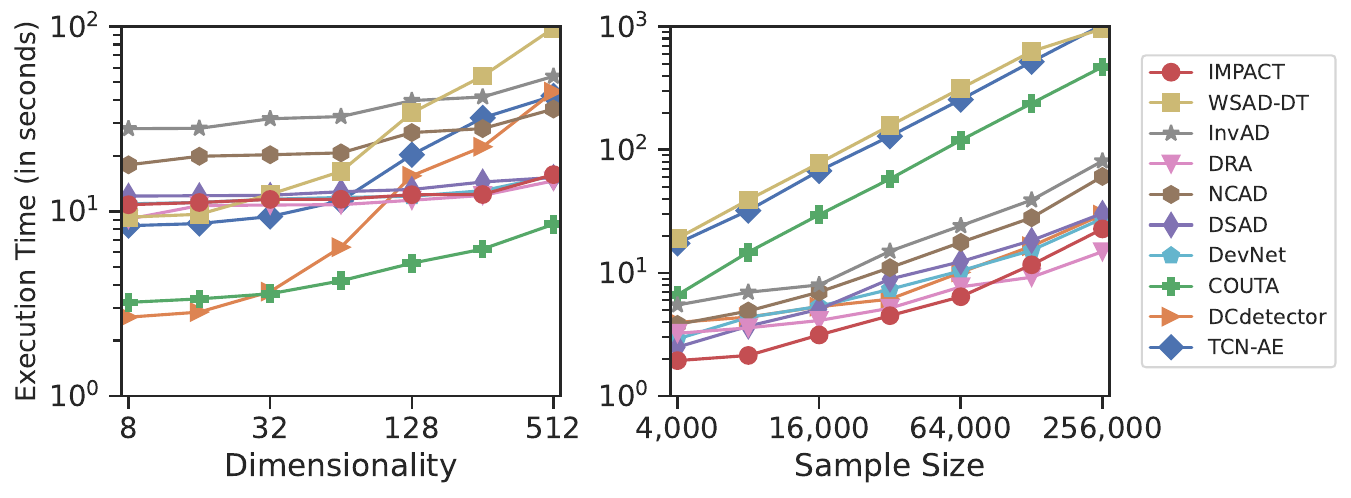}
	\caption{Computational efficiency results across different dimensionality and sample sizes.}
	\label{fig:fig_scalability}
\end{figure}

\subsection{Complexity Analysis}
\label{app::complexity}
According to \cite{koh2017understanding}, the time complexity of calculating influence function on all training samples is $O(MP)$, where $M$ is the size of normal subset and $P$ stands for the size of the spatial set formed by the model parameters $\theta$. Note that the time complexity of selecting, flipping, and perturbing one sample is $O(1)$. We can effectively conduct our method in $O(MP)$ time, and further measure the execution time to empirically validate the computational efficiency of IMPACT.

We conduct scalability experiments using two groups of synthetic datasets. The first group maintains a fixed sample size of 2,000 while varying dimensionality across $\{\text{8, 16, 32, 64, 128, 256, 512}\}$. The second group holds dimensionality constant at 8 while exponentially increasing sequence length from 4,000 to 256,000. For fair comparison, all deep learning baselines are implemented consistently within the PyTorch framework.
As shown in \cref{fig:fig_scalability}, IMPACT demonstrates superior computational efficiency compared to other deep learning baselines across both experimental settings. When dimensionality increases to 512 with fixed sample size, IMPACT exhibits only modest execution time growth, confirming its effectiveness in high-dimensional scenarios. With exponentially increasing sample sizes up to 256,000 at fixed dimensionality, IMPACT maintains approximately linear time complexity.
	
\section{Future Directions}
\label{app:future}
We propose a novel influence-based framework IMPACT which demonstrates significant improvements in robustness and generalization for open-set time series anomaly detection. This section notes four future directions that are worth exploring: (i) IMPACT can be extended to handle multi-modal data by designing appropriate strategies for influence modeling and feature perturbation, as anomalies may manifest across logs, images and videos in industrial and healthcare applications. (ii) IMPACT can be generalized to different learning tasks like pre-training representation models on large-scale time series data. (iii) IMPACT can be adopted to an online learning setting, where the influence of individual samples is updated incrementally to handle concept drift without full retraining. This would enable continuous adaptation to evolving normal and abnormal patterns in real-world dynamic systems. (iv) IMPACT may misidentify rare normal subpopulations that have no semantic overlap with other normal samples as anomalies, representing a group-robustness failure mode. Incorporating explicit subpopulation discovery or group-aware validation set construction prior to influence estimation could further strengthen group robustness.

\end{document}